\documentclass[twoside]{article}

\usepackage[accepted]{aistats2021}

\usepackage[round]{natbib}

\bibliographystyle{apalike}

\usepackage{url}
\usepackage{subcaption}

\usepackage{graphicx}
\usepackage{hyperref}%
\hypersetup{colorlinks=true, linkcolor=magenta, citecolor=blue}
\usepackage{algorithm, algorithmic}
\usepackage{amsmath,amsthm,amssymb}
\usepackage{bm}
\usepackage{color}

\newcommand{\refeq}[1]{\eqref{eq:#1}}%
\newcommand{\reffig}[1]{Figure~\ref{fig:#1}}

\newcommand{\refapp}[1]{Appendix~\ref{sec:#1}}

\newcommand{\refthm}[1]{Theorem~\ref{thm:#1}}

\newtheorem{thm}{Theorem}[section]
\newtheorem{lem}[thm]{Lemma}
\newtheorem{prop}[thm]{Proposition}
\newtheorem{dfn}[thm]{Definition}
\newtheorem{asm}[thm]{Assumption}
\newtheorem{cor}[thm]{Corollary}

\theoremstyle{definition}
\newtheorem{rmk}[thm]{Remark}

\newcommand{\RR}{\mathbb{R}}
\newcommand{\NN}{\mathbb{N}}
\newcommand{\TT}{\mathbb{T}}
\newcommand{\ZZ}{\mathbb{Z}}

\newcommand{\II}{\mathbb{I}}
\newcommand{\mcT}{\mathcal{T}}

\newcommand{\EE}{\mathbb{E}}
\newcommand{\dd}{\mathrm{d}}

\newcommand{\argmin}{\mathop{\rm arg~min}\limits}

\newcommand{\Fx}{L^2(\RR^m)}%
\newcommand{\Fp}{L^2(P)}%
\newcommand{\Git}{L^2(\mu_A)}%
\newcommand{\Grt}{L^2(\RR^m \times \TT)}%
\newcommand{\Gd}{L^2(\lambda_d)}%
\newcommand{\Sa}{S_{\mu_A}}%
\newcommand{\Sd}{S_d}%
\newcommand{\gstar}{\gamma^*}%
\newcommand{\gd}{\gamma_d}%
\newcommand{\loss}{J}%

\renewcommand{\aa}{{\bm{a}}}

\newcommand{\nn}{{\bm{n}}}
\newcommand{\xx}{{\bm{x}}}
\newcommand{\yy}{{\bm{y}}}
\newcommand{\zz}{{\bm{z}}}
\newcommand{\ww}{{\bm{w}}}
\newcommand{\xxi}{{\bm{\xi}}}

\newcommand{\iprod}[1]{\langle#1\rangle}

\newcommand{\proj}{\mathsf{Proj}}
\newcommand{\ind}{\chi}

\newcommand{\wto}{\xrightarrow{w.}}

\newcommand{\eps}{\varepsilon}

\newcommand{\sign}{\mathrm{sign}}
\newcommand{\relu}{\mathrm{relu}}

\begin{document}

\twocolumn[

\aistatstitle{Ridge Regression with Over-Parametrized Two-Layer Networks Converge to Ridgelet Spectrum}
\aistatsauthor{ Sho Sonoda \And Isao Ishikawa \And  Masahiro Ikeda }
\aistatsaddress{ RIKEN AIP \And  Ehime University \& RIKEN AIP \And RIKEN AIP } ]

\begin{abstract}
Characterization of local minima draws much attention in theoretical studies of deep learning. In this study, we investigate the distribution of parameters in an over-parametrized finite neural network trained by ridge regularized empirical square risk minimization (RERM). We develop a new theory of ridgelet transform, a wavelet-like integral transform that provides a powerful and general framework for the theoretical study of neural networks involving not only the ReLU but general activation functions. We show that the distribution of the parameters converges to a spectrum of the ridgelet transform. This result provides a new insight into the characterization of the local minima of neural networks, and the theoretical background of an inductive bias theory based on lazy regimes. We confirm the visual resemblance between the parameter distribution trained by SGD, and the ridgelet spectrum calculated by numerical integration through numerical experiments with finite models.

\end{abstract}

\section{INTRODUCTION}

Characterizing local minima is important in theoretical studies of neural networks.
Despite the high-dimensionality of parameters, neural networks have become state-of-the-art in many application areas since the emergence of AlexNet \citep{Krizhevsky2012}. This has been a mystery of machine learning theory because several VC-based arguments have shown that the generalization error is upper bounded by the dimension of parameters, or the capacity of the hypothesis class \citep{Neyshabur2015,Bartlett2017}, but as \citet{Arora2018a} pointed out, these bounds are not tight in practice.
As \citet{Zhang2017} suggested, many researchers now consider that the typical solutions obtained via deep learning are concentrated in a much smaller class than expected from the algebraic dimension of parameters or any other data-independent capacities.

However, characterizing local minima is a challenging problem due to the nonlinearity of parameters and the non-convexity of learning problems.
To tackle this problem, the \emph{over-parametrization} is considered to be one of the promising assumption for theoretical analysis of neural networks, which assumes that the number of parameters in neural networks is sufficiently larger than the sample size.
This assumption has revolutionized our understanding of the local minima.
For example, the global convergence of deep learning is now proved in many ways,
and some researchers further conjecture that the typical solutions are close to the initial parameters
(see Section \ref{sec: related works} for more details).

In this study, %
we provide an explicit expression for the \emph{global minimizer} in the over-parametrized regime by means of the integral representation \citep{Barron1993,Murata1996,sonoda2015}.
The integral representation is an effective machinery to analyze the neural networks using harmonic analysis, a branch of mathematics. %
It is realized as a linear operator between function spaces (see Definition \ref{def: integral representation}), and  
provides a principled approach to study over-parametrized neural networks with not only ReLU but also a wide range of activation functions. Recently, this has been recognized as an effective reparametrization in the \emph{mean-field theory} \citep{Mei2018,Rotskoff2018}, which employs the integral representation to show the global convergence for finite two-layer networks.

To be precise, we develop a new theory of the \emph{rigelet transform on the torus}, %
and prove for the first time that the parameter distributions of \emph{finite two-layer neural networks trained by regularized empirical risk minimization (RERM)} converges to a ridgelet spectrum as both the parameter number and sample size tend to infinity.
By virtue of the over-parametrization, our theorem holds not only for strict global minima but also other suboptimal minima such as random features solutions.
The ridgelet transform, which is a wavelet-like integral transform, is originally developed by \citet{Murata1996}, \citet{Candes.PhD} and \citet{Rubin.calderon}, and has a remarkable application to analysis of neural networks (see eg., \citealp{Starck2010} and Appendix \ref{sec:Rcalc}).

Numerical simulation confirms our main theoretical results. Namely, the scatter plot of parameter distributions learned by stochastic gradient descent (SGD) shows a similar pattern to the ridgelet spectrum. While our theory do not assume any specific training algorithm (but ERM), the empirical results further suggests that our theoretical findings hold for a more realistic settings.

To the present date, mean-field theories have not provided the explicit expression like ridgelet transform because they consider the integral- representation without ridgelet transform. 
If we know that the local minima tends to a ridgelet spectrum, then we can further understand the theoretical backgrounds behind the \emph{lazy learning}, a recent trend of inductive bias theories, such as the \emph{neural tangent kernel} \citep{Jacot2018,Lee2019} and the \emph{strong lottery ticket hypothesis} \citep{Frankle2019}, claiming that the learned parameters are very close to the initial parameters. This is reasonable when the initial parameters cover the support of the ridgelet spectrum.
As a consequence, this study develops a new direction of the theoretical studies of local minima. See Related Works (Section \ref{sec: related works}) for more discussions.

Contributions are summarized as follows: This study
\begin{itemize}
    \item develops a complete set of the ridgelet transform on the torus including reconstruction formula, admissible condition, Plancherel formula, boundedness, density, and several formulas for calculus;
    \item mathematically proves (1) that the population risk minimizer of the ridge regression problem with integral representation NNs is expressed by the ridgelet transform, and (2) that the empirical risk minimizer (ERMer) of the ridge regression problem with finite two-layer NNs converges to the ridgelet transform in the over-parametrized regime, namely, when the parameter number and the sample size tend to infinity;
    \item empirically confirms that the parameter distributions in finite two-layer NNs trained by stochastic gradient descent (SGD) visually converge to the ridgelet spectrum obtained by numerical integration; and
    \item develops a new direction of the theoretical studies of local minima that would reinforce a wide range of recent global convergence theories including mean-field theories and lazy learning.
\end{itemize}
The structure of this paper is as follows: 
In Section 2, we develop the theory of the ridgelet transform on the torus. 
In Section 3, we give our main results. 
In Section 4, we conduct numerical simulation. 
In Sections 5, we discuss the relation to previous studies.
In Section 6, we provides conclusions and further discussions.
\paragraph{Notations.}
The $m$ is the dimension of the Euclidean space of the input data.
We denote by $\dd\xx$ the Lebesgue measure on $\RR^m$.

We denote by $\TT$ the torus $\RR / T \ZZ$ for a fixed $T>0$, which is identified with the interval $[-T/2,T/2)$.
We denote by $\dd b$ the invariant measure on $\TT$, that is identical with the Lebesgue measure on $[-T/2, T/2)$ via the above identification.

For $A >0$, we denote by $\II_A$ the interval $[-A,A]$.
We denote by $\dd\aa$ the Lebesgue masure on $\II_A^m$.
We define $\mu_A:= \dd\aa\dd{b}$ a measure on $\II_A^m\times\TT$.

For a measurable space $X$ equipped with a measure $\mu$, we denote by $L^p(X,\mu)$ the space of $L^p$ integrable functions on $X$ with respect to $\mu$. 
For simplicity, we write $L^p(\mu)$ if $X$ is obvious in context, or write $L^p(X)$ when $\mu$ is the Lebesgue measure or the invariant measure on $\TT$.

For a topological space $X$, we denote by $C_b(X)$ the Banach space of bounded continuous functions on $X$ equipped with the uniform norm.

For a periodic function $\sigma:\TT\to\RR$ and an integer $n$, we write the Fourier coefficient as $\widehat{\sigma}(n):=(1/T)\int_{-T/2}^{T/2}\sigma(t)e^{2\pi int/T}\dd t$. 

For a function $\sigma : \RR \to \RR$, 
$\sigma_{\aa,b}$ denotes a function $\xx \mapsto \sigma(\aa \cdot \xx - b)$ 
, and $\sigma_{\xx}$ denotes a function $(\aa,b) \mapsto \sigma(\aa \cdot \xx - b)$.

\section{RIDGELET TRANSFORM ON THE TORUS}
In this section, we establish the theory of the ridgelet transform on the torus, which is a basis of this study.
For those who are not familiar with ridgelet analysis, we refer to the \emph{Cheat Sheet} (Appendix \ref{sec:cheat}) including the list of handy formulas and the visualization of reconstruction formula with admissible and non-admissible functions.

The ridgelet transform on the torus is a complete set of new ridgelet transform,
because periodic activation functions cannot be \emph{self-admissible} in the non-periodic context, and thus two theories are exclusive to each other.
We need the self-admissibility for the Plancherel formula to hold.

\subsection{Periodic Activation Function}
In this study, we consider the activation function $\sigma$ to be bounded and measurable function from $\TT$ to $\RR$,
or equivalently, a bounded measurable periodic function $\sigma$ on $\RR$ with period $T$: $\sigma(t+T)=\sigma(t)$.

Originally, the ridgelet transform is defined on the real line $\RR$ \citep{Murata1996,Candes.PhD}.
However, the non-compactness of $\RR$ gives rise to several technical difficulties in the proofs, especially, in establishing a connection between the ridgelet transform and finite neural networks.
Moreover, the original definition excludes non-integrable activation functions such as the hyperbolic tangent function and the rectified linear unit (ReLU).
\citet{sonoda2015} have extended the ridgelet transform to accept such non-integrable activation functions,  by introducing an auxiliary dual activation function.
However, their extension sacrifices the Plancherel formula, which we need in this study.

Although it might be possible to develop a truncated version of the ridgelet theory, such as the ``ridgelet transform on a closed interval'', it disables us from using fruitful results in Fourier analysis.
In contrast, if we impose a periodicity on $\sigma$, we can use a quite powerful mathematical machinery, that is the theory of the Fourier transform on the torus $\TT \simeq [-T/2,T/2)$. 
Since we may take arbitrarily large $T$, it is not so harmful as we often consider a finite dataset that is always contained in a (sufficiently large) compact domain.
It is worth remarking that there exists a study \citep{Sitzmann2020} that utilizes a periodicity of the activations, in which the authors report neural networks with periodic activations perform better in some machine learning tasks using real world data.

\subsection{Integral Representation of Neural Networks}
\label{sec: integral representation}
We give a definition of an integral representation.  
\begin{dfn}[Integral Representation]
\label{def: integral representation}
Let $\sigma: \TT \rightarrow \RR$ be a bounded measurable function, and let $P$ be a finite Borel measure on $\RR^m$. For any finite Borel measure $\lambda$ on $\RR^m\times \TT$, we define an integral representation of a neural network $S_\lambda: L^2(\lambda)\rightarrow \Fp $ by
\begin{align}
S_\lambda[\gamma](\xx):=\int_{\RR^m\times\TT} \gamma(\aa,b) \sigma(\aa\cdot\xx-b) \dd\lambda(\aa, b).
\end{align}
\end{dfn}
In this study, we mainly consider two cases: $\lambda_d=\sum_{i=1}^d\delta_{\aa_i,b_i}$, and $\lambda=\mu_A$.
As for the first case, $S_{\lambda_d}[\gamma](\xx) = \sum_{i=1}^d\gamma(\aa_i,b_i)\sigma(\aa_i\cdot\xx -b_i)$.
Thus $S_{\lambda_d}$ represents a finite two-layer neural network.
As for the second case, the operator $S_{\mu_A}$ can be regarded as a continuum limit of neural networks whose hidden parameters $(\aa_i,b_i)$ are contained in $\II_A^m\times\TT$.

Here, we provide a remark on the space $L^2(P)$. As $\Fx$ does not contain $\sigma_{\aa,b}$ and thus any finite neural networks, we cannot see the direct connection between finite neural networks and integral representations of neural networks in $\Fx$.
To circumvent this technical issue, we consider $L^2(P)$ since $\sigma_{\aa,b} \in \Fp$.

We note the boundedness of the integral representation:
\begin{prop}
\label{prop: boundedness of S}
The linear operator $S_{\lambda}$ is bounded, namely, there exists a positive constant $C>0$ such that $\| S_\lambda[\gamma] \|_{L^2(P)} \le C\| \gamma \|_{L^2(\lambda)}$ for all $\gamma \in L^2(\lambda)$.
\end{prop}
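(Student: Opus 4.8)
The plan is to obtain a pointwise bound on $S_\lambda[\gamma](\xx)$ by a multiple of $\|\gamma\|_{L^2(\lambda)}$ via the Cauchy--Schwarz inequality, and then to integrate this bound against the finite measure $P$. Write $M := \esssup_{t\in\TT}|\sigma(t)|<\infty$, which is finite since $\sigma$ is bounded, and set $\Lambda := \lambda(\RR^m\times\TT)<\infty$, which is finite since $\lambda$ is a finite Borel measure.

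First I would verify that $S_\lambda[\gamma]$ is well defined. For each fixed $\xx$, the function $\sigma_\xx : (\aa,b)\mapsto\sigma(\aa\cdot\xx-b)$ is bounded by $M$, hence lies in $L^2(\lambda)\subseteq L^1(\lambda)$ because $\lambda$ is finite; since $\gamma\in L^2(\lambda)$, Cauchy--Schwarz gives $\gamma\cdot\sigma_\xx\in L^1(\lambda)$, so the defining integral converges absolutely for every $\xx$. Joint measurability of $(\aa,b,\xx)\mapsto\sigma(\aa\cdot\xx-b)$ follows since it is the composition of the continuous map $(\aa,b,\xx)\mapsto\aa\cdot\xx-b$ with the (periodically extended) Borel function $\sigma$; Fubini--Tonelli then shows $\xx\mapsto S_\lambda[\gamma](\xx)$ is Borel measurable. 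Finally, if $\gamma=\gamma'$ holds $\lambda$-a.e., then $S_\lambda[\gamma]=S_\lambda[\gamma']$ pointwise, so $S_\lambda$ descends to a well-defined map on $L^2(\lambda)$.

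Next I would apply Cauchy--Schwarz in $L^2(\lambda)$ to the product $\gamma\cdot\sigma_\xx$, using $\|\sigma_\xx\|_{L^2(\lambda)}^2 = \int_{\RR^m\times\TT}|\sigma(\aa\cdot\xx-b)|^2\dd\lambda \le M^2\Lambda$, to get, for every $\xx$,
\begin{align}
|S_\lambda[\gamma](\xx)| \le \|\gamma\|_{L^2(\lambda)}\,\|\sigma_\xx\|_{L^2(\lambda)} \le M\sqrt{\Lambda}\,\|\gamma\|_{L^2(\lambda)}.
\end{align}
Squaring and integrating against $P$ yields
\begin{align}
\|S_\lambda[\gamma]\|_{L^2(P)}^2 = \int_{\RR^m}|S_\lambda[\gamma](\xx)|^2\dd P(\xx) \le M^2\,\Lambda\,P(\RR^m)\,\|\gamma\|_{L^2(\lambda)}^2,
\end{align}
so the claim holds with $C := M\sqrt{\Lambda\,P(\RR^m)}$.

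I do not anticipate a genuine obstacle: the argument is a twofold use of Cauchy--Schwarz together with Tonelli's theorem, and the only points requiring a little care are the joint measurability of the integrand (so that both the inner integral defining $S_\lambda[\gamma](\xx)$ and the outer integral over $P$ are legitimate) and the inclusion $L^2(\lambda)\subseteq L^1(\lambda)$, both of which rely solely on the finiteness of $\lambda$ and $P$ and the boundedness and measurability of $\sigma$.
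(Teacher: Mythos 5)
Your proof is correct and follows essentially the same route as the paper's: a pointwise Cauchy--Schwarz bound $|S_\lambda[\gamma](\xx)|\le\|\sigma\|_{L^\infty}\lambda(\RR^m\times\TT)^{1/2}\|\gamma\|_{L^2(\lambda)}$ integrated against the finite measure $P$. The extra care you take with joint measurability and absolute convergence is a welcome but inessential elaboration of the one-line argument in the paper.
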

The boundedness is a sufficient condition to establish the unique existence of the global optimum in the learning problem (\ref{square loss tikhonov regularization}) we will consider.

\subsection{Ridgelet Transform}
Let us introduce an assumption on the bounded measurable function $\sigma$.
\begin{asm}[Admissible Condition]
\label{asm: admissible condition}
The function $\sigma: \TT \rightarrow \RR$ is bounded and measurable, and satisfies the following two conditions: (1) $\widehat{\sigma}(0)=0$, and (2) $T^{m+1}\sum_{n\neq0}|\widehat{\sigma}(n)|^2/|n|^m=1$.
\end{asm}
We need the admissibility condition (AC) in the proof of the reconstruction formula (\ref{inversion formula}) below. 
It is not at all strong. In fact, the infinite sum in the second condition always converge because $\sigma$ is square integrable, thus, 
we may replace $\sigma$ with a function satisfying these condition via only multiplying and subtracting constants.
For example, restrictions of ReLU and hyperbolic tangent to $\TT$ with slight modifications on the constants, namely, ${\rm 
ReLU}|_{[-T/2,T/2]} - T/8$ and $\tanh|_{[-T/2,T/2]}$ are admissible.
Note that we can further eliminate the constant $-T/8$ in the ReLU by simply adding an offset $b_0$ to the model as $S[\gamma] + b_0$. It is a routine to extend our analysis for $S[\gamma]+b_0$ to have a parallel consequence. In this case, we do not need Item 1 of the AC.

We introduce the \emph{ridgelet transform} and its reconstruction formula. %
\begin{dfn}[Ridgelet Transform]
\label{def: ridgelet transform}
Impose Assumption \ref{asm: admissible condition} on $\sigma$. Then,
we define the ridgelet transform $R:\Fx\rightarrow \Grt$ by
\begin{align}
R[f](\aa,b):=\int_{\RR^m}f(\xx)\sigma(\aa\cdot \xx-b)\dd{\xx}.
\end{align}
\end{dfn}
For  a rigorous treatment of the well-definedness of the ridgelet transform, see Remark \ref{rmk: well-def of ridgelet} below.
\begin{thm}[Reconstruction Formula]
\label{thm: plancherel inversion}
Impose Assumption \ref{asm: admissible condition} on $\sigma$. Then for $f, g\in \Fx$, we have
\begin{align}
    \lim_{A\rightarrow\infty} \Sa\left[R[f]\right] &= f,\label{inversion formula}\\
    \big\langle R[f], R[g]\big\rangle_{\Grt} &=\langle f, g\rangle_{\Fx}.\label{plancherel formula}
\end{align}
\end{thm}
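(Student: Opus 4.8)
The plan is to reduce the statement to the Fourier transform on $\RR^m$ together with the Fourier series of $\sigma$ on $\TT$, doing the analytic work on the dense subspace $\mathcal S(\RR^m)\subset\Fx$ of Schwartz functions and extending by continuity afterwards. Write $\mathcal F$ for the unitary Fourier transform on $\RR^m$, $\widehat f(\xxi):=(\mathcal Ff)(\xxi)=\int_{\RR^m}f(\xx)e^{-2\pi i\xxi\cdot\xx}\dd\xx$, and $\mathcal F^{-1}$ for its inverse. For $f\in\mathcal S(\RR^m)$ the integral defining $R[f]$ converges absolutely ($\sigma$ bounded, $f\in L^1$); substituting the Fourier expansion $\sigma(t)=\sum_{n\in\ZZ}\widehat\sigma(n)e^{-2\pi int/T}$ and exchanging sum and integral (legitimate since $\widehat f$ decays rapidly and $(\widehat\sigma(n))_n$ is bounded) gives the key identity
\begin{align}
R[f](\aa,b)=\sum_{n\neq0}\widehat\sigma(n)\,e^{2\pi inb/T}\,\widehat f\!\left(\tfrac nT\aa\right),\nonumber
\end{align}
the $n=0$ term dropping out by Item~1 of Assumption~\ref{asm: admissible condition} (AC). This identity is the backbone of both claims.

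For the Plancherel formula \eqref{plancherel formula}, take $f,g\in\mathcal S(\RR^m)$ and integrate $R[f](\aa,b)\overline{R[g](\aa,b)}$ over $b\in\TT$ first, using $\int_\TT e^{2\pi i(n-n')b/T}\dd b=T\delta_{nn'}$; this leaves $T\sum_{n\neq0}|\widehat\sigma(n)|^2\,\widehat f(\tfrac nT\aa)\overline{\widehat g(\tfrac nT\aa)}$. Integrating over $\aa\in\RR^m$ and substituting $\xxi=\tfrac nT\aa$ in the $n$-th summand (Jacobian $(T/|n|)^m$) yields $\bigl(T^{m+1}\sum_{n\neq0}|\widehat\sigma(n)|^2/|n|^m\bigr)\iprod{\widehat f,\widehat g}_{\Fx}$, which equals $\iprod{f,g}_{\Fx}$ by Item~2 of (AC) and the Plancherel theorem on $\RR^m$; all sum--integral exchanges are justified over the finite-measure factor $\TT$ and, for the $\aa$-integral, by (AC) via Cauchy--Schwarz. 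Setting $f=g$ shows $R$ is an isometry on $\mathcal S(\RR^m)$, hence extends uniquely to an isometry $R:\Fx\to\Grt$ satisfying \eqref{plancherel formula}; this also gives the rigorous meaning of the well-definedness in Remark~\ref{rmk: well-def of ridgelet}.

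For the reconstruction formula \eqref{inversion formula}, write $\Sa[R[f]](\xx)=\int_{\II_A^m}\!\Big(\int_\TT R[f](\aa,b)\,\sigma(\aa\cdot\xx-b)\,\dd b\Big)\dd\aa$ for $f\in\mathcal S(\RR^m)$ and evaluate the inner integral by plugging in the key identity and integrating out $b$: the shift $b\mapsto\aa\cdot\xx-b$ turns $\int_\TT e^{2\pi inb/T}\sigma(\aa\cdot\xx-b)\dd b$ into $T\,\widehat\sigma(-n)\,e^{2\pi in\aa\cdot\xx/T}$, and $\widehat\sigma(-n)=\overline{\widehat\sigma(n)}$ since $\sigma$ is real (this route avoids expanding the second $\sigma$ and the nonsummable $\sum_n|\widehat\sigma(n)|$). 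Substituting $\xxi=\tfrac nT\aa$ once more gives
\begin{align}
\Sa[R[f]]=T^{m+1}\sum_{n\neq0}\frac{|\widehat\sigma(n)|^2}{|n|^m}\;\mathcal F^{-1}\!\bigl[\mathbf 1_{[-|n|A/T,\,|n|A/T]^m}\,\widehat f\,\bigr],\nonumber
\end{align}
a weighted average, with weights summing to $1$ by (AC), of rectangular Fourier truncations of $f$. Each truncation converges to $f$ in $\Fx$ as $A\to\infty$ while being bounded by $\|f\|_{\Fx}$, so dominated convergence for the series yields $\Sa[R[f]]\to f$ in $\Fx$; the same display shows $\|\Sa\circ R\|\le1$ uniformly in $A$, so an $\eps/3$ argument extends the limit from $\mathcal S(\RR^m)$ to all of $\Fx$. (If convergence in $L^2(P)$ is wanted instead, it follows once the limit has been identified, using Proposition~\ref{prop: boundedness of S}.)

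The only real work is bookkeeping: $R$ is a priori defined only as an integral on $L^1\cap L^2$, so \eqref{plancherel formula} must be proved on the dense class $\mathcal S(\RR^m)$ \emph{before} it is used to extend $R$ to $\Fx$ --- the reasoning is not circular because the key identity and the Plancherel computation live entirely inside $\mathcal S(\RR^m)$ --- and every interchange of $\sum_n$ with an integral has to be checked, with the admissible condition supplying exactly the finite majorant $T^{m+1}\sum_{n\neq0}|\widehat\sigma(n)|^2/|n|^m=1$. I expect this summability bookkeeping, rather than any substantive analysis, to be the main obstacle; beyond it the proof uses only the classical Fourier--Plancherel theorem on $\RR^m$ and the $L^2$-convergence of rectangular Fourier partial integrals.
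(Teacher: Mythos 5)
Your proof is correct, and its computational core --- expanding $\sigma$ in its Fourier series on $\TT$, integrating out $b$, changing variables $\xxi = n\aa/T$ to produce the weight $|n|^{-m}$, and invoking the admissibility condition to recognize $\Sa[R[f]]$ as a convex combination of rectangular Fourier truncations of $f$ --- is exactly the paper's computation. The organization differs in two respects. First, you prove the Plancherel formula directly (Parseval on $\TT$ for fixed $\aa$, then Plancherel on $\RR^m$ after the change of variables), whereas the paper derives it from the reconstruction identity $R^*R=\id$ on $L^1\cap L^2$ via $\|R[f]\|^2=\langle R^*R[f],f\rangle$; your route is self-contained and sidesteps the ordering worry you raise yourself. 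Second, the paper establishes the limit in \eqref{inversion formula} pointwise a.e.\ and in $L^1$ for $f\in L^1$ with $f^\sharp\in L^1$, while you establish it in $\Fx$ for Schwartz $f$ and then for all of $\Fx$ via the uniform bound $\|\Sa\circ R\|\le 1$ and density --- this actually covers the statement as literally written (all $f\in\Fx$) more directly than the paper's proof does, at the cost of losing the pointwise information. One point to tighten: your justification for substituting the Fourier series of $\sigma$ term by term (``$(\widehat{\sigma}(n))_n$ is bounded'') is not the right reason, since $\sum_n|\widehat{\sigma}(n)|$ need not converge and the series for $\sigma$ need not converge pointwise; the clean argument is that the Fourier coefficients of $b\mapsto R[f](\aa,b)$ are computed by Fubini to be $\widehat{\sigma}(n)\widehat{f}(n\aa/T)$ (the Fourier slice theorem the paper records in its cheat sheet), after which both the Plancherel computation and the inner $b$-integral in the reconstruction require only Parseval on $\TT$, not pointwise convergence of the key identity.
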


By discretizing the integral in (\ref{inversion formula}), we have a stronger result of a well-known universality of two-layer neural networks as a corollary of Theorem \ref{thm: plancherel inversion}:
\begin{cor}
\label{cor: universality}
Impose Assumption \ref{asm: admissible condition} on $\sigma$, and assume $f$ is a rapidly decreasing smooth function. 
Then,
for an arbitrary $\varepsilon>0$ and a compact domain $K \subset \RR^m$, there exists $A>0$ and $d>0$ such that the following inequality almost surely holds:
\[\left\|\frac{(2A)^mT}{d}\sum_{i=1}^d R[f](\aa_i,b_i)\sigma_{\aa_i,b_i}-f\right\|_{L^\infty(K)}<\varepsilon,\]
where $(\aa_i,b_i)$'s are i.i.d samples drawn from the uniform distribution over $\II_A^m\times\TT$.

\end{cor}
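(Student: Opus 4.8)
The plan is to derive the corollary from the reconstruction formula (\ref{inversion formula}) by a two-stage approximation: first replace the limit $A\to\infty$ by a fixed but large $A$, then replace the integral over $\II_A^m\times\TT$ by a Monte Carlo sum. For the first stage, fix a rapidly decreasing smooth $f$ and $\varepsilon>0$. By (\ref{inversion formula}) the functions $\Sa[R[f]]$ converge to $f$ in $\Fx=L^2(\RR^m)$ as $A\to\infty$; I would want this convergence (or at least the closeness of the truncated integral to $f$) upgraded to $L^\infty(K)$ for the compact set $K$. Since $f$ is Schwartz and $\sigma$ is bounded, the truncated integral $\Sa[R[f]](\xx)=\int_{\II_A^m\times\TT} R[f](\aa,b)\sigma(\aa\cdot\xx-b)\,\dd\aa\,\dd b$ is continuous in $\xx$, and the tail $\Sa[R[f]] - (\text{the }A=\infty\text{ object})$ can be controlled uniformly on $K$ using the rapid decay of $R[f]$ in $\aa$ (inherited from the Schwartz decay of $f$, essentially via integration by parts in the defining integral $R[f](\aa,b)=\int f(\xx)\sigma(\aa\cdot\xx-b)\dd\xx$, noting $\widehat\sigma(0)=0$ kills the zero mode). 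Thus I can pick $A$ so that $\|\Sa[R[f]] - f\|_{L^\infty(K)} < \varepsilon/2$.

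For the second stage, fix that $A$ and write the truncated integral as an expectation:
\[
\Sa[R[f]](\xx) = (2A)^mT\cdot \EE_{(\aa,b)\sim \mathrm{Unif}(\II_A^m\times\TT)}\big[R[f](\aa,b)\,\sigma(\aa\cdot\xx-b)\big].
\]
The empirical average $\frac{(2A)^mT}{d}\sum_{i=1}^d R[f](\aa_i,b_i)\sigma_{\aa_i,b_i}(\xx)$ is then an unbiased estimator of $\Sa[R[f]](\xx)$ for each fixed $\xx$. Because $f$ is Schwartz, $R[f]$ is bounded and continuous on the compact set $\II_A^m\times\TT$, and $\sigma$ is bounded, so the integrand is uniformly bounded; moreover $\xx\mapsto R[f](\aa,b)\sigma(\aa\cdot\xx-b)$ is equicontinuous in $(\aa,b)$ over $K$. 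I would therefore invoke a uniform law of large numbers / Glivenko--Cantelli-type argument (e.g.\ via covering numbers of $K$ combined with Hoeffding's inequality and a union bound, then Borel--Cantelli) to conclude that $\sup_{\xx\in K}\big|\frac{(2A)^mT}{d}\sum_i R[f](\aa_i,b_i)\sigma_{\aa_i,b_i}(\xx) - \Sa[R[f]](\xx)\big| \to 0$ almost surely as $d\to\infty$. Hence there almost surely exists $d$ making this supremum $<\varepsilon/2$, and combining with the first stage via the triangle inequality gives the claim.

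The main obstacle is the upgrade of the mode of convergence in the first stage: Theorem \ref{thm: plancherel inversion} only gives $L^2(\RR^m)$ convergence of $\Sa[R[f]]$ to $f$, whereas the corollary needs uniform control on $K$. Bridging this gap requires using the Schwartz regularity of $f$ to obtain decay estimates on $R[f]$ (so that the tail of the $\aa$-integral is uniformly small on $K$) and pointwise-in-$\xx$ continuity estimates for the truncated reconstruction — this is where the smoothness and rapid-decay hypotheses on $f$ are genuinely used, beyond what is needed for Theorem \ref{thm: plancherel inversion} itself. The second, probabilistic, stage is comparatively routine: it is a standard uniform LLM over the compact index set $\II_A^m\times\TT$ and the compact domain $K$, exploiting that the summands are uniformly bounded and Lipschitz in $\xx$.
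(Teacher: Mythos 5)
Your proposal is correct and follows essentially the same route as the paper: the paper likewise obtains rapid decay of $R[f]$ in $\aa$ by integrating by parts against an antiderivative $\tau_{|\nn|}$ of $\sigma$ (using $\widehat{\sigma}(0)=0$), concludes $R[f]\in L^1(\RR^m\times\TT)\cap L^\infty(\RR^m\times\TT)$ so that the truncation error in the reconstruction formula is small for large $A$, and then handles the Monte Carlo error by the strong law of large numbers for Banach-space-valued random variables (Proposition~\ref{prop:SLLN}) rather than your covering-number/Hoeffding/Borel--Cantelli argument — a purely cosmetic difference in how the a.s.\ uniform convergence is established.
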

Typical universality results only concern approximation power of neural networks.
Such results guarantee the representation power of neural networks, however, their parameters could become too large to be realized in the real world.
In contrast, Corollary \ref{cor: universality} provides us not only the approximation power but also detailed information of the parameter distributions.
Although there might be many candidates of neural networks that represent the target function, Corollary \ref{cor: universality} shows that one of them are given by the ridgelet transform, a simple integral transform.
Conversely, under over-parametrized condition, we will prove that the parameter distribution of an optimal neural network is closely related to the ridgelet transform.
\begin{rmk}
\label{rmk: well-def of ridgelet}
For mathematical and logical accuracy,  we need to define $R[f]$ for all the $f \in \Fx$ with Theorem \ref{thm: plancherel inversion} via bounded extension, essentially the same arguments in the definition of the $L^2$-Fourier transform on the Eulidean space.
More precisely, We first define $R[f]$ for $f \in L^1(\RR^m)$, which is absolutely convergent because $\sigma \in L^\infty(\TT)$.
Then, we show the Plancherel formula for $f \in L^1(\RR^m) \cap L^2(\RR^m)$ as in \refthm{ plancherel inversion}.
Finally, we extend $R[f]$ for $f \in \Fx$ as a common limit of $R[f_i]$, where $f_i$ is any sequence in $L^1(\RR^m)\cap \Fx$ that converges to $f$ in $L^2(\RR^m)$. 
\end{rmk}

\section{MAIN RESULTS}
In this section, we describe the formulation of our problem and main results (Theorems \ref{thm: minimizer = ridgelet} and \ref{thm: convergence of finite NN}). 
We fix an activation function $\sigma: \TT\rightarrow\RR$. We assume that $\sigma$ is continuous almost everywhere, equivalently, Riemann integrable, and satisfies Assumption \ref{asm: admissible condition}.
We also fix a square integrable function $f\in L^2(\RR^m)$ as a data generating function, and an absolutely continuous probability measure $P$ on $\RR^m$ with bounded density function $p\in L^1(\RR^m)$ as the input data distribution.
We write an empirical measure corresponding to $P$ by $P_N:=1/N\sum_{i=1}^N \delta_{\xx_i}$, where $\xx_1,\dots,\xx_N \in \RR^m$ are i.i.d samples drawn from $P$.
For $A,T>0$ and $d \in \NN$, let 
\begin{align}
\Lambda_{d,A} := \left\{ \frac{C_0}{d} \sum_{i=1}^d \delta_{(\aa_i,b_i)} \mid (\aa_i,b_i) \in \II_A^m \times \TT \right\},
\end{align}
where $C_0 := (2A)^mT$,
be the collection of $d$-term \emph{hidden parameter distributions} on $\II_A^m \times \TT$.

\paragraph{Main Claim (Theorems \ref{thm: minimizer = ridgelet} and \ref{thm: convergence of finite NN})}
Our main results are summarized in the following formula, which is a converse of Corollary \ref{cor: universality}:
\begin{align}
     \lim_{N\rightarrow\infty} \lim_{d\rightarrow\infty}\gamma^*_{N,d} = R\left[\frac{pf}{\beta+p}\right] + \Delta_A, \notag
\end{align}
where $\gamma^*_{N,d}$ represents the parameter distribution of $d$-term two-layer neural networks trained by regularized empirical risk minimization with $N$ training examples, $\beta$ is a regularization parameter, and $\Delta_A$ is a small residual term that tends to $0$ as $A \to \infty$. %

We call this inner limit $\lim_{d\rightarrow\infty}\gamma^*_{N,d}$ along the parameter number $d$ the \emph{over-parametrization}.
In this sense, we show ``over-parametrized networks converge to the ridgelet transform.''

\subsection{Square Loss Minimization}

For an arbitrary $\beta>0$, any finite Borel measures $\lambda$ and $P$ on $\RR^m \times \TT$ and $\RR^m$, respectively, and any square integrable function $f \in L^2(P)$,
we consider the following form of $L^2$-regularized square risk: %
\begin{align}
\loss( \gamma ; f, P, \lambda,\beta):=\left\Vert f- S_\lambda[\gamma]\right\Vert^2_{L^2(P)}
+\beta \left\Vert \gamma \right\Vert^2_{L^2(\lambda)}. \label{eq:square risk}
\end{align}
We denote by $\gstar[f;P,\lambda,\beta]$,
the unique element that attains the minimum
\begin{align}
\label{square loss tikhonov regularization}
    \min_{\gamma \in L^2(\lambda)} \loss(\gamma; f,P,\lambda,\beta),
\end{align}
which always exists as long as $S_\lambda$ is a densely defined closed operator (see Appendix \ref{app:tikhonov} for the proof).
As we have already seen in Proposition~\ref{prop: boundedness of S}, $S_\lambda$ is bounded, and thus the minimizer always exists.

The minimizer of (\ref{square loss tikhonov regularization}) behaves well under limit manipulations, namely, we have the following lemma:
\begin{lem}
\label{lem: limit of minimizers}
For any finite Borel measure $\lambda$ on $\RR^m\times\TT$, as $N\rightarrow\infty$, we have
\[\left\|\gamma^*[f; P_N, \lambda, \beta] -\gamma^*[f; P, \lambda, \beta] \right\|_{L^2(\lambda)} \to 0\text{~~$P$-a.s.}\]
\end{lem}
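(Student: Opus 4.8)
The plan is to exploit the explicit form of the Tikhonov minimizer. Since $S_\lambda$ is bounded (Proposition~\ref{prop: boundedness of S}), for fixed $\lambda$ and $\beta>0$ the minimizer of \eqref{square loss tikhonov regularization} admits the closed form
\begin{align}
\gstar[f;Q,\lambda,\beta] = (S_\lambda^{*,Q} S_\lambda + \beta\,\id)^{-1} S_\lambda^{*,Q}[f], \notag
\end{align}
where $S_\lambda^{*,Q}:L^2(Q)\to L^2(\lambda)$ is the adjoint of $S_\lambda$ computed with respect to the inner product of $L^2(Q)$ (here $Q=P$ or $Q=P_N$). Writing out this adjoint one finds $(S_\lambda^{*,Q} h)(\aa,b) = \int \sigma(\aa\cdot\xx - b)\,h(\xx)\,\dd Q(\xx)$, and similarly $S_\lambda^{*,Q} S_\lambda$ is integration of $\gamma$ against the kernel $k_Q((\aa,b),(\aa',b')) := \int \sigma(\aa\cdot\xx-b)\sigma(\aa'\cdot\xx-b')\,\dd Q(\xx)$. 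So the whole dependence on the data measure $Q$ enters only through the integrals $\int(\,\cdot\,)\,\dd Q(\xx)$, and the strategy is to show these converge $P$-a.s. when $Q=P_N\to P$, then push that convergence through the bounded inverse.

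First I would establish the uniform boundedness of the relevant operators. Because $\sigma\in L^\infty(\TT)$ and $P,P_N$ are probability measures, $\|S_\lambda^{*,P_N}\|$ and $\|S_\lambda^{*,P}\|$ are bounded by $\|\sigma\|_\infty\cdot\lambda(\RR^m\times\TT)^{1/2}$ uniformly in $N$, and $(S_\lambda^{*,Q}S_\lambda+\beta\id)^{-1}$ has norm at most $1/\beta$ regardless of $Q$. Next, using the resolvent identity
\begin{align}
A^{-1} - B^{-1} = A^{-1}(B-A)B^{-1}, \notag
\end{align}
with $A = S_\lambda^{*,P_N}S_\lambda + \beta\id$ and $B = S_\lambda^{*,P}S_\lambda+\beta\id$, the difference $\gstar[f;P_N,\lambda,\beta]-\gstar[f;P,\lambda,\beta]$ is bounded (in $L^2(\lambda)$) by a constant times $\|S_\lambda^{*,P_N} - S_\lambda^{*,P}\|_{\mathrm{op}}$ plus a constant times $\|(S_\lambda^{*,P_N}S_\lambda - S_\lambda^{*,P}S_\lambda)\|_{\mathrm{op}}$, so it suffices to prove these two operator-norm differences go to $0$ $P$-a.s.

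The core estimate is therefore: $\big\|\int\sigma(\aa\cdot\xx-b)\,g(\xx)\,\dd(P_N-P)(\xx)\big\|_{L^2(\lambda)} \to 0$ uniformly over the unit ball of test functions $g$ that arise (for the adjoint, $g=f$; for $S^*S$, $g=\sigma(\aa'\cdot\xx-b')$). Here I would invoke the strong law of large numbers: for $P$-a.e. sample path, $\int\phi\,\dd P_N \to \int\phi\,\dd P$ for each fixed bounded continuous $\phi$. To upgrade pointwise-in-$(\aa,b)$ convergence to $L^2(\lambda)$-norm convergence, I would use that the integrands $\sigma(\aa\cdot\xx-b)$ are uniformly bounded by $\|\sigma\|_\infty$, so $\big|\int\sigma(\aa\cdot\xx-b)g\,\dd(P_N-P)\big| \le 2\|\sigma\|_\infty\|g\|_{L^\infty}$, and apply dominated convergence on $L^2(\lambda)$ once we have the a.e.-$(\aa,b)$ convergence — which itself follows from SLLN applied to a countable dense family plus the (a.e.) continuity of $\sigma$ that was assumed in Section~3, together with a Fubini argument to interchange "$P$-a.s." over sample paths with "for a.e. $(\aa,b)$". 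The main obstacle is precisely this interchange and the handling of the supremum over $g$ in the $S^*S$ term, where $g$ ranges over an uncountable family $\{\sigma_{\aa',b'}\}$; I expect to resolve it by noting that in the resolvent-identity bound one never actually needs a supremum — one needs $\|(S_\lambda^{*,P_N}S_\lambda - S_\lambda^{*,P}S_\lambda)[\gamma]\|_{L^2(\lambda)} \to 0$ for the single relevant vector $\gamma = B^{-1}S_\lambda^{*,P}[f]$, so the whole argument collapses to finitely (indeed, countably) many scalar SLLN statements after expanding in the kernel, and no uniform class-of-functions control is required.
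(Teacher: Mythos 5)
Your decomposition is the same as the paper's: both write the minimizer in the closed Tikhonov form $(\beta+S_\lambda^{*,Q}S_\lambda)^{-1}S_\lambda^{*,Q}[f]$, apply the resolvent identity, and reduce the claim to the $P$-a.s.\ convergence $R_{P_N}[g]\to R_P[g]$ in $L^2(\lambda)$ for exactly two fixed functions, $g=f$ and $g=h:=S_\lambda(\beta+S_\lambda^{*,P}S_\lambda)^{-1}S_\lambda^{*,P}[f]$ --- including your closing observation that no supremum over a function class is needed, which is precisely how the paper avoids any uniform (Glivenko--Cantelli type) control. The only genuine difference is the last step: the paper invokes the strong law of large numbers for Banach-space-valued random variables (Ledoux--Talagrand, Corollary 7.10) applied directly to the $L^2(\lambda)$-valued i.i.d.\ summands $\xx_i\mapsto g(\xx_i)\sigma(\,\cdot\,\xx_i-\cdot\,)$, which needs only $\EE\|X\|_{L^2(\lambda)}\le \lambda(\II_A^m\times\TT)^{1/2}\|\sigma\|_\infty\,\EE|g|<\infty$; you instead use the scalar SLLN pointwise in $(\aa,b)$, a Fubini argument to exchange the a.s.\ null set with the $\lambda$-null set, and dominated convergence. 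Your route is more elementary and works, but two details deserve care: (i) the intermediate reformulation in terms of $\|S_\lambda^{*,P_N}-S_\lambda^{*,P}\|_{\mathrm{op}}$ is ill-posed as stated, since the two adjoints have different domains $L^2(P_N)$ and $L^2(P)$ --- you correctly abandon this at the end, and the final vector-wise version is the one to keep; (ii) your dominating bound $2\|\sigma\|_\infty\|g\|_{L^\infty}$ presumes $g$ bounded, whereas $f$ is only assumed square-integrable, so you should replace it by $\|\sigma\|_\infty\bigl(\tfrac1N\sum_i|g(\xx_i)|+\int|g|\,\dd P\bigr)$, which is a.s.\ bounded uniformly in $N$ by the scalar SLLN and suffices for dominated convergence on the finite measure $\lambda$. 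The Banach-valued SLLN buys a shorter proof with weaker integrability hypotheses; your version buys independence from the Ledoux--Talagrand machinery.
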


Now we consider two types of minimization problems.
Our goal is to describe the relationship between the minimizers as well as investigate the properties of them.

\paragraph{Continuous Population Risk Minimizer $\gamma^*$.}
We denote by $\gamma^* $ the population risk minimizer of %
\begin{align}
     & \min_{\gamma\in L^2(\mu_A) }J(\gamma; f, P, \mu_A, \beta).\label{min prob integral representation}
\end{align}
The minimizer $\gamma^*$ is equal to $\gamma^*[f;P,\mu_A,\beta]$, and referenced as a theoretically ideal object, which shows up as the global minimizer with over-parametrized neural networks.

\paragraph{Finite Empirical Risk Minimizer $\gamma^*_{N,d}$.}
We denote by $\gstar_{N,d}$ an empirical risk minimizer of
\begin{align}
     & \min_{\lambda \in \Lambda_{d,A}} \min_{\gamma\in L^2(\lambda)} J(\gamma ;f, P_N, \lambda, \beta_d).\label{min prob empirical}
\end{align}
By definition, the minimization problem (\ref{min prob empirical}) is equivalent to an ordinary learning problem of two-layer neural networks in term of the following empirical risk with respect to the parameters $(\aa_j,b_j,c_j)\in \II^m_A \times \TT \times \RR$:
\begin{align}
    \frac{1}{N} \sum_{i=1}^N \left| f(\xx_i)-\frac{C_0}{d}\sum_{j=1}^d c_j\sigma_{\aa_j,b_j}(\xx_i)\right|^2 + \beta_d \frac{C_0}{d}\sum_{j=1}^d|c_j|^2, \label{explicit loss of finite NN}
\end{align} 
where we write $C_0:=(2A)^mT$.
By definition, $\gamma_{N,d}$ attains the minumum of (\ref{square loss tikhonov regularization})  for some $\lambda_d^* \in \Lambda_{d, A}$, namely, we have $\gamma^*_{N,d}=\gamma^*[f;P_N, \lambda^*_d,\beta_d]$. We call $\lambda$ the \emph{hidden parameter distribution} of the ERMer $\gamma^*_{N,d}$.

As we see soon later, our main theorem holds not only for the strict global minimizer but also for more general solutions that satisfy a very mild assumption: %
\begin{asm}\label{asm: support}
A sequence of hidden parameter distributions $\{ \lambda_d \}_{d=1}^\infty$ ($\lambda_d\in\Lambda_{d,A}$) weakly converges to the uniform distribution $\mu_A$ over the parameter domain $\II_A^m\times\TT$,
namely, for any bounded continuous function $h \in C_b(\II_A^m\times\TT)$,
$\int h \dd \lambda_d \to \int h \dd \aa \dd b$ as $d \to \infty$.
\end{asm}

Here, we remark for potential confusions: The objective function \eqref{square loss tikhonov regularization} may remind some readers of the kernel ridge regression (KRR) with either $k(\xx,\yy) := \EE_{\aa,b \sim \lambda}[ \sigma( \aa \cdot \xx - b ) \sigma( \aa' \cdot \xx - b' )  ]$ on the data space, or $K((\aa,b),(\aa',b')) := \EE_{\xx \sim P}[ \sigma( \aa \cdot \xx - b ) \sigma( \aa' \cdot \xx - b' )  ]$ on the parameter space.
However, both KRRs cannot deal with our problem \eqref{square loss tikhonov regularization}.
Recall that our final goals are to specify the parameter distribution $\gamma^* \in \Git$ and to show the convergence of the finite minimizers $\gamma_d = \sum_{i=1}^d c_i \delta_{(a_i,b_i)}$ to $\gamma^*$. 
In general, $\gamma^*$ involves null component when $\beta>0$, but $H_K$ does not involve null components and thus the minimizer $\gamma^*$ in $L^2(\mu_A)$ cannot always included in $H_K$.

\subsection{Explicit Representation of Continuous Minimizer}
The first main result is the explicit representation of the continuous minimizer $\gamma^*$, the solution of (\ref{min prob integral representation}), in terms of the ridgelet transform. 
\begin{thm}
\label{thm: minimizer = ridgelet}
Let $f\in L^2(\RR^m)$ be a bounded square integrable function, and let $P$ be an absolutely continuous probability measure on $\RR^m$ with bounded density function $p\in L^1(\RR^m)$.
For $A>0$ and $\beta>0$, we have
\begin{align}
    \gstar=R\left[\frac{pf}{\beta+p}\right]+\Delta_{A},
\end{align}
where $\Delta_{A}$ is an element of $\Git$ such that 
\begin{align}
\lim_{A\rightarrow\infty}\left\Vert \Delta_{A}\right\Vert_{\Git}=0.
\end{align}
\end{thm}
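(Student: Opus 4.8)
The problem \eqref{min prob integral representation} is a strictly convex Tikhonov problem for the bounded operator $\Sa$, so (by the general theory recalled in \refapp{tikhonov}, using \refprop{boundedness of S}) its unique minimizer $\gstar$ is characterized by the normal equation
\[
\left( \beta\,\id + \Sa^{*}\Sa \right)\gstar = \Sa^{*} f ,
\]
where $\Sa^{*}\colon \Fp \to \Git$ is the Hilbert-space adjoint of $\Sa$. The first step is to identify $\Sa^{*}$ with the ridgelet transform. For $h \in \Fp$ one has $hp \in L^{1}(\RR^{m})$ (by Cauchy--Schwarz, writing $hp = (h\sqrt p)\sqrt p$), and $\gamma \in \Git$ is integrable over the bounded box $\II_A^m\times\TT$; hence Fubini applies to $\iprod{\Sa\gamma,h}_{\Fp}$ and gives
\[
\Sa^{*}[h](\aa,b) = \int_{\RR^{m}} h(\xx)\,p(\xx)\,\sigma(\aa\cdot\xx-b)\,\dd\xx = R[hp](\aa,b),\qquad (\aa,b)\in\II_A^m\times\TT .
\]
In particular $\Sa^{*}f = R[pf]\big|_{\II_A^m\times\TT}$, which makes sense because $f$ bounded and $p\in L^{1}(\RR^{m})$ force $pf \in L^{1}(\RR^{m})\cap\Fx$, so the two definitions of $R$ from \refthm{plancherel inversion} (and its Remark) agree on $pf$.

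The second step is to substitute the ansatz $\gstar = R[g]\big|_{\II_A^m\times\TT} + \Delta_{A}$, with $g := pf/(\beta+p)$, into the normal equation; note $g \in L^{1}(\RR^{m})\cap\Fx$ since $0\le p/(\beta+p)\le 1$ and $p/(\beta+p)\le p/\beta$. By the reconstruction formula \eqref{inversion formula}, $\Sa\big[R[g]\big] = g + \eta_{A}$ with $\|\eta_{A}\|_{\Fx}\to 0$ as $A\to\infty$; since $p$ is bounded, all of $gp$, $\eta_A p$, $pf$ lie in $L^{1}(\RR^{m})\cap\Fx$. Then
\[
\Sa^{*}\Sa\big[R[g]\big] = \Sa^{*}\!\big[g+\eta_{A}\big] = R[gp]\big|_{\II_A^m\times\TT} + R[\eta_{A}p]\big|_{\II_A^m\times\TT},
\]
and by linearity of $R$ together with the pointwise identity $(\beta+p)g = pf$,
\[
\beta\,R[g] + R[gp] = R\big[(\beta+p)g\big] = R[pf] .
\]
Plugging these into the normal equation, the $R[pf]$ terms cancel against $\Sa^{*}f$, leaving exactly
\[
\left(\beta\,\id + \Sa^{*}\Sa\right)\Delta_{A} = -\,\Sa^{*}[\eta_{A}] = -\,R[\eta_{A}p]\big|_{\II_A^m\times\TT}.
\]

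The third step is the estimate. Since $\Sa^{*}\Sa\ge 0$, the operator $\beta\,\id + \Sa^{*}\Sa$ is boundedly invertible with $\|(\beta\,\id+\Sa^{*}\Sa)^{-1}\|\le 1/\beta$, so
\[
\|\Delta_{A}\|_{\Git} \;\le\; \frac{1}{\beta}\,\big\|R[\eta_{A}p]\big\|_{L^{2}(\II_A^m\times\TT)} \;\le\; \frac{1}{\beta}\,\big\|R[\eta_{A}p]\big\|_{\Grt} \;=\; \frac{1}{\beta}\,\|\eta_{A}p\|_{\Fx} \;\le\; \frac{\|p\|_{L^{\infty}}}{\beta}\,\|\eta_{A}\|_{\Fx}\;\xrightarrow[A\to\infty]{}\;0,
\]
where the middle equality is the Plancherel formula \eqref{plancherel formula}. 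This is where the argument really hinges, and it is the main obstacle: the obvious bound $\|\Sa^{*}[\eta_{A}]\|\le\|\Sa\|\,\|\eta_{A}\|$ does not suffice, because the bound on $\|\Sa\|$ coming from \refprop{boundedness of S} is not uniform in $A$ (the parameter box has volume $(2A)^{m}T$), so one must route the estimate through the ridgelet transform and use that $R$ is an \emph{isometry}. Everything else is routine bookkeeping of the $L^{1}\cap L^{2}$ memberships invoked above, so that Fubini, linearity of $R$, and the identification of the two definitions of $R$ are all legitimate.
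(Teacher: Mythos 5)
Your proof is correct in substance and shares its skeleton with the paper's: both start from the normal equation $(\beta+\Sa^*\Sa)\gstar=\Sa^*[f]$, identify $\Sa^*[h]=R[hp]$, plug in $g=pf/(\beta+p)$ so that $\beta R[g]+R[gp]=R[pf]$ cancels against $\Sa^*[f]$, and finish with the resolvent bound $\|(\beta+\Sa^*\Sa)^{-1}\|\le 1/\beta$. Where you genuinely diverge is the residual estimate. The paper does not factor $\mcT_A:=\Sa^*\Sa$ through the reconstruction formula; it bounds $W_A=R[gp]-\int g\,\mcT_A[\sigma_{\xx}]\,\dd\xx$ directly by a bespoke Plancherel-type computation (the lemma giving $\|\int g\,\mcT_A[\sigma_{\xx}]\dd\xx\|_{\Grt}=\|[g^\flat w_A]^{\sharp}p\|_{\Fx}$ with the explicit multiplier $w_A$), which yields the $L^2$ decay for every $g\in\Fx$ by bounded extension. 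You instead write $\Sa[R[g]]=g+\eta_A$ and push $\eta_A$ through $\Sa^*$ and the Plancherel isometry; this is more modular, and your remark that one must route the estimate through the isometry of $R$ rather than the $A$-dependent operator norm of $\Sa$ is exactly the right observation. The one soft spot is the input $\|\eta_A\|_{\Fx}\to 0$: the reconstruction formula \eqref{inversion formula} is stated for $f\in\Fx$ without specifying the topology, and the paper's proof of it only establishes a.e.\ and $L^1$ convergence under the extra hypothesis $g^{\sharp}\in L^1(\RR^m)$, which does not follow from $f$ bounded and square-integrable with $p$ bounded and integrable. So you are invoking an $L^2(\RR^m)$ version of the reconstruction that the paper does not literally prove; the paper's $w_A$-lemma is precisely the device that supplies the missing $L^2$ estimate (a uniformly bounded Fourier multiplier tending to $1$, plus dominated convergence on the Fourier side). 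If you either cite that lemma or prove $\|\Sa[R[g]]-g\|_{\Fx}\to 0$ for $g\in L^1(\RR^m)\cap\Fx$ by the same multiplier argument, your route closes completely.
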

By Corollary \ref{cor: universality}, it is reasonable to expect the minimizer $\gamma^*$ and the ridgelet transform are intimately related to each other.
However, since there exists a nonzero element $\gamma_0\in L^2(\mu_A)$ satifying $\Sa[\gamma_0]=0$, $R[f]+\gamma_0$ also provides a parameter distribution that approximates the target $f$ well.
Theorem \ref{thm: minimizer = ridgelet} shows that the regularization term removes the effect of $\gamma_0$, and the minimizer $\gamma^*$ coincodes with the ridgelet transform except a small oscillation $\Delta_A$.

The principal term $\gamma_{pri}^* := R[fp/(\beta+p)]$ of the obtained minimizer is understood as a \emph{shrinkage estimator}, or a \emph{biased estimator}, of $\gamma^\circ:=R[f]$.
Namely, while $\gamma^\circ$ exactly attains $S[\gamma^\circ]=f$, the obtained estimate $S[\gamma_{pri}^*]=fp/(\beta+p)$ is intentionally biased from $f$, and the norm $\| \gamma_{pri}^*\|$ is intentionally $p/(\beta +p)$-times smaller than $\|\gamma^\circ \|$. Recall that a regularized estimator is generally a biased estimator, and shrinkage is a natural consequence of ridge regression because the regularizer $\beta \| \gamma \|^2$ penalizes the norm of $\gamma$. 

As described in Proposition \ref{limit projection} for general settings, if $\beta \to +0$, then $\gamma^*$ converges to the \emph{minimum norm solution}. In our setting, by the continuity in $\beta$, it is simply given by $\lim_{\beta \to +0} \gamma_{pri}^* = \lim_{\beta \to 0} R[fp/(\beta + p)] = R[f] = \gamma^\circ$. However, we remark that this \emph{does not mean} that ``If we minimize \refeq{square risk} without any regularization (by letting exactly $\beta=0$), then $\gamma^*=R[f]$''. In this case, the correct answer is $\gamma^*=R[f] + \ker S$. Namely, the minimizer will have a redundancy in null space $\ker S$.

\subsection{Convergence of Finite Minimizers in the Over-parametrization Regime}
The second main result is a convergence of parameter distributions of finite neural networks with over-parametrization.
\begin{thm}
\label{thm: convergence of finite NN}
Let $\{ \gamma^*_{N,d} \}_{d=1}^\infty$ be a sequence of ERMers. Impose Assumption \ref{asm: support} on the hidden parameter distributions $\lambda_d$ of $\gamma^*_{N,d}$, namely, $\lambda_d$ weakly converges to $\mu_A$. Assume $\beta_d\rightarrow \beta$ as $d\rightarrow\infty$.
Then, for any bounded continuous function $h$ on $\II_A^m\times\TT$, we have
\begin{align}
\lim_{N\rightarrow\infty}\lim_{d\rightarrow\infty}\int h \gamma^*_{N,\lambda_d}\dd{\lambda_d} = \int h \gamma^* \dd\mu_A.    
\end{align}
Here the limit with respect to $N$ is in the sense of $P$-a.s. convergence.
\end{thm}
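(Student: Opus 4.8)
The plan is to separate the two nested limits and reduce each to a finite-dimensional computation through the ``push-through'' (kernel-trick) form of the Tikhonov minimizer. Recall $\gamma^*_{N,d}=\gamma^*[f;P_N,\lambda_d,\beta_d]$. Since $\beta_d,\beta>0$ and $S_\lambda$ is bounded (Proposition~\ref{prop: boundedness of S}), the minimizer solves $(S_\lambda^*S_\lambda+\beta I)\gstar=S_\lambda^*f$, and the identity $(S_\lambda^*S_\lambda+\beta I)^{-1}S_\lambda^*=S_\lambda^*(S_\lambda S_\lambda^*+\beta I)^{-1}$ lets me write, for each $d$ and $N$ (regarding $S_{\lambda_d}$ and $\Sa$ as operators into the \emph{finite-dimensional} Hilbert space $L^2(P_N)$),
\begin{align}
\gamma^*_{N,d}&=S_{\lambda_d}^*u_{N,d},\qquad u_{N,d}:=(S_{\lambda_d}S_{\lambda_d}^*+\beta_d I)^{-1}f,\notag\\
\gamma^*[f;P_N,\mu_A,\beta]&=\Sa^*u_N,\qquad u_N:=(\Sa\Sa^*+\beta I)^{-1}f.\notag
\end{align}
On $L^2(P_N)$, the operators $S_{\lambda_d}S_{\lambda_d}^*$ and $\Sa\Sa^*$ are positive and self-adjoint with matrices $\tfrac1N\iprod{\sigma_{\xx_i},\sigma_{\xx_j}}_{L^2(\lambda_d)}$ and $\tfrac1N\iprod{\sigma_{\xx_i},\sigma_{\xx_j}}_{\Git}$, and $(S_{\lambda_d}^*u)(\aa,b)=\tfrac1N\sum_i u(\xx_i)\sigma(\aa\cdot\xx_i-b)$.

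\paragraph{Step 1: the over-parametrization limit $d\to\infty$ ($N$ fixed).}
I would write $\int h\,\gamma^*_{N,d}\,\dd\lambda_d=\iprod{S_{\lambda_d}h,u_{N,d}}_{L^2(P_N)}$ and pass to the limit coordinate-wise in $L^2(P_N)$. The key auxiliary fact is that Assumption~\ref{asm: support} ($\lambda_d\wto\mu_A$) in fact implies $\int g\,\dd\lambda_d\to\int g\,\dd\mu_A$ for every bounded $g$ whose discontinuity set is $\mu_A$-null, not merely for $g\in C_b$; applied to $g=\sigma_{\xx_i}\sigma_{\xx_j}$ and $g=h\,\sigma_{\xx_i}$ this yields $S_{\lambda_d}h\to\Sa h$ and $S_{\lambda_d}S_{\lambda_d}^*\to\Sa\Sa^*$ in $L^2(P_N)$. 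Because the latter are positive self-adjoint and $\beta_d\to\beta>0$, the shifts satisfy $S_{\lambda_d}S_{\lambda_d}^*+\beta_d I\succeq(\beta/2)I$ for large $d$, so $u_{N,d}\to u_N$ by continuity of inversion, whence
\begin{align}
\lim_{d\to\infty}\int h\,\gamma^*_{N,d}\,\dd\lambda_d=\iprod{\Sa h,u_N}_{L^2(P_N)}=\iprod{h,\Sa^*u_N}_{\Git}=\int h\,\gamma^*[f;P_N,\mu_A,\beta]\,\dd\mu_A.\notag
\end{align}

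\paragraph{Step 2: the statistical limit $N\to\infty$ ($P$-a.s.).}
Here I would invoke Lemma~\ref{lem: limit of minimizers} with $\lambda=\mu_A$: $\|\gamma^*[f;P_N,\mu_A,\beta]-\gstar\|_{\Git}\to0$ $P$-a.s., since $\gstar=\gamma^*[f;P,\mu_A,\beta]$. As $h$ is bounded and $\mu_A$ is finite, $h\in\Git$, so Cauchy--Schwarz upgrades this to $\int h\,\gamma^*[f;P_N,\mu_A,\beta]\,\dd\mu_A\to\int h\,\gstar\,\dd\mu_A$, which combined with Step 1 is exactly the claimed double limit.

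\paragraph{Anticipated obstacle.}
I expect essentially all the difficulty to sit in the auxiliary fact of Step 1: Assumption~\ref{asm: support} is phrased against $C_b$ test functions, whereas the integrands forced on us by the kernel-trick reformulation contain the activation $\sigma$, which is only Riemann integrable. Upgrading the weak convergence to bounded $\mu_A$-a.e.-continuous integrands is the crux, and the natural route is the Portmanteau theorem together with the observation that $(\aa,b)\mapsto\aa\cdot\xx_i-b$ is a submersion, so it pulls the Lebesgue-null discontinuity set of $\sigma$ back to a $\mu_A$-null set---this is presumably precisely why $\sigma$ is assumed Riemann integrable in Section~3. A secondary point worth stating carefully is the uniform lower bound on $S_{\lambda_d}S_{\lambda_d}^*+\beta_d I$ that licenses passing $(\,\cdot+\beta_d I)^{-1}$ through the limit. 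An alternative to Step 1 would be an epi-/$\Gamma$-convergence argument for the functionals $J(\cdot;f,P_N,\lambda_d,\beta_d)$, but since their domains $L^2(\lambda_d)$ vary with $d$, the reduction to the fixed finite-dimensional space $L^2(P_N)$ seems substantially cleaner.
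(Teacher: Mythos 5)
Your proposal is correct, and Step 2 coincides with the paper's argument (Lemma~\ref{lem: limit of minimizers} plus Cauchy--Schwarz). Step 1, however, takes a genuinely different route. The paper proves the stronger Lemma~\ref{lem: convergence with over parametrization}: the \emph{norm} convergence $\|\gamma^*[f;P_N,\lambda_d,\beta_d]-\gamma^*[f;P_N,\mu_A,\beta]\|_{L^2(\lambda_d)}\to 0$, obtained by bounding $\|G-G_d\|^2\le\beta_d^{-2}\|(\beta_d+\mcT_d)[G]-S_{\lambda_d}^*[f]\|^2$ and showing that the resulting triple Riemann sums $U^1_d,U^2_d,U^3_d$ converge to the corresponding integrals against $\mu_A$; the weak statement of the theorem then follows by Cauchy--Schwarz and one more Riemann-sum limit $\int h\gamma^*_{N,\infty}\dd\lambda_d\to\int h\gamma^*_{N,\infty}\dd\mu_A$. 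You instead push the minimizer through the identity $(S^*S+\beta)^{-1}S^*=S^*(SS^*+\beta)^{-1}$ into the fixed finite-dimensional space $L^2(P_N)$, so that the inner limit reduces to entrywise convergence of the $N\times N$ Gram matrices and of the vector $S_{\lambda_d}h$, followed by continuity of the resolvent at a uniformly positive shift. Both arguments hinge on the same analytic crux you correctly isolate: upgrading Assumption~\ref{asm: support} from $C_b$ test functions to bounded, $\mu_A$-a.e.\ continuous ones (Portmanteau plus the fact that $(\aa,b)\mapsto\aa\cdot\xx-b$ pulls the Lebesgue-null discontinuity set of $\sigma$ back to a $\mu_A$-null set) --- the paper uses this implicitly when it invokes ``Riemann integrability of $G(\cdot)K(\cdot,\cdot)$''. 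What each approach buys: the paper's lemma is a quantitatively stronger intermediate result of independent interest (convergence of the coefficient functions in $L^2(\lambda_d)$), but it must also establish that the limiting minimizer $G$ is bounded and a.e.\ continuous (via $G=\beta^{-1}(G-\mcT_A[G])$) before the Riemann-sum limits make sense; your reduction proves only the weak pairing the theorem actually asserts, but places the a.e.-continuity burden on $\sigma$ alone and replaces the operator-norm estimates on varying spaces $L^2(\lambda_d)$ by convergence of finite matrices, which is arguably cleaner.
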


Theorem \ref{thm: convergence of finite NN} claims that the over-parametrized two-layer neural networks weakly converges to the population risk minimizers $\gamma^*$ as the sample size gets increased.
Combined with Theorem \ref{thm: minimizer = ridgelet}, we obtain the statement ``an over-parametrized neural network converges to the ridgelet spectrum''.
The ``weak convergence'' is not much weak because if we take an arbitrary region of interest $K \subset \RR^m$, and let 
the indicator function $1_K$ to be the test function $h$, then the parameter distribution eventually converges to $\int_K R[f](\aa,b) \dd \aa \dd b$.
In Section \ref{sec:motivation} below, %
we will see that the parameters of finite neural networks trained by SGD accumulate the ridgelet spectrum.

We provide a remark on the assumption that $\lambda_d$ converges to a measure $\lambda$.
Since we consider the ridge regression, the support of ERMers cannot concentrate in a null set, for example, a lower dimensional submanifold, as the parameter number gets increased.
More precisely, we have the following simple lemma:
\begin{lem}
\label{lem: support collapse}
Let $\lambda$ be a finite Borel measure on $\RR^m\times\TT$.
Let $\gamma \in L^2(\lambda)$. Assume $\| \gamma \|_{L^1(\lambda)} >C$ for some $C>0$.
Then we have $\| \gamma \|_{L^2(\lambda)} > C/\lambda({\rm supp}(\gamma))$.
\end{lem}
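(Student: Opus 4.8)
The plan is to reduce the claim to the single integral inequality $\|\gamma\|_{L^1(\lambda)} \le \lambda(\mathrm{supp}(\gamma))\,\|\gamma\|_{L^2(\lambda)}$: once this is established, the hypothesis $\|\gamma\|_{L^1(\lambda)} > C$ immediately gives $\lambda(\mathrm{supp}(\gamma))\,\|\gamma\|_{L^2(\lambda)} > C$, hence the stated bound $\|\gamma\|_{L^2(\lambda)} > C/\lambda(\mathrm{supp}(\gamma))$. First I would write $S := \mathrm{supp}(\gamma)$ and observe that $\gamma$ vanishes $\lambda$-a.e.\ on the complement of $S$, so that every norm of $\gamma$ may be computed over $S$ alone; in particular $\|\gamma\|_{L^1(\lambda)} = \int_S |\gamma|\,\dd\lambda$ and $\|\gamma\|_{L^2(\lambda)}^2 = \int_S |\gamma|^2\,\dd\lambda$.

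The core estimate is Cauchy--Schwarz on the finite measure space $(S,\lambda|_S)$, applied to the pair $|\gamma|$ and the constant function $1$:
\begin{align}
\int_S |\gamma|\,\dd\lambda \le \left(\int_S |\gamma|^2\,\dd\lambda\right)^{1/2}\left(\int_S 1\,\dd\lambda\right)^{1/2} = \|\gamma\|_{L^2(\lambda)}\,\lambda(S)^{1/2}. \notag
\end{align}
This already yields the scale-covariant bound $\|\gamma\|_{L^2(\lambda)} > C/\lambda(\mathrm{supp}(\gamma))^{1/2}$, which is the form predicted by dimensional analysis, since rescaling $\lambda \mapsto c\lambda$ multiplies $\|\gamma\|_{L^1(\lambda)}$ by $c$ but $\|\gamma\|_{L^2(\lambda)}$ only by $c^{1/2}$.

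The remaining---and genuinely delicate---step is to upgrade the exponent on the denominator from $1/2$ to $1$, i.e.\ to replace $\lambda(S)^{1/2}$ by $\lambda(S)$. This rests solely on the elementary inequality $\lambda(S)^{1/2} \le \lambda(S)$, which is valid \emph{precisely} when $\lambda(\mathrm{supp}(\gamma)) \ge 1$. I expect this normalization to be the main obstacle: the bare Cauchy--Schwarz estimate is scale-covariant, whereas the stated linear form is not, so the linear denominator can only be reached after pinning down the scale of $\lambda$. In the regime in which the lemma is applied, $\lambda$ is one of the discrete measures $\lambda_d \in \Lambda_{d,A}$, whose total mass equals $C_0 = (2A)^m T$; because the ridge minimizer $\gamma^*_{N,d}$ generically has full support among the $d$ atoms, one has $\lambda_d(\mathrm{supp}(\gamma^*_{N,d})) = C_0$, and choosing the period $T$ (or the truncation $A$) so that $C_0 \ge 1$ secures $\lambda(\mathrm{supp}(\gamma)) \ge 1$ and closes the chain. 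I would finally remark that for the \emph{qualitative} conclusion this lemma is meant to support---that the support of the ERMer cannot collapse onto a $\lambda$-null set, since a vanishing denominator would force $\|\gamma\|_{L^2(\lambda)} \to \infty$ against the uniform bound supplied by the ridge penalty---the scale-covariant estimate $\|\gamma\|_{L^2(\lambda)} > C/\lambda(\mathrm{supp}(\gamma))^{1/2}$ already suffices, as its right-hand side also diverges as $\lambda(\mathrm{supp}(\gamma)) \to 0$.
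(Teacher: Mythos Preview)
Your approach is precisely the paper's: the paper's proof consists of the single line ``By the Schwartz inequality, $\|\gamma\|_{L^1(\lambda)} \le \lambda(\mathrm{supp}(\gamma))\cdot\|\gamma\|_{L^2(\lambda)}$'' followed by the division by $\lambda(\mathrm{supp}(\gamma))$. So methodologically there is nothing to compare.

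Where your write-up goes further is in noticing the exponent: Cauchy--Schwarz applied to $|\gamma|$ and $\mathbf{1}_{\mathrm{supp}(\gamma)}$ actually yields
\[
\|\gamma\|_{L^1(\lambda)} \le \lambda(\mathrm{supp}(\gamma))^{1/2}\,\|\gamma\|_{L^2(\lambda)},
\]
not the linear factor $\lambda(\mathrm{supp}(\gamma))$ that the paper writes. Your dimensional-analysis remark is exactly right, and one can see the discrepancy concretely: take $\lambda = \varepsilon\,\delta_{z_0}$ with $\varepsilon<1$ and $\gamma(z_0)=M$; then $\|\gamma\|_{L^1}=\varepsilon M$, $\|\gamma\|_{L^2}=\varepsilon^{1/2}M$, and $\lambda(\mathrm{supp}\gamma)=\varepsilon$, so the stated bound would demand $\varepsilon^{3/2}M>C$ from the hypothesis $\varepsilon M>C$, which fails for small $\varepsilon$. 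Thus the exponent $1$ in the lemma as printed is a slip; the paper's one-line proof simply drops the square root. Your discussion of a normalization $\lambda(\mathrm{supp}(\gamma))\ge 1$ is a reasonable patch, but the cleaner resolution is the one you give at the end: the scale-covariant estimate $\|\gamma\|_{L^2(\lambda)} > C/\lambda(\mathrm{supp}(\gamma))^{1/2}$ already delivers the blow-up as $\lambda(\mathrm{supp}(\gamma))\to 0$, which is all the lemma is used for.
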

This lemma implies if the the support of coefficient functions is collapsed to a null set, then its $L^2$-norm explodes. 
Therefore, the ridge regularization exclude such a coefficient function as a solution of the minimization problem in question.

\begin{proof}[Proof of Theorem \ref{thm: convergence of finite NN}]
We provide a sketch of the proof.
In fact, we prove a stronger convergence result as follows:
\begin{lem}
\label{lem: convergence with over parametrization}
Let $\{\lambda_d\}_{d=1}^\infty$ ($\lambda_d \in \Lambda_{d,A}$) be a sequence of a finite Borel measure. Impose Assumption \ref{asm: support} on $\{ \lambda_d \}_{d=1}^\infty$.
Assume that $\beta_d\rightarrow\beta$ as $d\rightarrow\infty$.
Then, as $d\rightarrow \infty$, we have
\[\big\| \gamma^*[f;P_N,\lambda_d,\beta_d] - \gamma^*[f; P_N, \mu_A,\beta] \big\|_{L^2(\gamma_d)} \longrightarrow 0.\]
\end{lem}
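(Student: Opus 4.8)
The plan is to fix $N$ and control the discrepancy $\|\gamma^*[f;P_N,\lambda_d,\beta_d]-\gamma^*[f;P_N,\mu_A,\beta]\|_{L^2(\lambda_d)}$ by a resolvent-type argument, exploiting that both minimizers are given by closed-form normal-equation solutions. Concretely, since $S_\lambda$ is bounded (Proposition \ref{prop: boundedness of S}), the minimizer of \refeq{square risk} is $\gamma^*[f;P_N,\lambda,\beta]=(S_\lambda^* S_\lambda + \beta\,\id)^{-1} S_\lambda^* f$, where $S_\lambda^*:L^2(P_N)\to L^2(\lambda)$ is the adjoint. The key observation is that the relevant Gram/moment quantities are integrals against $\lambda_d$ of bounded continuous functions of the hidden parameters: for $\gamma\in L^2(\lambda_d)$ restricted to, say, continuous test coefficients, $\langle S_{\lambda_d}^*S_{\lambda_d}\gamma,h\rangle_{L^2(\lambda_d)} = \int\!\!\int \gamma(\aa',b')\,K_N((\aa,b),(\aa',b'))\,h(\aa,b)\,\dd\lambda_d(\aa',b')\,\dd\lambda_d(\aa,b)$ with kernel $K_N((\aa,b),(\aa',b')):=\frac{1}{N}\sum_{i=1}^N\sigma_{\aa,b}(\xx_i)\sigma_{\aa',b'}(\xx_i)$, which is bounded and (because $\sigma$ is Riemann integrable, hence continuous a.e., and the $\xx_i$ are fixed) continuous $\mu_A$-a.e.; similarly $S_{\lambda_d}^* f$ is integration against the bounded a.e.-continuous function $(\aa,b)\mapsto \frac1N\sum_i f(\xx_i)\sigma_{\aa,b}(\xx_i)$. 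Assumption \ref{asm: support} ($\lambda_d\wto\mu_A$) then transfers these integrals to their $\mu_A$-counterparts.

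The steps, in order, are: (i) write both minimizers via the normal equations and reduce the claim to convergence of the operators $S_{\lambda_d}^*S_{\lambda_d}$, the vectors $S_{\lambda_d}^*f$, and the regularization constants $\beta_d\to\beta$; (ii) since the spaces $L^2(\lambda_d)$ vary with $d$, introduce a fixed ambient space — take the space $C_b(\II_A^m\times\TT)$ (or a dense subspace of $L^2(\mu_A)$ of continuous functions) and view $\gamma^*[f;P_N,\lambda_d,\beta_d]$ through its action on such test functions, i.e. pair it with $h\in C_b$; this is exactly the quantity $\int h\,\gamma^*_{N,\lambda_d}\,\dd\lambda_d$ appearing in Theorem \ref{thm: convergence of finite NN}; (iii) use weak convergence $\lambda_d\wto\mu_A$ together with the a.e.-continuity and uniform boundedness of the integrands to show $\int h\,\gamma^*[f;P_N,\lambda_d,\beta_d]\,\dd\lambda_d\to \langle h,\gamma^*[f;P_N,\mu_A,\beta]\rangle_{L^2(\mu_A)}$ for every continuous $h$; (iv) upgrade this weak statement to the norm statement $\|\cdot\|_{L^2(\lambda_d)}\to 0$ by also showing the norms converge, $\|\gamma^*[f;P_N,\lambda_d,\beta_d]\|_{L^2(\lambda_d)}^2\to\|\gamma^*[f;P_N,\mu_A,\beta]\|_{L^2(\mu_A)}^2$, which follows from applying the same weak-convergence machinery to the quadratic form (the square of the coefficient function is again bounded and a.e. continuous), then using the Hilbert-space fact that weak convergence plus convergence of norms implies strong convergence — here interpreted across the varying spaces via a common embedding. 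Finally, compose with Lemma \ref{lem: limit of minimizers} to let $N\to\infty$ and obtain Theorem \ref{thm: convergence of finite NN}.

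The main obstacle is step (iv): the measures $\lambda_d$ are atomic and mutually singular with $\mu_A$, so "$L^2(\lambda_d)\to L^2(\mu_A)$" is not literally a convergence in a fixed Hilbert space, and one must make precise in what sense a sequence living in different spaces converges strongly. The clean way around this is to avoid comparing $\gamma^*[f;P_N,\lambda_d,\beta_d]$ directly and instead show that the \emph{discretization error} $\|S_{\lambda_d}[\gamma^*[f;P_N,\lambda_d,\beta_d]] - S_{\mu_A}[\gamma^*[f;P_N,\mu_A,\beta]]\|_{L^2(P_N)}\to 0$ and the coefficient norms match, then invoke a quantitative stability estimate for Tikhonov solutions: if $\beta$ is fixed and positive, the map (forward operator, data, regularization constant) $\mapsto$ (minimizer) is Lipschitz in a sense that can be made uniform in $d$ because $\beta_d\to\beta>0$ bounds the resolvent norm by $1/\beta_d \le 2/\beta$ eventually. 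A secondary technical point is justifying that weak convergence of $\lambda_d$ suffices even though the integrands are only a.e.\ continuous rather than continuous; this uses that the discontinuity set of $\sigma(\aa\cdot\xx_i - b)$ in $(\aa,b)$ has $\mu_A$-measure zero for each fixed $\xx_i$ (as $\sigma$ is Riemann integrable) together with the portmanteau theorem for bounded a.e.-continuous functions.
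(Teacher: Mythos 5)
Your ingredients are the right ones (normal equations, the resolvent bound $\|(\beta_d+\mcT_d)^{-1}\|\le\beta_d^{-1}$, and weak convergence of $\lambda_d$ tested against bounded a.e.-continuous integrands), but the argument never actually reaches the claimed conclusion. Steps (ii)--(iv) establish only the weak pairing statement $\int h\,\gamma^*_d\,\dd\lambda_d\to\int h\,\gamma^*\,\dd\mu_A$, which is the content of Theorem \ref{thm: convergence of finite NN}, not of this lemma; the lemma asserts convergence in the $L^2(\lambda_d)$ norm, and your step (iv) --- upgrading via ``weak convergence plus convergence of norms implies strong convergence'' --- has no meaning here because there is no common Hilbert space: the cross term $\int \gamma^*\,\gamma^*_d\,\dd\lambda_d$ in the expansion of $\|\gamma^*-\gamma^*_d\|^2_{L^2(\lambda_d)}$ requires evaluating $\gamma^*$ at the atoms of $\lambda_d$, which neither weak-$*$ convergence of the measures $\gamma^*_d\,\dd\lambda_d$ nor convergence of the norms can supply. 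Your fallback (output convergence in $L^2(P_N)$ plus ``norm matching'' plus a Lipschitz stability estimate for the Tikhonov map) is not carried out and faces the same domain-mismatch problem, since $S_{\lambda_d}$ and $S_{\mu_A}$ act on different spaces.

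The missing idea is that the continuous minimizer $G:=\gamma^*[f;P_N,\mu_A,\beta]$ has a canonical bounded, a.e.-continuous pointwise representative --- read off from the normal equation $G=\beta^{-1}\bigl(S_{\mu_A}^*[f]-\mcT_A[G]\bigr)$ with $\mcT_A:=S_{\mu_A}^*S_{\mu_A}$ --- so $G$ can be restricted to the atoms and compared with $G_d:=\gamma^*[f;P_N,\lambda_d,\beta_d]$ \emph{inside} $L^2(\lambda_d)$, with no change of ambient space needed. One then bounds
\begin{align*}
\|G-G_d\|_{L^2(\lambda_d)}\;\le\; \beta_d^{-1}\,\bigl\|(\beta_d+\mcT_d)[G]-S_{\lambda_d}^*[f]\bigr\|_{L^2(\lambda_d)},
\end{align*}
observes that $S_{\lambda_d}^*[f]$ and $S_{\mu_A}^*[f]$ agree at the atoms, so the residual reduces to $(\mcT_d-\mcT_A)[G]+(\beta_d-\beta)G$; the first term is a Riemann-sum error that vanishes by Assumption \ref{asm: support} applied to the bounded a.e.-continuous function $(\aa',b')\mapsto G(\aa',b')K((\aa,b),(\aa',b'))$, and the second vanishes since $\beta_d\to\beta$. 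This is the route the paper takes, and it closes exactly the gap your step (iv) leaves open.
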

As a consequence, in the over-parametrized regime, the convergence occurs even when the hidden parameters are not optimized but at least they converge to $\mu_A$.
Combined with Lemma \ref{lem: limit of minimizers}, the minimizer $\gamma^*[P_N, \mu_A]$ almost surely converges to $\gamma^*$ as $N\rightarrow\infty$.
\end{proof}

\section{NUMERICAL SIMULATION}\label{sec:motivation}

In order to verify the main results, we conducted numerical simulation with artificial datasets.
Here, we only display the results of Experiment~1.
The readers are also encouraged to refer Appendix~\ref{app:experiments} for further experimental results.

\begin{figure*}
    \centering
    \begin{subfigure}[b]{0.3\textwidth}
            \includegraphics[width=\linewidth]{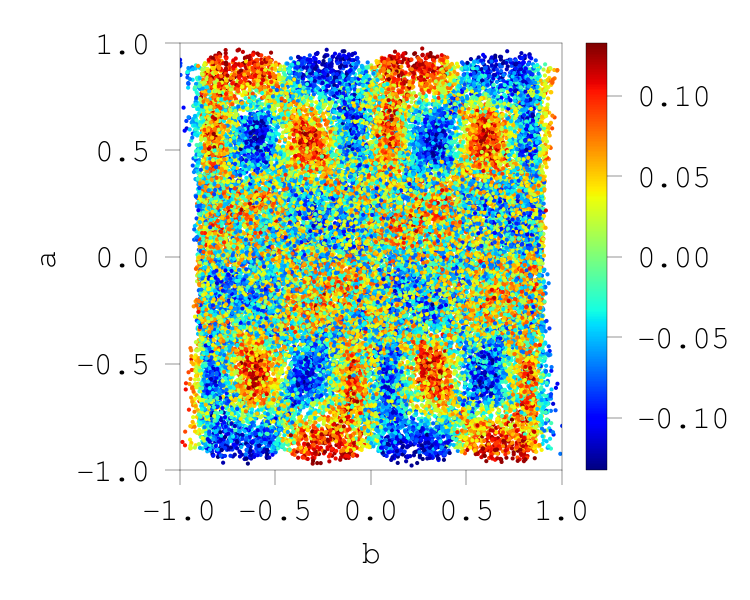}
            \caption{SGD params, Gaussian}
            \label{fig:dist gauss}
    \end{subfigure}%
    \begin{subfigure}[b]{0.3\textwidth}
            \includegraphics[width=\linewidth]{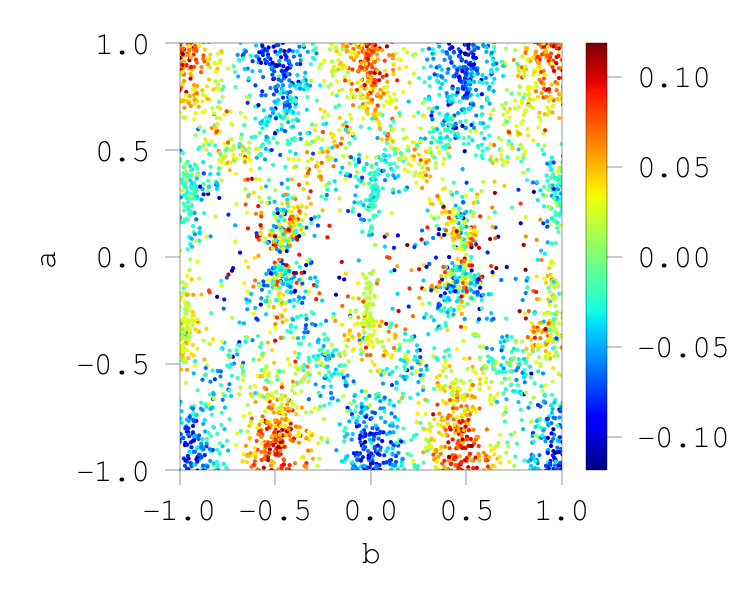}
            \caption{SGD params, Tanh}
            \label{fig:dist tanh}
    \end{subfigure}%
    \begin{subfigure}[b]{0.3\textwidth}
            \includegraphics[width=\linewidth]{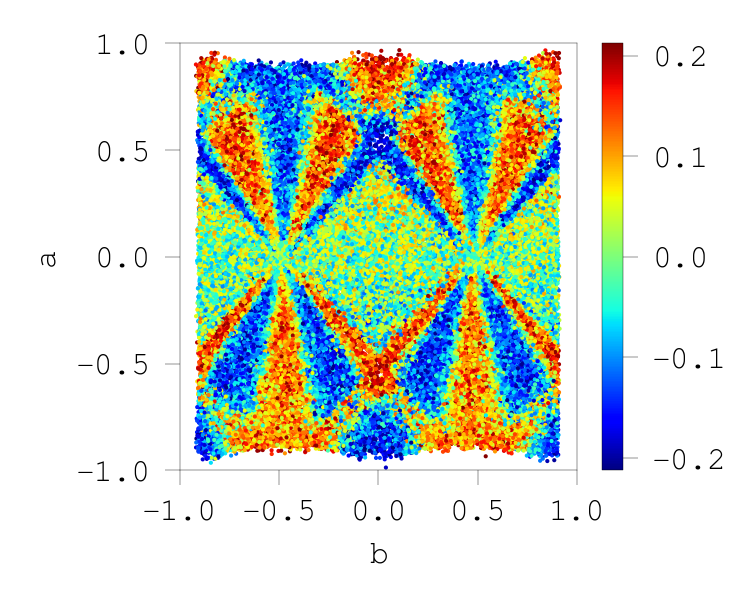}
            \caption{SGD params, ReLU}
            \label{fig:dist relu}
    \end{subfigure}\\
    \begin{subfigure}[b]{0.3\textwidth}
            \includegraphics[width=\linewidth]{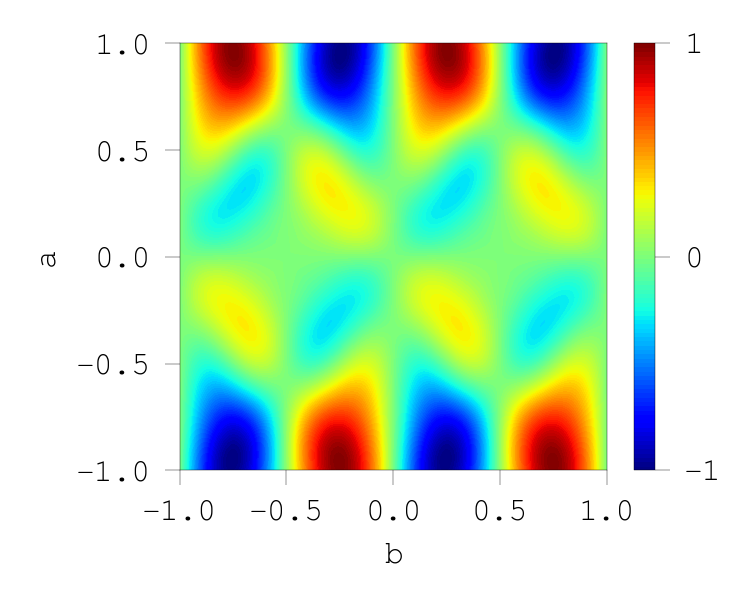}
            \caption{R. spect, Gaussian}
            \label{fig:spect gauss}
    \end{subfigure}%
    \begin{subfigure}[b]{0.3\textwidth}
            \includegraphics[width=\linewidth]{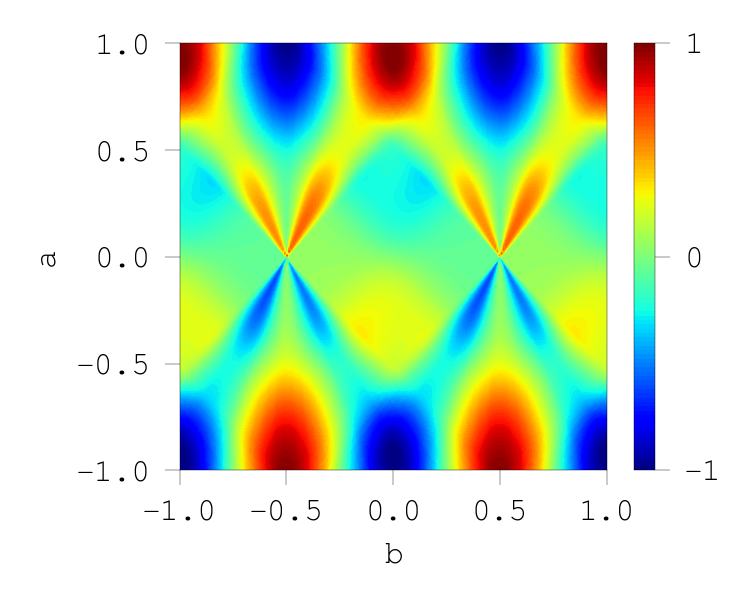}
            \caption{R. spect, Tanh}
            \label{fig:spect tanh}
    \end{subfigure}%
    \begin{subfigure}[b]{0.3\textwidth}
            \includegraphics[width=\linewidth]{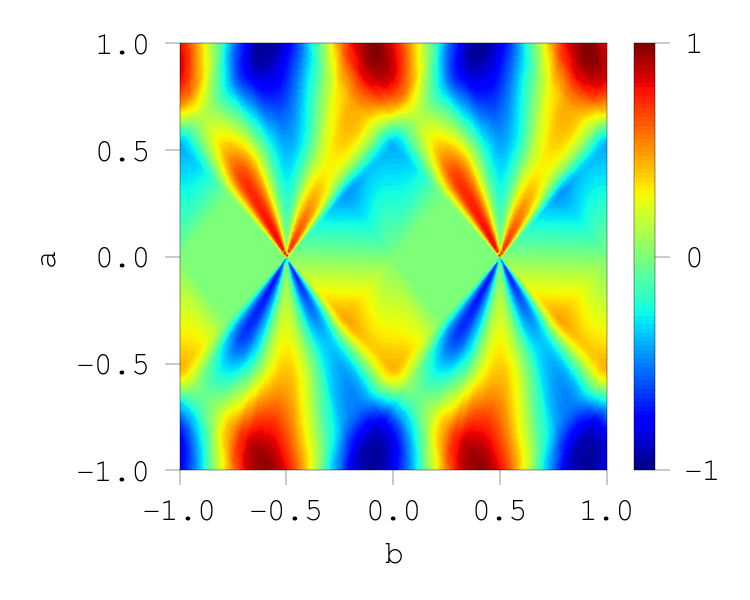}
            \caption{R. spect, ReLU}
            \label{fig:spect relu}
    \end{subfigure}%
    \caption{Parameter distributions $\gamma(\aa,b)$ trained by SGD (top) and ridgelet spectra $R[f](\aa,b)$ obtained by numerical integration (bottom) for the common data generating function $f(x) = \sin 2 \pi x, (x \in [-1,1])$.}\label{fig:ridgelet}
\end{figure*}

\subsection{Data Generation}
For the sake of visualization, all the datasets are $1$-in-$1$-out, so that the scatter plot will be displayed in a three-dimensional manner: $(a,b) \in \RR^2$ in position and $c \in \RR$ in color.
However, we remark that our theoretical results are valid for any dimension.
We always consider the uniform distribution $x_i \sim U(-1,1)$ for the input vectors,
and generate $n = 1,000$ samples for training, except for the case of \emph{Topologist's Sine Curve (TSC)} $y_i = \sin \frac{2 \pi}{x_i}$.
For the TSC, we generate $n = 10,000$ because the frequency tends to infinity as $x$ tends to 0.

\subsection{Scatter Plot of SGD Trained Parameters}

Given a dataset $D_n = \{ (x_i, y_i) \}_{i=1}^n$, we repeatedly train $s=1,000$ neural networks $g(x ; \theta^{(t)}) = \sum_{i=1}^d c_i^{(t)} \sigma( a_i^{(t)} x - b_i^{(t)} ), (t \in [s])$ with activation function $\sigma = $ periodic Gaussian, periodic Tanh and periodic ReLU.
The training is conducted by minimizing the square loss: $L(\theta) = \frac{1}{n} \sum_{i=1}^n | y_i - g( x_i ; \theta ) |^2$ using stochastic gradient descent (SGD) with learning rate $\eta >0$ and weight decay rate $\beta > 0$.
Note that the weight decay has an equivalent effect to the $L^2$-regularization. In the main theory, only $c$ is imposed $L^2$-regularization, and $(a,b)$ are strictly restricted in a compact domain $\II_A^m \times \TT$. However, in the experiments, all the parameters are imposed $L^2$-regularization for the sake of simplicity.
The initial parameters are drawn from the uniform distribution $U(-1,1)$.
All the parameters are updated by SGD, so that this is \emph{not} a random features method \citep{Rahimi2008} in which hidden parameters $(a,b)$ are frozen after initialization.
After the training, we obtain $sd$ sets of parameters $\{ (a_i^{(t)}, b_i^{(t)}, c_i^{(t)}) \}_{t \in [s], i \in [d]}$, and plot them in the $(a,b,c)$-space. ($c$ is visualized in color.)

\subsection{Heatmap of Ridgelet Spectrum}

Given a dataset $D_n = \{ (x_i, y_i) \}_{i=1}^{n}$, we approximately compute the ridgelet spectrum $R[f](a,b)$ of $f$ at every sample points $(a, b)$ by numerical integration:
\begin{align}
R[f](a,b) \approx \frac{1}{n} \sum_{i=1}^{n} y_i \sigma( a x_i - b ) \Delta x,
\end{align}
where $\Delta x$ is a normalizing constant, which is a constant because we assume that $x_i$ be uniformly distributed.
We remark that more sophisticated methods for the numerical computation of the ridgelet transform have been developed.
See \citet{Do2003} and \citet{Sonoda2014} for example.

\subsection{Results}

In \reffig{ridgelet}, we compare the scatter plots of SGD trained parameters and the heatmaps of ridgelet spectra. All six figures are obtained from the common data generating function $f(x) = \sin 2 \pi x$ on $[-1,1]$.
Despite the fact that the scatter plots and heatmaps are obtained from different procedures: numerical optimization and numerical integration, both figures share characteristics in common. For example, red and blue parameters in the scatter plots (a-c) concentrate in the area where the heatmaps (d-f) indicate the same colors.
Due to the periodic assumption, the ridgelet spectrum spreads infinitely in $b$ with period $T=1$. On the other hand, due to the weight decay and initial locations of parameters, the SGD trained parameters gather around the origin. Here, we used the uniform distribution $U(-1,1)$ for the initialization. We can understand that these differences between the scatter plot and ridgelet spectrum as the residual term $\Delta_{A,\beta}$ in the main theorem.
Another remarkable fact is that the SGD trained parameters essentially did not change their positions in $(a,b)$ from the initialized value. This is reasonable when the support of initial parameters overlap the ridgelet spectrum from the beginning. We can understand this phenomenon as the so-called lazy regime.

\section{RELATED WORKS}
\label{sec: related works}

A preprint by \citet{Sonoda2018a} is the closest result with non-periodic $\sigma$. Compared to their result, we improved a lot. In their work, the function class of the data generating gunctions $f$ remains to be an abstract RKHS $H^{\sigma \rho}$, the minimizer $\gstar$ is given as an abstract projection of $R[f]$ onto a closed subspace, a hyper-parameter $\rho$ in the ridgelet transform $R$ remains not specified, and neither finite models nor finite samples are discussed. As far as we have noticed, we are the first to have revealed that the finite empirical minimizers do converge to the ridgelet spectrum.

\paragraph{Earlier Global Convergence Results.}
In the past, many authors have investigated the local minima of deep learning.
However, these results have often posed strong assumptions such as that (A1) the activation function is limited to linear or ReLUs \citep{Kawaguchi2016,Soudry2016,Nguyen2017,Hardt2017,Lu2017, Yun2018}; (A2) the parameters are random \citep{Choromanska2015a,Poole2016,Pennington2018,Jacot2018,Lee2019,Frankle2019}; (A3) the input is subject to normal distribution \citep{Brutzkus2017}; or (A4) the target functions are low-degree polynomials or another sparse neural network \citep{Yehudai2019,Ghorbani2019}.
Due to these simplifying assumptions, we know very little about the minimizers themselves.
In this study, from the perspective of harmonic analysis, we present a stronger characterization of the distribution of parameters in the over-parametrized regime. As a result, our theory 
(A1') accepts a wide range of activation functions,
(A2') need not assume the randomness of parameter distributions,
(A3') need not specify the data distribution,
and (A4') preserves the universal approximation property of neural networks such as the density in $L^2$.

\paragraph{Mean-Field Theory.}
The \emph{mean-field theory} \citep{Rotskoff2018,Mei2018,Sirignano2020,Sirignano2020a} a.k.a. the \emph{gradient flow} theory \citep{Nitanda2017,Chizat2018,Arbel2019} has employed the integral representation and parameter distribution to prove the global convergence.
These lines of studies claim that for the stochastic gradient descent learning of two-layer networks,
the time evolution of a finite parameter distribution, say $\gd(t)$, with parameter number $d$ and continuous training time $t$,
asymptotically converges to the time evolution of the continuous parameter distribution as $d \to \infty$.
Here, the time evolution is described by a gradient flow, called \emph{the partial differential equation, the Wasserstein gradient flow, or the McKean-Vlasov equation}, $\frac{\dd}{\dd t} \gamma_\infty(t) = -\frac{1}{2}\nabla_\gamma \| f - S[\gamma_\infty(t)] \|^2$ with initial condition $\gamma_\infty(0) = \gamma_{init}$.
However, we should point out that these arguments oversights the null component in the parameter distributions. %
As we explained in \refapp{nullS}, the equation $f = S[\gamma]$ has an infinitely different solutions, say $\gamma_1$ and $\gamma_2$ that satisfy $S[\gamma_1] = S[\gamma_2]$ but $\gamma_1 \neq \gamma_2$.
Hence, even though the convergence  $S[\gd] \to S[\gamma_\infty]$ in the function space $L^2(P)$ is established, in general, we \emph{cannot} conclude the convergence  $\gd \to \gamma_\infty$ in the space of parameter distributions $\Git$.
This leaves the parameter distribution indeterminate.
Nevertheless, our numerical simulation results have shown a ``visual'' convergence.
By explicitly posing a regularization term on $\gamma$, we have specified the parameter distribution at the global minimum and have shown that the \emph{weak convergence in the space of parameter distributions}: $\gd \wto R[f]$.
(We remark that some authors consider \emph{noisy SGD}, which is equivalent to imposing the $L^2$-regularization.)

In order to avoid potential confusions, we provide supplementary explanations on the \emph{trick} behind the mean-field theory. In the mean-field theory, the gradient flow $\dd \gamma(t)/\dd t = - \nabla \| f - S[\gamma(t)] \|^2$ is often explained as the \emph{system of interacting particles} by identifying the parameters $\{ (\aa_i, b_i) \}_{i=1}^d$ as the coordinate system of $d$ physical particles. The particles obeys a \emph{non-linear equation of motion} with \emph{interacting potential} $I[\gamma](\aa,b) := \int K(\aa,b; \aa',b') \dd \gamma(\aa',b')$, where $K(\aa,b;\aa',b') := \int \sigma( \aa \cdot \xx - b ) \sigma( \aa' \cdot \xx - b' ) \dd P(\xx)$, which is naturally derived by expanding the square loss function. Based on this physical analogy, this potential seems natural. However, here is the trick because in the potential $I$, 
the null space $\ker S$ is eliminated by \emph{implicitly} applying $S$. Namely, since
\begin{align}
    I[\gamma](\aa,b)
    &= \int \sigma( \aa \cdot \xx - b )  S[\gamma](\xx) dP(\xx),
\end{align}
we can verify that $I[\gamma + \ker S] = I[\gamma]$.
This clearly indicates that the interactive potential is degenerate in $\gamma$,
and thus the mean-field theory would only show a weaker convergence result than our main results.

\paragraph{Lazy Learning.}
The \emph{lazy learning}, such as the \emph{neural tangent kernel} \citep{Jacot2018,Lee2019,Arora2019} and the \emph{strong lottery ticket hypothesis} \citep{Frankle2019}, employs a slightly different formulation of over-parametrization to investigate the inductive bias of deep learning.
These lines of studies draw much attention by radically claiming that the minimizers are very close to the initialized state.
In this study, we revealed that, in the (not lazy but) active regime, the shape of the parameter distribution converges to the ridgelet spectrum.
According to our results, lazy learning is reasonable when the initial parameter distribution covers the ridgelet spectrum in its support, since the initial parameters need not to be \emph{actively} updated. Furthermore, the lazy assumption can be reasonable when the data generating function $f$ is a low frequency function, and thus the ridgelet spectrum $R[f]$ concentrates around the origin, because the initial parameter distribution is typically a normal (or sometimes a uniform) distribution centered at the origin $(\aa,b) = (\bm{0},0)$ and thus eventually the initial parameters cover the ridgelet supectrum.

\paragraph{Implicit Regularization.}
Recently, gradient descent methods are said to impose \emph{implicit regularization} (see eg. \citealp{Zhang2017,Neyshabur.PhD,Gunasekar2018,Gunasekar2018a}),
which often motivates the lazy learning. Although we have no unifying formulation of the implicit regularization to the present, and thus we have simply employed the $L^2$-regularization, %
we may formulate the implicitly regularized problem as the minimization problem of $\loss_{imp}[\gamma;f,\gamma_{init}] := \| f - S[\gamma]\|_{L^2(P)}^2 + \beta \| \gamma - \gamma_{init} \|_{\Git}^2$ for a given initial parameter distribution $\gamma_{init}$ on $\II_A^m\times\TT$.
Then, immediately because $\loss_{imp}[\gamma;f,\gamma_{init}] = \loss[\gamma-\gamma_{init};f]$, we can conclude that the minimizer $\gamma_{imp}^*$ is given by 
$\gamma^* + \proj_{\to \ker S}[\gamma_{init}]$,
as $\beta \to 0$.
Namely, the implicitly regularized solution $\gamma_{imp}$ again meets a ridgelet spectrum $\gamma^*$ but also holds a null component $\proj_{\to \ker S}[\gamma_{init}]$. Investigation of the \emph{role-of-null-space} would be an interesting future work.

\section{CONCLUSION}

In this study, we have derived the unique explicit expression---the ridgelet spectrum with residual---of over-parametrized two-layer neural networks trained by regularized empirical square risk minimization.
To the present, many studies have proven the global convergence of deep learning.
However, we know very little about the minimizer itself because the settings are typically very simplified. %
To investigate the minimizers, we develop the ridgelet transform on the torus, %
which is a complete set of new ridgelet transform. 
The scatter plots of learned parameters have shown a very similar pattern to the ridgelet spectra, which supports our theoretical result. 
Although we considered an idealized ERM, the visual convergence suggested much more. %
Extending our main theorem to a more realistic settings is our important future work. 
Moreover, although we assumed two-layer and ridge regression, as often assumed in recent over-parametrized theories, we conjecture that for a deep network, say $f_2 \circ f_1$ for example, each intermediate layer converges to ridgelet spectrums as $S[R[f_2]]$ and $S[R[f_1]]$; and that for a general loss function $J$, if it is continuous, namely $\| \gamma \| \le C J(\gamma)$, then the minimizer is given as a certain modified version of $R[f]$ (like $R[fp/(\beta+p)]$).

\subsection{Further Discussions after Rebuttal}
The Main Theorems mathematically rigorously show that finite ERMers eventually converge to the \emph{unique closed-form solution} $R[fp/(\beta+p)]$. 
While conventional theories show the global convergence, our theory characterizes the limit point as the ridgelet transform, which complements the conventional theories. 
The uniqueness and the closed-form expression allow us to design theories at a higher resolution than, for example, those that simply assume and/or conclude a sub-Gaussian randomness of parameter distributions. 
For example, we can predict the shape of minimizers as presented in Sections \ref{sec:Rcalc} and \ref{app:experiments}. 
As for the quality of solutions,
by the uniqueness of the minimizer and the continuity of integral representation operator $S$, if the loss value of a current solution $\gamma_{local}$ is $\eps \ge 0$, then the difference vector $\Delta \gamma := \gamma_{local} - \gamma_{global}$ is as small as $O(\eps)$ in $L^2(\RR^m\times\RR)$. Therefore, it is reasonable to say that regardless of the training process, a near-optimal solution also has a similar shape with ridgelet spectrum.

In the mean-field theory, it is known that the parameter distribution converges to a stable distribution, a.k.a. a \emph{Gibbs distribution}, $\gamma_\infty \propto \exp(-\beta L)$ with regularization parameter $\beta$ and loss function $L$, under certain convergence conditions \citep{Mei2018, Tzen2020, Suzuki2020}. The existence of such a distribution is a natural consequence of the fact that SGD is a stochastic gradient flow induced by a locally convex function. Note, however, that the Gibbs distribution contains an unknown loss function $L$, so in general the limit point itself cannot be given explicitly. In other words, the Gibbs distribution is an \emph{equation} that encodes the sufficient conditions for a parameter distribution $\gamma$ to be a limit point. In order to obtain the limit point in closed form, we need to solve this equation.
The ridgelet transform can be understood as a closed-form solution for the Gibbs distribution.
(To be exact, however, 
this study does not fully consider the convergence conditions proposed in mean-field theories, 
simply because these are still developing, and the current version of the convergence conditions are quite restrictive.)
Again, closed-form solutions are more informative than equations.

\subsubsection*{Acknowledgements}
We thank the anonymous reviewers for their careful reading of our manuscript and their many insightful comments and suggestions.
We thank Taiji~Suzuki and Atsushi~Nitanda for productive comments on improving this study in many directions.
This work was supported by JSPS KAKENHI 18K18113, JST CREST JPMJCR1913, JPMJCR2015, and JST ACTX JPMJAX2004.

\bibliography{summary_library}

\begin{thebibliography}{}

\bibitem[Arbel et~al., 2019]{Arbel2019}
Arbel, M., Korba, A., SALIM, A., and Gretton, A. (2019).
\newblock
  \href{http://papers.nips.cc/paper/8876-maximum-mean-discrepancy-gradient-flow.pdf}{{Maximum
  Mean Discrepancy Gradient Flow}}.
\newblock In {\em Advances in Neural Information Processing Systems 32}, pages
  6481--6491.

\bibitem[Arora et~al., 2019]{Arora2019}
Arora, S., Du, S.~S., Hu, W., Li, Z., Salakhutdinov, R., and Wang, R. (2019).
\newblock
  \href{http://papers.nips.cc/paper/9025-on-exact-computation-with-an-infinitely-wide-neural-net.pdf}{{On
  Exact Computation with an Infinitely Wide Neural Net}}.
\newblock In {\em Advances in Neural Information Processing Systems 32}, pages
  8139--8148.

\bibitem[Arora et~al., 2018]{Arora2018a}
Arora, S., Ge, R., Neyshabur, B., and Zhang, Y. (2018).
\newblock \href{http://proceedings.mlr.press/v80/arora18b.html}{{Stronger
  Generalization Bounds for Deep Nets via a Compression Approach}}.
\newblock In {\em Proceedings of the 35th International Conference on Machine
  Learning}, volume~80, pages 254--263.

\bibitem[Barron, 1993]{Barron1993}
Barron, A.~R. (1993).
\newblock \href{http://doi.org/10.1109/18.256500}{{Universal approximation
  bounds for superpositions of a sigmoidal function}}.
\newblock {\em IEEE Transactions on Information Theory}, 39(3):930--945.

\bibitem[Bartlett et~al., 2017]{Bartlett2017}
Bartlett, P., Foster, D.~J., and Telgarsky, M. (2017).
\newblock
  \href{http://papers.nips.cc/paper/7204-spectrally-normalized-margin-bounds-for-neural-networks.pdf}{{Spectrally-normalized
  margin bounds for neural networks}}.
\newblock In {\em Advances in Neural Information Processing Systems 31}, pages
  6240--6249.

\bibitem[Brutzkus and Globerson, 2017]{Brutzkus2017}
Brutzkus, A. and Globerson, A. (2017).
\newblock \href{http://proceedings.mlr.press/v70/brutzkus17a.html}{{Globally
  Optimal Gradient Descent for a ConvNet with Gaussian Inputs}}.
\newblock In {\em Proceedings of The 34th International Conference on Machine
  Learning}, volume~70, pages 605--614.

\bibitem[Cand{\`{e}}s, 1998]{Candes.PhD}
Cand{\`{e}}s, E.~J. (1998).
\newblock {\em \href{https://searchworks.stanford.edu/view/9949708}{{Ridgelets:
  theory and applications}}}.
\newblock PhD thesis, Standford University.

\bibitem[Chizat and Bach, 2018]{Chizat2018}
Chizat, L. and Bach, F. (2018).
\newblock
  \href{https://papers.nips.cc/paper/7567-on-the-global-convergence-of-gradient-descent-for-over-parameterized-models-using-optimal-transport/}{{On
  the Global Convergence of Gradient Descent for Over-parameterized Models
  using Optimal Transport}}.
\newblock In {\em Advances in Neural Information Processing Systems 32}, pages
  3036--3046.

\bibitem[Choromanska et~al., 2015]{Choromanska2015a}
Choromanska, A., LeCun, Y., and {Ben Arous}, G. (2015).
\newblock
  \href{http://www.jmlr.org/proceedings/papers/v40/Choromanska15.pdf}{{Open
  Problem: The landscape of the loss surfaces of multilayer networks}}.
\newblock In {\em The 28th Annual Conference of Learning Theory}, volume~40,
  pages 1--5.

\bibitem[Do and Vetterli, 2003]{Do2003}
Do, M.~N. and Vetterli, M. (2003).
\newblock \href{http://doi.org/10.1109/TIP.2002.806252}{{The finite ridgelet
  transform for image representation}}.
\newblock {\em Image Processing, IEEE Transactions on}, 12(1):16--28.

\bibitem[Frankle and Carbin, 2019]{Frankle2019}
Frankle, J. and Carbin, M. (2019).
\newblock \href{https://openreview.net/forum?id=rJl-b3RcF7}{{The Lottery Ticket
  Hypothesis: Finding Sparse, Trainable Neural Networks}}.
\newblock In {\em International Conference on Learning Representations 2019},
  pages 1--42.

\bibitem[Ghorbani et~al., 2019]{Ghorbani2019}
Ghorbani, B., Mei, S., Misiakiewicz, T., and Montanari, A. (2019).
\newblock
  \href{http://papers.nips.cc/paper/9111-limitations-of-lazy-training-of-two-layers-neural-network.pdf}{{Limitations
  of Lazy Training of Two-layers Neural Network}}.
\newblock In {\em Advances in Neural Information Processing Systems 32}, pages
  9111--9121.

\bibitem[Gunasekar et~al., 2018a]{Gunasekar2018a}
Gunasekar, S., Lee, J., Soudry, D., and Srebro, N. (2018a).
\newblock
  \href{http://proceedings.mlr.press/v80/gunasekar18a.html}{Characterizing
  Implicit Bias in Terms of Optimization Geometry}.
\newblock In {\em Proceedings of the 35th International Conference on Machine
  Learning}, volume~80, pages 1832--1841.

\bibitem[Gunasekar et~al., 2018b]{Gunasekar2018}
Gunasekar, S., Lee, J.~D., Soudry, D., and Srebro, N. (2018b).
\newblock
  \href{http://papers.nips.cc/paper/8156-implicit-bias-of-gradient-descent-on-linear-convolutional-networks.pdf}{Implicit
  Bias of Gradient Descent on Linear Convolutional Networks}.
\newblock In {\em Advances in Neural Information Processing Systems 31}, pages
  9461--9471.

\bibitem[Hardt and Ma, 2017]{Hardt2017}
Hardt, M. and Ma, T. (2017).
\newblock \href{https://openreview.net/forum?id=ryxB0Rtxx}{{Identity Matters in
  Deep Learning}}.
\newblock In {\em International Conference on Learning Representations 2017},
  pages 1--14.

\bibitem[Helgason, 2011]{Helgason.new}
Helgason, S. (2011).
\newblock {\em \href{http://doi.org/10.1007/978-1-4419-6055-9}{{Integral
  Geometry and Radon Transforms}}}.
\newblock Springer-Verlag New York.

\bibitem[Jacot et~al., 2018]{Jacot2018}
Jacot, A., Gabriel, F., and Hongler, C. (2018).
\newblock
  \href{http://papers.nips.cc/paper/8076-neural-tangent-kernel-convergence-and-generalization-in-neural-networks.pdf}{{Neural
  Tangent Kernel: Convergence and Generalization in Neural Networks}}.
\newblock In {\em Advances in Neural Information Processing Systems 31}, pages
  8571--8580.

\bibitem[Kawaguchi, 2016]{Kawaguchi2016}
Kawaguchi, K. (2016).
\newblock
  \href{https://papers.nips.cc/paper/6112-deep-learning-without-poor-local-minima.pdf}{{Deep
  Learning without Poor Local Minima}}.
\newblock In {\em Advances in Neural Information Processing Systems 29}, pages
  586--594.

\bibitem[Kostadinova et~al., 2014]{Kostadinova2014}
Kostadinova, S., Pilipovi{\'{c}}, S., Saneva, K., and Vindas, J. (2014).
\newblock \href{http://doi.org/10.1080/10652469.2013.853057}{{The ridgelet
  transform of distributions}}.
\newblock {\em Integral Transforms and Special Functions}, 25(5):344--358.

\bibitem[Krizhevsky et~al., 2012]{Krizhevsky2012}
Krizhevsky, A., Sutskever, I., and Hinton, G.~E. (2012).
\newblock
  \href{http://papers.nips.cc/paper/4824-imagenet-classification-with-deep-convolutional-neural-networks.pdf}{{ImageNet
  Classification with Deep Convolutional Neural Networks}}.
\newblock In {\em Advances in Neural Information Processing Systems 25}, pages
  1097--1105.

\bibitem[Ledoux and Talagrand, 1991]{Ledoux1991}
Ledoux, M. and Talagrand, M. (1991).
\newblock {\em
  \href{https://www.springer.com/gp/book/9783642202117}{Probability in Banach
  Spaces}}.
\newblock Springer-Verlag Berlin Heidelberg.

\bibitem[Lee et~al., 2019]{Lee2019}
Lee, J., Xiao, L., Schoenholz, S.~S., Bahri, Y., Sohl-Dickstein, J., and
  Pennington, J. (2019).
\newblock
  \href{http://papers.nips.cc/paper/9063-wide-neural-networks-of-any-depth-evolve-as-linear-models-under-gradient-descent/}{{Wide
  Neural Networks of Any Depth Evolve as Linear Models Under Gradient
  Descent}}.
\newblock In {\em Advances in Neural Information Processing Systems 32}, pages
  8572--8583.

\bibitem[Lu and Kawaguchi, 2017]{Lu2017}
Lu, H. and Kawaguchi, K. (2017).
\newblock \href{http://arxiv.org/abs/1702.08580}{Depth Creates No Bad Local
  Minima}.
\newblock {\em arXiv preprint: 1702.08580}, pages 1--10.

\bibitem[Mei et~al., 2018]{Mei2018}
Mei, S., Montanari, A., and Nguyen, P.-M. (2018).
\newblock \href{http://doi.org/10.1073/PNAS.1806579115}{{A mean field view of
  the landscape of two-layer neural networks}}.
\newblock {\em Proceedings of the National Academy of Sciences},
  115(33):E7665--E7671.

\bibitem[Murata, 1996]{Murata1996}
Murata, N. (1996).
\newblock \href{http://doi.org/10.1016/0893-6080(96)00000-7}{{An integral
  representation of functions using three-layered betworks and their
  approximation bounds}}.
\newblock {\em Neural Networks}, 9(6):947--956.

\bibitem[Neyshabur, 2017]{Neyshabur.PhD}
Neyshabur, B. (2017).
\newblock {\em \href{http://arxiv.org/abs/1709.01953}{Implicit Regularization
  in Deep Learning}}.
\newblock PhD thesis, TOYOTA TECHNOLOGICAL INSTITUTE AT CHICAGO.

\bibitem[Neyshabur et~al., 2015]{Neyshabur2015}
Neyshabur, B., Tomioka, R., and Srebro, N. (2015).
\newblock
  \href{http://jmlr.org/proceedings/papers/v40/Neyshabur15.pdf}{{Norm-Based
  Capacity Control in Neural Networks}}.
\newblock In {\em Proceedings of The 28th Conference on Learning Theory},
  volume~40, pages 1--26.

\bibitem[Nguyen and Hein, 2017]{Nguyen2017}
Nguyen, Q. and Hein, M. (2017).
\newblock \href{http://proceedings.mlr.press/v70/nguyen17a/nguyen17a.pdf}{{The
  Loss Surface of Deep and Wide Neural Networks}}.
\newblock In {\em Proceedings of The 34th International Conference on Machine
  Learning}, volume~70, pages 2603--2612.

\bibitem[Nitanda and Suzuki, 2017]{Nitanda2017}
Nitanda, A. and Suzuki, T. (2017).
\newblock \href{http://arxiv.org/abs/1712.05438}{Stochastic Particle Gradient
  Descent for Infinite Ensembles}.
\newblock {\em arXiv preprint: 1712.05438}.

\bibitem[Pennington et~al., 2018]{Pennington2018}
Pennington, J., Schoenholz, S., and Ganguli, S. (2018).
\newblock
  \href{http://proceedings.mlr.press/v84/pennington18a/pennington18a.pdf}{{The
  emergence of spectral universality in deep networks}}.
\newblock In {\em Proceedings of the 21st International Conference on
  Artificial Intelligence and Statistics}, volume~84, pages 1924--1932.

\bibitem[Poole et~al., 2016]{Poole2016}
Poole, B., Lahiri, S., Raghu, M., Sohl-Dickstein, J., and Ganguli, S. (2016).
\newblock
  \href{http://papers.nips.cc/paper/6322-exponential-expressivity-in-deep-neural-networks-through-transient-chaos.pdf}{{Exponential
  expressivity in deep neural networks through transient chaos}}.
\newblock In {\em Advances in Neural Information Processing Systems 29}, pages
  3360--3368.

\bibitem[Rahimi and Recht, 2008]{Rahimi2008}
Rahimi, A. and Recht, B. (2008).
\newblock
  \href{http://papers.nips.cc/paper/3182-random-features-for-large-scale-kernel-machines.pdf}{{Random
  Features for Large-Scale Kernel Machines}}.
\newblock In Platt, J.~C., Koller, D., Singer, Y., and Roweis, S.~T., editors,
  {\em Advances in Neural Information Processing Systems 20}, pages 1177--1184.
  Curran Associates, Inc.

\bibitem[Rotskoff and Vanden-Eijnden, 2018]{Rotskoff2018}
Rotskoff, G. and Vanden-Eijnden, E. (2018).
\newblock
  \href{http://papers.nips.cc/paper/7945-parameters-as-interacting-particles-long-time-convergence-and-asymptotic-error-scaling-of-neural-networks.pdf}{{Parameters
  as interacting particles: long time convergence and asymptotic error scaling
  of neural networks}}.
\newblock In {\em Advances in Neural Information Processing Systems 31}, pages
  7146--7155.

\bibitem[Rubin, 1998]{Rubin.calderon}
Rubin, B. (1998).
\newblock \href{http://doi.org/10.1007/BF02475988}{{The Calder{\'{o}}n
  reproducing formula, windowed X-ray transforms, and radon transforms in
  $L^p$-spaces}}.
\newblock {\em Journal of Fourier Analysis and Applications}, 4(2):175--197.

\bibitem[Sirignano and Spiliopoulos, 2020a]{Sirignano2020}
Sirignano, J. and Spiliopoulos, K. (2020a).
\newblock \href{https://epubs.siam.org/doi/abs/10.1137/18M1192184}{Mean Field
  Analysis of Neural Networks: A Law of Large Numbers}.
\newblock {\em SIAM Journal on Applied Mathematics}, 80(2):725--752.

\bibitem[Sirignano and Spiliopoulos, 2020b]{Sirignano2020a}
Sirignano, J. and Spiliopoulos, K. (2020b).
\newblock
  \href{http://www.sciencedirect.com/science/article/pii/S0304414918306197}{Mean
  field analysis of neural networks: A central limit theorem}.
\newblock {\em Stochastic Processes and their Applications}, 130(3):1820--1852.

\bibitem[Sitzmann et~al., 2020]{Sitzmann2020}
Sitzmann, V., Martel, J. N.~P., Bergman, A.~W., Lindell, D.~B., and Wetzstein,
  G. (2020).
\newblock \href{http://arxiv.org/abs/2006.09661}{Implicit Neural
  Representations with Periodic Activation Functions}.
\newblock {\em arXiv preprint: 2006.09661}.

\bibitem[Sonoda et~al., 2018]{Sonoda2018a}
Sonoda, S., Ishikawa, I., Ikeda, M., Hagihara, K., Sawano, Y., Matsubara, T.,
  and Murata, N. (2018).
\newblock \href{https://arxiv.org/pdf/1805.07517}{{The global optimum of
  shallow neural network is attained by ridgelet transform}}.
\newblock {\em arXiv preprint: 1805.07517}, pages 1--14.

\bibitem[Sonoda and Murata, 2014]{Sonoda2014}
Sonoda, S. and Murata, N. (2014).
\newblock \href{http://doi.org/10.1007/978-3-319-11179-7_68}{{Sampling hidden
  parameters from oracle distribution}}.
\newblock In {\em 24th International Conference on Artificial Neural Networks
  (ICANN) 2014}, volume 8681, pages 539--546.

\bibitem[Sonoda and Murata, 2017]{sonoda2015}
Sonoda, S. and Murata, N. (2017).
\newblock \href{http://doi.org/10.1016/j.acha.2015.12.005}{{Neural network with
  unbounded activation functions is universal approximator}}.
\newblock {\em Applied and Computational Harmonic Analysis}, 43(2):233--268.

\bibitem[Soudry and Carmon, 2016]{Soudry2016}
Soudry, D. and Carmon, Y. (2016).
\newblock \href{http://arxiv.org/abs/1605.08361}{{No bad local minima: Data
  independent training error guarantees for multilayer neural networks}}.
\newblock {\em arXiv preprint: 1605.08361}, pages 1--12.

\bibitem[Starck et~al., 2010]{Starck2010}
Starck, J.-L., Murtagh, F., and Fadili, J.~M. (2010).
\newblock \href{http://doi.org/10.1017/CBO9780511730344.006}{{The ridgelet and
  curvelet transforms}}.
\newblock In {\em Sparse Image and Signal Processing: Wavelets, Curvelets,
  Morphological Diversity}, pages 89--118. Cambridge University Press.

\bibitem[Suzuki, 2020]{Suzuki2020}
Suzuki, T. (2020).
\newblock
  \href{https://proceedings.neurips.cc/paper/2020/hash/df1a336b7e0b0cb186de6e66800c43a9-Abstract.html}{Generalization
  bound of globally optimal non-convex neural network training: Transportation
  map estimation by infinite dimensional Langevin dynamics}.
\newblock In {\em Advances in Neural Information Processing Systems 33}, pages
  19224--19237.

\bibitem[Tzen and Raginsky, 2020]{Tzen2020}
Tzen, B. and Raginsky, M. (2020).
\newblock \href{https://arxiv.org/abs/2002.01987}{A mean-field theory of lazy
  training in two-layer neural nets: entropic regularization and controlled
  McKean-Vlasov dynamics}.
\newblock {\em arXiv preprint: 2002.01987}.

\bibitem[Yehudai and Shamir, 2019]{Yehudai2019}
Yehudai, G. and Shamir, O. (2019).
\newblock
  \href{http://papers.nips.cc/paper/8886-on-the-power-and-limitations-of-random-features-for-understanding-neural-networks.pdf}{{On
  the Power and Limitations of Random Features for Understanding Neural
  Networks}}.
\newblock In {\em Advances in Neural Information Processing Systems 32}, pages
  6598--6608.

\bibitem[Yun et~al., 2018]{Yun2018}
Yun, C., Sra, S., and Jadbabaie, A. (2018).
\newblock \href{https://openreview.net/forum?id=BJk7Gf-CZ}{{Global Optimality
  Conditions for Deep Neural Networks}}.
\newblock In {\em International Conference on Learning Representations 2018},
  pages 1--14.

\bibitem[Zhang et~al., 2017]{Zhang2017}
Zhang, C., Bengio, S., Hardt, M., Recht, B., and Vinyals, O. (2017).
\newblock \href{https://openreview.net/pdf?id=Sy8gdB9xx}{{Understanding deep
  learning requires rethinking generalization}}.
\newblock In {\em International Conference on Learning Representations 2017},
  pages 1--15.

\end{thebibliography}

\newpage
\onecolumn
\appendix

\section{CHEAT SHEET FOR RIDGELET TRANSFORM ON $\TT$} \label{sec:cheat}

We identify the torus $\TT := \RR / T\ZZ$ as $[-T/2,T/2)$ some $T>0$. We write $\omega_n := 2 \pi n / T$ for every $n \in \ZZ$.

\subsection{Fourier Transforms and Fourier Expansions}
\paragraph{Fourier Transform on $\TT$, or Fourier Series Expansion.}
Let $T>0$. For any $f \in L^2([-T/2,T/2])$,
\begin{align}
    \widehat{f}(n) &:= \frac{1}{T} \int_{-T/2}^{T/2} f(t) e^{-i \omega_n t} \dd t, \\
    f(t) &= \lim_{N \to \infty} \sum_{n=-N}^N \widehat{f}(n) e^{i\omega_n t}.
\end{align}
In particular, the convolution theorem holds:
\begin{align}
    \widehat{f * g}(n) = T \widehat{f}(n) \widehat{g}(n)
\end{align}

\paragraph{Fourier Transform on $\RR^m$.}
In order to avoid the potential confusion, we write $\sharp$ and $\flat$ for the Fourier transform on $\RR^m$: %
\begin{align}
     f^\sharp(\xxi) &:= \int_{\RR^m} f(\xx ) e^{-i \xx  \cdot \xxi } \dd \xx , \quad \xx  \in \RR^m \\
      f^\flat(\xxi) &:= \frac{1}{(2\pi)^m}\int_{\RR^m} f(\xx ) e^{i \xx  \cdot \xxi } \dd \xx , \quad \xxi \in \RR^m \\
     f(\xx ) &= \frac{1}{(2 \pi)^m} \int_{\RR^m} f^\sharp(\xxi) e^{i \xx  \cdot \xxi} \dd \xxi, \quad \xx  \in \RR^m.
\end{align}

\subsection{Ridgelet Transform}
\label{app: ridglet transform}
Here we introduce a general form of ridgelet transforms (\ref{general ridgelet transfomrs}) in terms of another bounded periodic funcition $\rho$.  In the main body, we use this theory in the case of $\rho=\sigma$. Assumption \ref{asm: admissible condition} corresponds to (\ref{admissible 1}) and (\ref{admissible 2}).
\paragraph{Integral Representation.} Let $\lambda$ be a finite Borel measure on $\RR^m\times\TT$. For any $\gamma \in L^2(\lambda)$ and $\sigma \in L^\infty(\TT)$,
\begin{align}
    S_\lambda[\gamma](\xx) &:= \int_{\RR^m\times\TT} \gamma(\aa,b)\sigma(\aa\cdot\xx-b)\dd\lambda(\aa, b), \quad \xx \in \RR^m
\end{align}

\paragraph{Ridgelet Transform.} For any $f \in L^1(\RR^m)$ and $\rho \in L^\infty(\TT)$,
\begin{align}
    R[f](\aa,b) &:= \int_{\RR^m} f(\xx ) \overline{\rho(\aa\cdot \xx  - b)} \dd \xx , \quad (\aa,b) \in \RR^m \times \TT\label{general ridgelet transfomrs}
\end{align}
If $\rho$ satisfies Assumption \ref{asm: admissible condition} (admissible with itself), then we can extend the ridgelet transform to $f \in L^2(\RR^m)$.

\paragraph{Adjoint Operator.}
For $\gamma\in {\rm Im}(R:L^2(\RR^m)\to L^2(\RR^m\times\TT))$,
\begin{align}
    R^*[\gamma](\xx ) &:= \int_{\RR^m \times \TT} \gamma(\aa,b) \rho( \aa\cdot \xx  - b ) \dd \aa\dd b, \quad \xx  \in \RR^m
\end{align}

\paragraph{Reconstruction Formula.}
Let $\rho, \sigma \in L^2(\TT)$ satisfy the admissibility conditions
\begin{align}
    &T^{m+1} \sum_{n \neq 0} \frac{\overline{\widehat{\rho}(n)}\widehat{\sigma}(n)}{|n|^m} = 1,\label{admissible 1} \\
    & \overline{\widehat{\rho}(0)}\widehat{\sigma}(0) = 0 \quad \Longleftrightarrow \quad \int_\TT \int_\TT \overline{\widetilde{\rho}(s-t)}\sigma(t) \dd t \dd s = \left(\int_\TT \overline{\rho(s)} \dd s \right) \left( \int_\TT \sigma(t) \dd t \dd s \right) = 0. \label{admissible 2}
\end{align}
Then, for any $f \in L^1(\RR^m)$ such that $f^\sharp \in L^1(\RR^m)$, we have
\begin{align}
    \lim_{A \to \infty} \Sa[ R[f] ]
    &= f, \quad \mbox{in a.e. and } L^1.
\end{align}
Furthermore, if $\rho$ is admissible with itself, namely $T^{m+1}\sum_{n \not 0}|\widehat{\rho}(n)|^2|n|^{-m}=1$ and $\widehat{\rho}(0)=0$, then for any $f \in L^2(\RR^m)$
\begin{align}
    R^*[R[f]] = f, \quad \mbox{in } L^2.
\end{align}

\paragraph{Plancherel formula.}
Suppose $\rho \in L^\infty(\TT)$ to be admissible with itself. Then, for any $f,g \in L^2(\RR^m)$,
\begin{align}
    \iprod{R[f],R[g]}_{L^2(\RR^m\times\RR)} = \iprod{f,g}_{L^2(\RR^m)}.
\end{align}

See Appendix \ref{sec: inversion proof} for the proofs of reconstruction formula and Plancherel formula.
We remark that if $\sigma$ and $\rho$ are \emph{not} admissible with condition $T^{m+1} \sum_{n \neq 0} \frac{\overline{\widehat{\rho}(n)}\widehat{\sigma}(n)}{|n|^m} = 0$, then the reconstruction formula degenerates as
\begin{align}
    S[R[f]] = 0,
\end{align}
for any $f$. This is immediate from the proof of reconstruction formula.
This indicates that $\gamma_0 := R[f]$ becomes a null element of $S$ and thus $S$ has a non-trivial null space.

\subsection{Examples of Admissible and Non-admissible Functions}
The admissibility condition (AC) is not a strong requirement because it requires that $\sigma$ and $\rho$ are \emph{not} orthogonal to each other in the $|n|^{-m}$-weighted $\ell^2$-space. 

Let us consider the case when $\sigma$ is a periodic ReLU with period $T=1$:
\begin{align}
\sigma(t) = \begin{cases}
0 -1/8, & t \in [-1/2,0] \\
t - 1/8, & t \in [0,1/2].
\end{cases}
\end{align}
Then, the Fourier coefficients are given by 
\begin{align}
    \widehat{\sigma}(n) = \begin{cases}
0, & n \mbox{ even} \\
\frac{-2 + i \pi}{4 n^2 \pi^2}, & n \mbox{ odd}.
\end{cases}
\end{align}

\paragraph{ReLU.}
Therefore, the $\sigma$ can satisfy the admissibility condition (AC) with itself, namely $\rho_{\rm relu} = \sigma$, if it is appropriately normalized.

\paragraph{Cos.}
Recall that $\widehat{\cos m \pi t}(n)$ is always zero if $n$ is odd. Hence, $\rho_{\cos,m}(t):=\cos m \pi t$ \emph{cannot} satisfy the AC with ReLU $\sigma$ because $\overline{\widehat{\rho_{\cos,m}}}(n) \widehat{\sigma}(n) \equiv 0$ for all $n \in \ZZ$. As a result, the reconstruction  fails as $S[R[f;\rho_{\cos,m}]] = 0$ for any $m \in \NN$. 

\paragraph{Sin.}
On the other hand, $\widehat{\sin m \pi t}(n)$ is not zero for some odd $n$. Hence, $\rho_{\sin,m}(t)=\sin m \pi t$ \emph{can} satisfy the AC with ReLU $\sigma$ if it is appropriately normalized by a constant $C_m$. 

\paragraph{Difference of Admissible Functions}
Finally, let us consider the difference $\rho_{diff,1,2} := \rho_1 - \rho_2$ of two admissible functions $\rho_1$ and $\rho_2$. By the linearity of the AC, this difference \emph{cannot} satisfy the AC because $\sum_n \overline{\widehat{\rho}}(n) \sigma(n)=\sum_n \overline{\widehat{\rho_1}}(n) \sigma(n) - \sum_n \overline{\widehat{\rho_2}}(n) \sigma(n) = 0$.

Figure \ref{fig:reconst} summarizes these examples. We can visually confirm that all the admissible (a'ble) examples show different ridgelet spectrum $\gamma = R[f;\rho]$ but reproduces the original signal; and that all the non-admissible (not a'ble) examples show non-zero ridgelet spectrum but results in the null function $S[R[f]] \equiv 0$. We remark that reconstruction results are not exactly the original nor null function due to the numerical error.

\begin{figure}[H]
        \centering
        \begin{subfigure}[c]{0.2\textwidth}
                \includegraphics[width=\linewidth]{exp/sin02pt_n1000_prelu_hp050_2d.png}
        \end{subfigure}%
        \begin{subfigure}[c]{0.2\textwidth}
                \includegraphics[width=\linewidth]{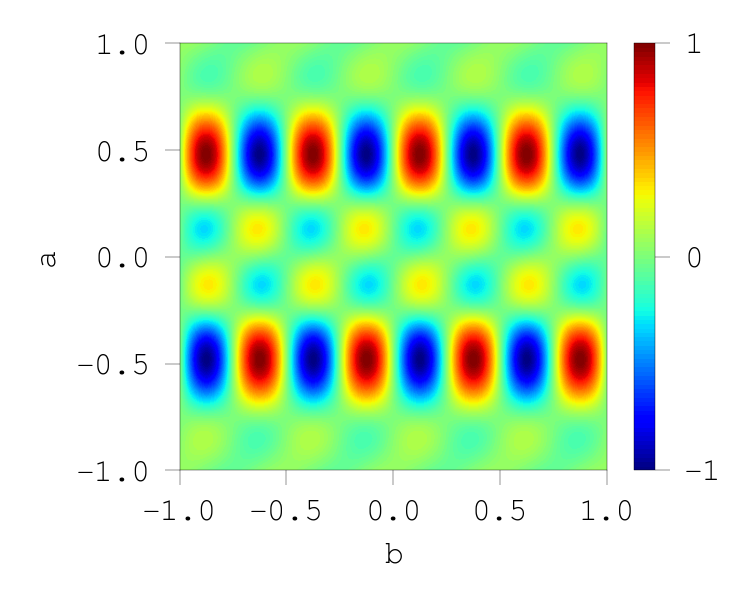}
        \end{subfigure}%
        \begin{subfigure}[c]{0.2\textwidth}
                \includegraphics[width=\linewidth]{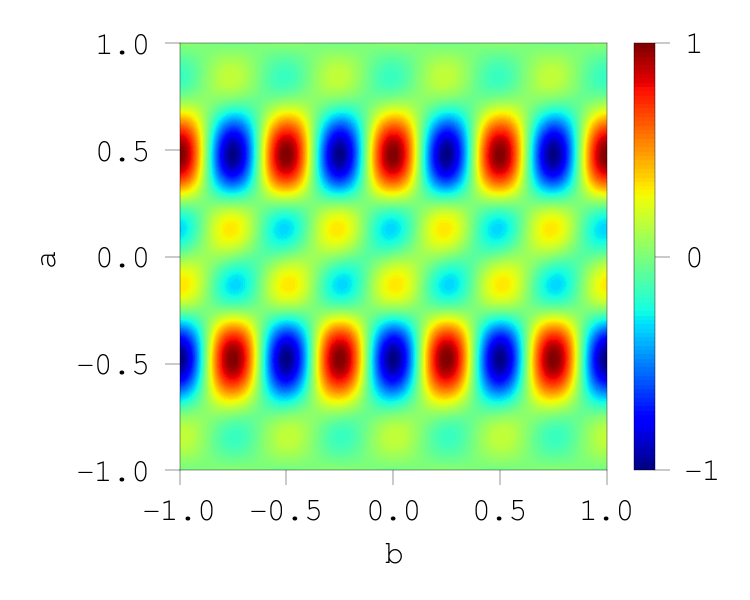}
        \end{subfigure}%
        \begin{subfigure}[c]{0.2\textwidth}
                \includegraphics[width=\linewidth]{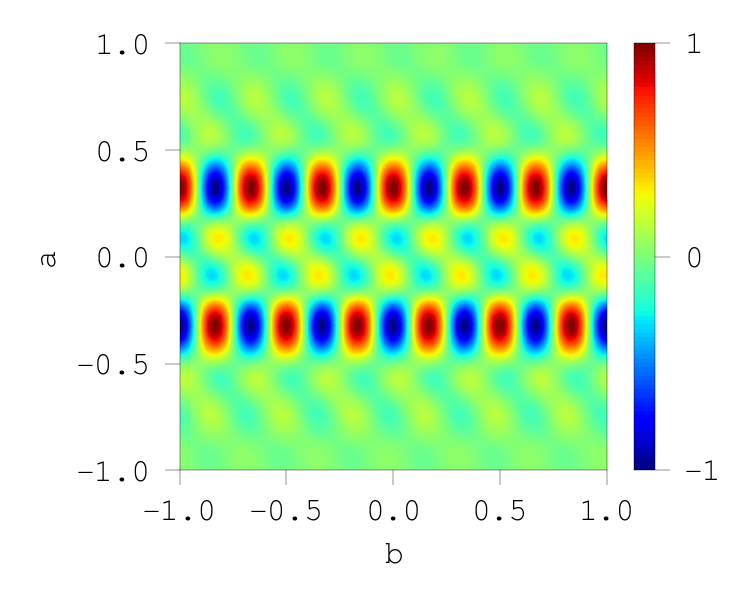}
        \end{subfigure}%
        \begin{subfigure}[c]{0.2\textwidth}
                \includegraphics[width=\linewidth]{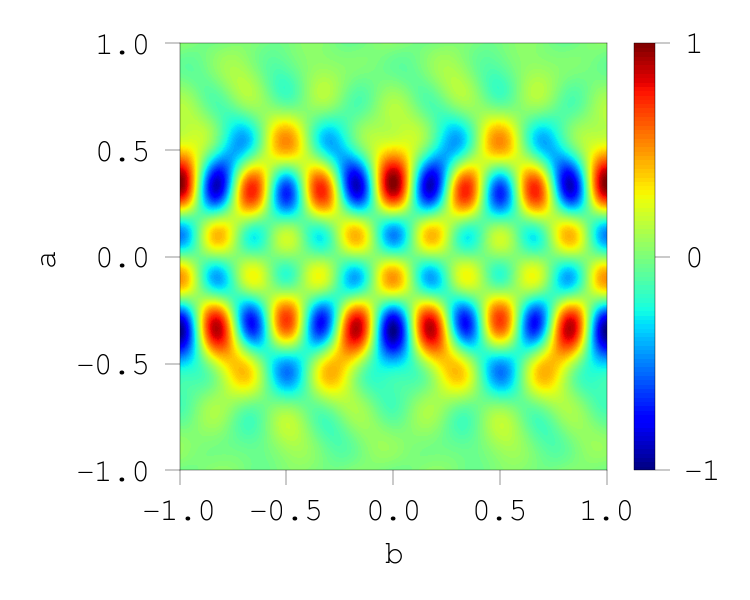}
        \end{subfigure}\\
        \begin{subfigure}[c]{0.2\textwidth}
                \includegraphics[width=\linewidth]{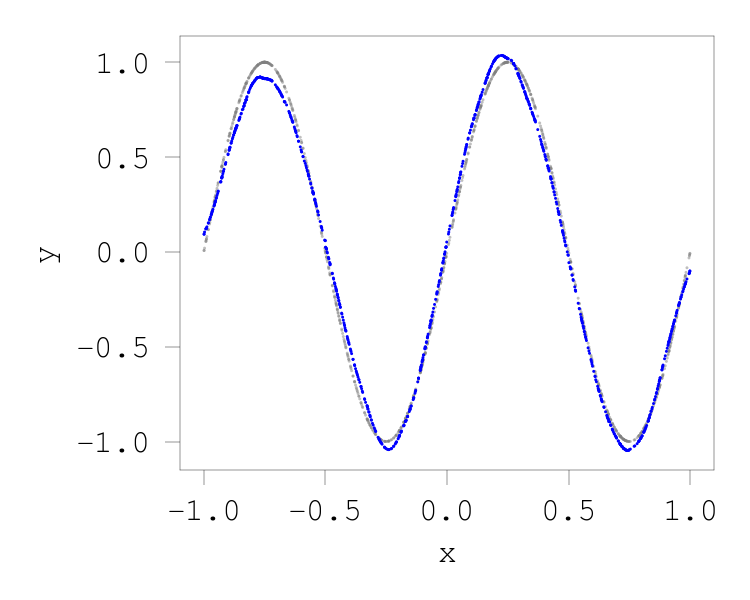}
                \caption{$\rho = \sigma$\\(a'ble)}
        \end{subfigure}%
        \begin{subfigure}[c]{0.2\textwidth}
                \includegraphics[width=\linewidth]{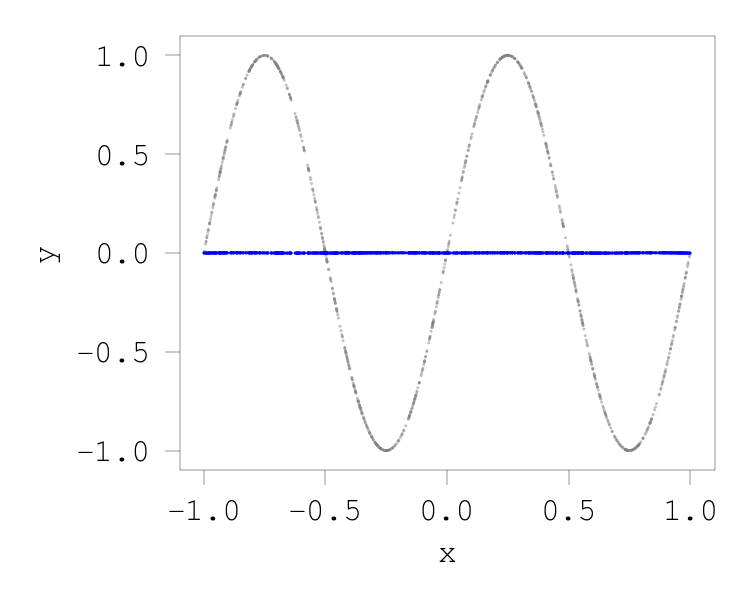}
                \caption{$\rho(t) = \cos 2 \pi t$ \\ (not a'ble)} \label{fig:reconst-cos}
        \end{subfigure}%
        \begin{subfigure}[c]{0.2\textwidth}
                \includegraphics[width=\linewidth]{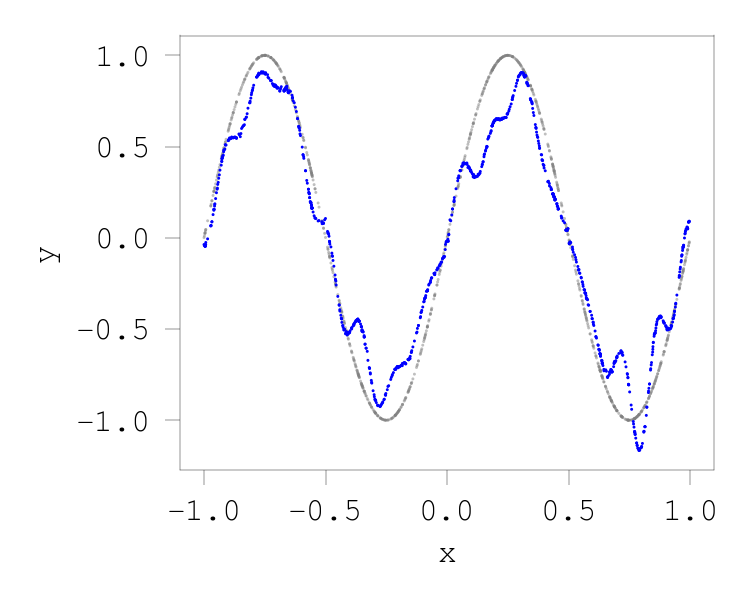}
                \caption{$\rho(t) = \sin 2 \pi t$ \\ (a'ble)}
        \end{subfigure}%
        \begin{subfigure}[c]{0.2\textwidth}
                \includegraphics[width=\linewidth]{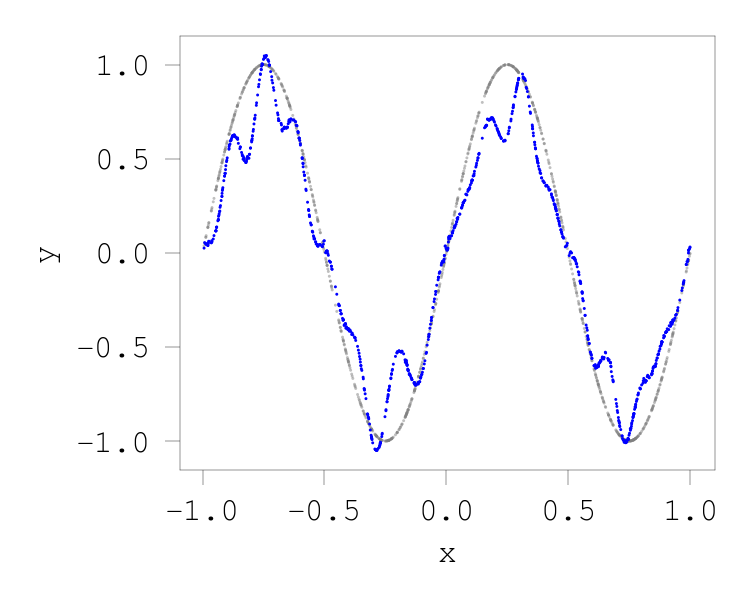}
                \caption{$\rho(t) = \sin 3 \pi t$ \\ (a'ble)}
        \end{subfigure}%
        \begin{subfigure}[c]{0.2\textwidth}
                \includegraphics[width=\linewidth]{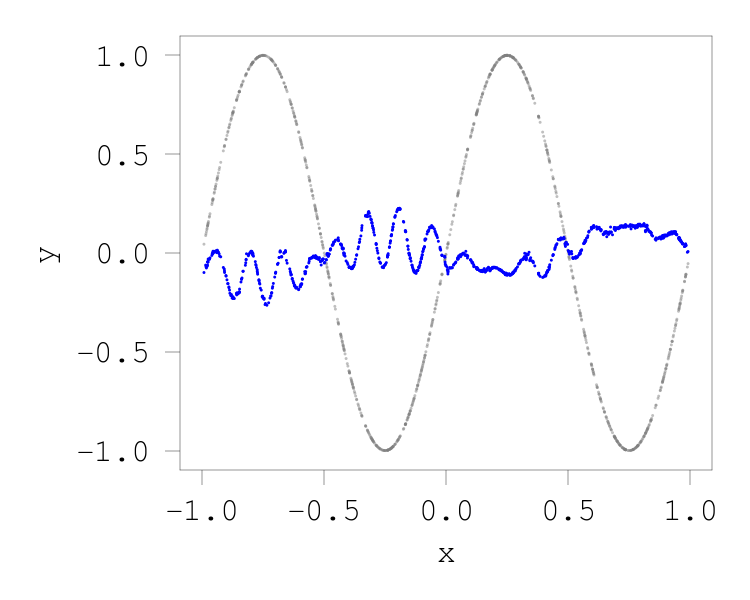}
                \caption{$\rho(t) = \sin 2 \pi t - \sin 3 \pi t$ \\ (not a'ble)}
        \end{subfigure}
        \caption{Ridgelet spectrum $R[f](a,b)$ (top) and the reconstruction result $S[R[f]](x)$ (bottom) with $\sigma = \mathrm{ReLU}$ and $f(x)=\sin 2 \pi x$ for a variety of $\rho$'s.} \label{fig:reconst}
\end{figure}

\subsection{Non-injectivity and Null Space of $S$} \label{sec:nullS}

As suggested from the previous constructive examples, there are infinitely many different solutions to the equation $S[\gamma]=f$.  Namely, suppose that two different functions $\rho_1$ and $\rho_2$ satisfy the admissibility condition, and let $\gamma_1 := R[f;\rho_1]$ and $\gamma_2 := R[f;\rho_2]$. Then, $\gamma_1 \neq \gamma_2$ but $S[\gamma_1] = f$ and $S[\gamma_2] = f$ by the reconstruction formula. This clearly implies the non-triviality of the null space $\ker S$. In general, a complete specification of $\ker S$ is very difficult.
We remark that non-injectivity occurs not only for infinite/continuous setup but also for finite/discrete setup. Indeed, Figure \ref{fig:reconst-cos} shows a clear null function $S[R[f]] \equiv 0$ as the reconstruction result, while the spectrum $R[f]$ is not a null function. These figures are obtained by numerical integration of $R[f]$ and $S[R[f]]$, which are finite dimensional approximations.

One major conclusion of this study is that if the solutions are restricted by $L^2$-regularization, then we have a unique ridgelet function $\rho = \sigma$. In general, the $L^2$-regularization provides the minimum norm solution. Therefore, we can understand that among infinitely many different solutions $\gamma = R[f;\rho]$, the $R[f;\sigma]$ achieves the minimum norm solution.

\subsection{Ridgelet Calculus} \label{sec:Rcalc}
We list some handy Fourier-like formulas for neural networks. Since regularized optimization converges to the ridgelet spectrum, if $f$ is modified, then $R[f]$ changes in accord with the following formula.
We do not use them in the main contents. 

\paragraph{Fourier Slice Theorem.}
In particular, $R$ has a Fourier expression:
\begin{align}
    R[f](\aa,b)
    &= \sum_{n=-\infty}^\infty f^\sharp(\omega_n \aa) \overline{\widehat{\rho}(n)} e^{i \omega_n b}
\end{align}
The Fourier slice theorem is originally for Radon transform (eg., see \citealp{Helgason.new}). We refer to \citet{Kostadinova2014} and \citet{sonoda2015} for other versions.
\begin{proof} 
Since
    $\frac{1}{T} \int_{-T/2}^{T/2} \overline{\rho(\aa\cdot \xx  - b)} e^{-i \omega_n b} \dd b 
    = \overline{\widehat{\rho}(n)} e^{- i \omega_n \aa\cdot \xx },$
we have
\begin{align*}
    R[f](\aa,b)
    = \sum_{n=-\infty}^\infty \left[ \int_{\RR^m} f(\xx ) \overline{\widehat{\rho}(n)} e^{-i \omega_n \aa\cdot \xx } \dd \xx  \right] e^{i \omega_n b}
    = \sum_{n=-\infty}^\infty f^\sharp(\omega_n \aa) \overline{\widehat{\rho}(n)} e^{i \omega_n b}
\end{align*}
\end{proof}

\paragraph{Ridgelet Calculus in $f$.}
\begin{align}
    &R[f(\cdot - \yy) ; \rho](\aa,b)
    = R[f ; \rho ](\aa, b + \aa \cdot \yy) \\
    &R[f(s \cdot) ; \rho](\aa,b)
    = R[f ; \rho ](\aa/s, b)/|s|^m \\
    &R[ \partial_i f ; \rho ](\aa,b)
    = a_i \partial_b R[f ; \rho](\aa,b)
    = -a_i R[f ; \rho'](\aa,b) \\
    &R[ \triangle f ; \rho](\aa,b)
    = |\aa|^2 \partial_b^2 R[f;\rho](\aa,b)
    = |\aa|^2 R[f;\rho^{(2)}](\aa,b).
\end{align}

\paragraph{Ridgelet Calculus in $\rho$.}
\begin{align}
    &R[f;\rho( \cdot - t)](\aa,b)
    = R[f ; \rho ](\aa,b+t) \\
    &R[f;\rho( s \cdot)](\aa,b) 
    = R[f ; \rho ](s\aa,sb) \\
    &R[f ; \rho' ](\aa,b)
    = -\partial_b R[f ; \rho](\aa,b)
    = -ai^{-1}R[ \partial_i f ; \rho ](\aa,b).
\end{align}

\paragraph{Convolution Theorem.}
We have
\begin{align}
    R[f * g ; \rho * \sigma](\aa,b) = \int_{\RR} R[f;\rho](\aa,b') R[g;\sigma](\aa,b-b') \dd b'.
\end{align}
\begin{proof}
According to the Fourier slice theorem,
\begin{align}
    \frac{1}{T} \int_{\TT} R[f;\rho](\aa,b) e^{-i \omega_n b} \dd b &= f^\sharp(\omega_n \aa) \overline{\widehat{\rho}(n)}.
\end{align}
Therefore,
\begin{align}
    &\frac{1}{T}\int_{\TT} R[ f * g ; \rho * \sigma](\aa,b) e^{-i\omega_n b}\dd b \\
    &= T f^\sharp(\omega_n \aa) g^\sharp(\omega_n \aa) \overline{\widehat{\rho}(n)\widehat{\sigma}(n)} \\
    &= T\left(\frac{1}{T}\int_{\TT} R[f;\rho](\aa,b) e^{-i \omega_n b} \dd b \right) \left( \frac{1}{T}\int_{\TT} R[g;\sigma](\aa,b) e^{-i \omega_n b} \dd b \right) \\
    &= \frac{1}{T}\int_{\RR} R[f;\rho](\aa,\cdot) * R[g;\sigma](\aa,\cdot) (b) e^{-i \omega_n b} \dd b. \qedhere
\end{align}
\end{proof}

\section{REGULARIZED SQUARE LOSS MINIMIZATION IN HILBERT SPACE} \label{app:tikhonov}
Let $G, F$ be Hilbert spaces endowded with the inner products $\iprod{\cdot, \cdot}_G$ and $\iprod{\cdot, \cdot}_F$, respectively, and $S : G \to F$ be a densely defined closed linear operator.

For a given $f \in F$, we find $\gamma \in G$ satisfying
\begin{align}
S[\gamma] = f.
\end{align}
For this problem, we have the following.
\begin{prop}
\label{minima for tikhonov regularization}
Let $f\in F$. Then for every $\beta >0$, we have
\begin{align}
\argmin_{\gamma \in G}\left( \| S[\gamma] - f \|_F^2 + \beta \| \gamma \|_G^2\right)
= (\beta + S^* S)^{-1} S^*[f],
\end{align}
where $S^*:F\rightarrow G$ denotes the adjoint operator of $S$.
\end{prop}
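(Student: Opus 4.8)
The plan is to verify the claimed formula directly by a ``complete the square'' argument, once the right-hand side has been made sense of. First I would set $\gamma^{\star} := (\beta + S^*S)^{-1}S^*[f]$ and check that it is well defined. Since $S$ is densely defined and closed, von Neumann's theorem gives that $S^*S$ is a non-negative self-adjoint operator whose domain is dense (indeed a core for $S$); hence $\beta + S^*S$ is self-adjoint and satisfies $\langle (\beta + S^*S)\gamma, \gamma\rangle_G \ge \beta\|\gamma\|_G^2$ on its domain, so its spectrum lies in $[\beta,\infty)$ and it is a bijection from $\mathrm{dom}(S^*S)$ onto $G$ with bounded inverse of norm at most $1/\beta$. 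Applying this inverse to $S^*[f] \in G$ (here one reads $f \in \mathrm{dom}(S^*)$, which is automatic in this paper's setting since $S_\lambda$ is bounded by Proposition~\ref{prop: boundedness of S}, so that $S^*$ is everywhere defined) produces $\gamma^{\star} \in \mathrm{dom}(S^*S) \subseteq \mathrm{dom}(S)$ with $(\beta + S^*S)\gamma^{\star} = S^*[f]$.

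Next I would show that $\gamma^{\star}$ is the unique minimizer over $\mathrm{dom}(S)$. For arbitrary $\gamma \in \mathrm{dom}(S)$ put $h := \gamma - \gamma^{\star} \in \mathrm{dom}(S)$ and expand
\begin{align*}
\| S[\gamma] - f\|_F^2 + \beta\|\gamma\|_G^2
&= \|S[\gamma^{\star}] - f\|_F^2 + \beta\|\gamma^{\star}\|_G^2 \\
&\quad + 2\,\mathrm{Re}\big[\, \langle S[\gamma^{\star}] - f,\, S[h]\rangle_F + \beta\langle \gamma^{\star}, h\rangle_G \,\big] + \|S[h]\|_F^2 + \beta\|h\|_G^2 .
\end{align*}
The cross term vanishes: since $\gamma^{\star} \in \mathrm{dom}(S^*S)$ one has $\langle S[\gamma^{\star}], S[h]\rangle_F = \langle S^*S[\gamma^{\star}], h\rangle_G$, and since $f \in \mathrm{dom}(S^*)$ one has $\langle f, S[h]\rangle_F = \langle S^*[f], h\rangle_G$, so the bracket equals $\langle (\beta + S^*S)\gamma^{\star} - S^*[f],\, h\rangle_G = 0$. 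Hence $\| S[\gamma] - f\|_F^2 + \beta\|\gamma\|_G^2 = \|S[\gamma^{\star}] - f\|_F^2 + \beta\|\gamma^{\star}\|_G^2 + \|S[h]\|_F^2 + \beta\|h\|_G^2$, which is at least the value at $\gamma^{\star}$, with equality iff $\|h\|_G = 0$, i.e. $\gamma = \gamma^{\star}$. This simultaneously establishes existence of the minimizer, its uniqueness, and the formula.

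The only genuinely delicate step is the operator-theoretic input in the first paragraph --- that $\beta + S^*S$ is boundedly invertible on all of $G$ --- which for unbounded $S$ I would quote from von Neumann's theorem rather than reprove; everything after that is elementary Hilbert-space algebra. I note that in every application in this excerpt $S = S_\lambda$ is bounded, so $S^*S$ is bounded, $\beta + S^*S \succeq \beta\,\id$ is invertible by a Neumann series, $S^*$ is everywhere defined, and no subtlety remains; in that case an alternative existence/uniqueness proof is simply the projection theorem applied to the subspace $\{(S[\gamma], \sqrt{\beta}\,\gamma) : \gamma \in G\} \subseteq F \oplus G$, which is closed because $\|(S[\gamma], \sqrt{\beta}\,\gamma)\|^2 \ge \beta\|\gamma\|_G^2$.
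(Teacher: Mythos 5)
Your proof is correct, and it is the same ``complete the square'' strategy as the paper's, executed with a slightly different decomposition. The paper factors the quadratic form through the operator square root, rewriting the objective as $\| \sqrt{\beta + S^*S}[\gamma] - \sqrt{\beta+S^*S}^{-1}S^*[f]\|_G^2$ plus a $\gamma$-independent constant, whereas you expand around the candidate $\gamma^{\star} = (\beta+S^*S)^{-1}S^*[f]$ and show the cross term vanishes by adjointness, leaving the nonnegative increment $\|S[h]\|_F^2 + \beta\|h\|_G^2$. Your variant has two modest advantages: it avoids invoking the functional calculus for $\sqrt{\beta+S^*S}$ and its inverse, and it makes explicit the operator-theoretic inputs the paper leaves tacit --- that $S^*S$ is self-adjoint and nonnegative by von Neumann's theorem, that $\beta + S^*S$ is boundedly invertible with $\|(\beta+S^*S)^{-1}\|\le 1/\beta$, and that one must read $f\in\mathrm{dom}(S^*)$ and restrict the minimization to $\mathrm{dom}(S)$ in the unbounded case (both automatic here since $S_\lambda$ is bounded by Proposition~\ref{prop: boundedness of S}). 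You also get uniqueness explicitly from the strict positivity of the $\beta\|h\|_G^2$ term, which the paper's one-line conclusion glosses over. Nothing is missing.
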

\begin{proof}
A direct computation gives
\begin{align*}
\lefteqn{\| S[\gamma] - f \|_F^2 + \beta \| \gamma \|_G^2}\\
&= \iprod{S[\gamma], S[\gamma]}_F - 2 \Re \iprod{  S[\gamma], f}_F + \iprod{f, f}_F + \beta \iprod{\gamma, \gamma}_G \\
&= \iprod{\sqrt{\beta + S^* S}[\gamma], \sqrt{\beta + S^* S}[\gamma]}_G
- 2 \Re \iprod{ \sqrt{\beta + S^* S}[\gamma], \sqrt{\beta + S^* S}^{-1} S^*[f]}_G + \iprod{f, f}_F \\
&= \| \sqrt{\beta + S^* S}[\gamma]- \sqrt{\beta + S^* S}^{-1} S^*[f]\|_G^2 
+ \iprod{f, \beta(\beta + SS^*)^{-1}[f]}
\end{align*}
Therefore, the objective functional attains the minimum at $\gstar = (\beta + S^* S)^{-1} S^*[f]$.
\end{proof}

\begin{prop} \label{limit projection}
Suppose that $\gamma_0 \in G$ satisfies $f = S[\gamma_0]$. Then,
\begin{align}
\lim_{\beta \searrow 0} (\beta + S^* S)^{-1} S^*[f] = \proj_{G \to (\ker S)^\perp} [\gamma_0].
\end{align}
\end{prop}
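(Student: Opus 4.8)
\textbf{Proof proposal for Proposition~\ref{limit projection}.}
The plan is to exploit the spectral calculus for the (self-adjoint, positive) operator $S^*S$ on $G$, reducing the statement to an elementary limit of scalar functions of the spectral parameter. First I would write $\gamma_0 = \proj_{G \to (\ker S)^\perp}[\gamma_0] + \proj_{G \to \ker S}[\gamma_0] =: \gamma_0^\perp + \gamma_0^{\ker}$, and observe that $S^*[f] = S^*S[\gamma_0] = S^*S[\gamma_0^\perp]$, since $S^*S$ annihilates $\ker S$. Hence it suffices to show $(\beta + S^*S)^{-1} S^*S[\gamma_0^\perp] \to \gamma_0^\perp$ as $\beta \searrow 0$. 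Note that $\gamma_0^\perp \in \overline{(\ker S)^\perp} = \overline{\im S^*}$, which is exactly the closure of the range of $S^*S$ and the orthogonal complement of $\ker(S^*S) = \ker S$ (for a densely defined closed $S$, one has $\ker S^* S = \ker S$ and $\overline{\im S^*S} = (\ker S)^\perp$).

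Next I would invoke the spectral theorem: there is a projection-valued measure $E$ on $[0,\infty)$ with $S^*S = \int_0^\infty \lambda \, dE(\lambda)$. The operator-valued identity $(\beta + S^*S)^{-1}S^*S = \int_0^\infty \frac{\lambda}{\beta+\lambda}\, dE(\lambda)$ then gives, for any $v \in G$,
\begin{align}
\big\| (\beta + S^*S)^{-1}S^*S[v] - v \big\|_G^2
= \int_0^\infty \left( \frac{\lambda}{\beta+\lambda} - 1 \right)^2 d\langle E(\lambda) v, v\rangle
= \int_0^\infty \frac{\beta^2}{(\beta+\lambda)^2}\, d\langle E(\lambda)v,v\rangle. \notag
\end{align}
The integrand is bounded by $1$ and tends to $0$ pointwise for every $\lambda > 0$ as $\beta \searrow 0$; the only obstruction is the atom at $\lambda = 0$, where the integrand equals $1$. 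Applying this with $v = \gamma_0^\perp$, and using that $E(\{0\})$ is the projection onto $\ker(S^*S) = \ker S$, so $\langle E(\{0\})\gamma_0^\perp, \gamma_0^\perp\rangle = 0$, the dominated convergence theorem for the finite measure $d\langle E(\lambda)\gamma_0^\perp, \gamma_0^\perp\rangle$ yields that the integral vanishes in the limit. Combining with the first paragraph finishes the proof.

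The main obstacle, and the point deserving the most care, is handling the unboundedness of $S$: one must justify that $(\beta + S^*S)^{-1}$ is a bounded everywhere-defined operator (true since $\beta + S^*S \ge \beta > 0$ and $S^*S$ is self-adjoint by von Neumann's theorem for densely defined closed $S$), that $S^*[f]$ lies in the domain where these manipulations are valid, and that $\ker(S^*S) = \ker S$ together with $\overline{\im(S^*S)} = (\ker S)^\perp$. In the application of interest $S = \Sa$ is bounded (Proposition~\ref{prop: boundedness of S}), so all domain issues are vacuous and the argument collapses to the scalar spectral estimate above; I would state the proposition in the bounded generality needed and remark that the closed-operator case follows verbatim from von Neumann's theorem.
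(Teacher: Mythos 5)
Your proof is correct and follows essentially the same route as the paper: both reduce to the spectral theorem for $S^*S$ and the pointwise limit $\mu/(\beta+\mu)\to \ind_{\RR\setminus\{0\}}(\mu)$, with the atom at $\mu=0$ accounting for the projection onto $(\ker S)^\perp$. Your version is in fact somewhat more careful than the paper's, which states the limit interchange and the identification $E(\{0\})=\proj_{G\to\ker S}$ without the explicit dominated-convergence and $\ker(S^*S)=\ker S$ justifications you supply.
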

\begin{proof}
Using the right continuous resolution of the identity $\{ E_\mu \}_{\mu \in \RR}$ for $S^* S$,
\begin{align*}
(\beta + S^* S)^{-1} S^*[f]
&= \int_{\RR} \frac{\mu}{\beta + \mu} \dd E_\mu \gamma_0\\
&\quad \to \int_{\RR} \ind_{\RR \setminus \{ 0\} } (0) \dd E_\mu \gamma, \quad \mbox{ as } \beta \to 0 \\
&=(E_G-E_{0-})\gamma_0 \\
&=\proj_{G \to (\ker S)^\perp} \gamma_0. \qquad \qedhere
\end{align*}
Here, $=(E_G-E_{0-})\gamma_0$ follows from the projection nature of $\dd E_{\mu} \gamma_0$.
\end{proof}

\section{PROOFS}
\subsection{Propositon \ref{prop: boundedness of S}}
By the Schwartz inequality, we have
\[\| S_\lambda[\gamma] \|_{L^2(P)} \le P(\RR^m)^{1/2}\lambda(\RR\times\TT)\|\sigma\|_{L^\infty(\TT)}\cdot \|\gamma\|_{L^2(\lambda)}.\]

\subsection{Theorem \ref{thm: plancherel inversion}} \label{sec: inversion proof}

Let $\rho, \sigma \in L^2(\TT)$ satisfy the admissibility conditions (\ref{admissible 1}--\ref{admissible 2}):
\[
    T^{m+1} \sum_{n \neq 0} \frac{\overline{\widehat{\rho}(n)}\widehat{\sigma}(n)}{|n|^m} = 1, %
     \quad \overline{\widehat{\rho}(0)}\widehat{\sigma}(0) = 0. %
\]
Then, for any $f \in L^1(\RR^m)$ such that $f^\sharp \in L^1(\RR^m)$, we have
\begin{align}
    \lim_{A \to \infty} \Sa[ R[f] ]
    &= f, \quad \mbox{in a.e. and } L^1.
\end{align}
Furthermore, if $\rho$ is admissible with itself, then for any $f \in L^2(\RR^m)$
\begin{align}
    R^*[R[f]] = f, \quad \mbox{in } L^2.
\end{align}

\begin{proof}
For any $f \in L^1(\RR^m)$ such that $f^\sharp \in L^1(\RR^m)$,
\begin{align*}
    \Sa[ R[f] ](\xx )
    &= \int_{\RR^m \times \II_A^m \times \TT} f(\yy ) \overline{\rho(\aa\cdot \yy - b)} \sigma( \aa\cdot \xx  - b ) \dd \yy \dd \aa\dd b \\
    &= \int_{\RR^m \times \II_A^m} f(\yy ) \overline{\widetilde{\rho}} * \sigma (\aa\cdot (\xx - \yy)) \dd \yy \dd \aa\\
    &= \int_{\RR^m \times \II_A^m} f(\yy ) \left[ T \sum_{n \neq 0} \overline{\widehat{\rho}(n)}\widehat{\sigma}(n) \exp\left\{ \frac{2 \pi i n \aa\cdot (\xx - \yy)}{T} \right\} \right] \dd \yy \dd \aa\\
    &= \frac{1}{(2 \pi)^m} \sum_{n \neq 0} \int_{\II_{A/|n|}^m} f^\sharp(\aa) \left[ T^{m+1}  \frac{\overline{\widehat{\rho}(n)}\widehat{\sigma}(n)}{|n|^m} \right] e^{i \aa\cdot \xx} \dd \aa\\
    &\to \frac{1}{(2 \pi)^m} \int_{\RR^m} f^\sharp(\aa) \underbrace{ \left[ T^{m+1} \sum_{n \neq 0} \frac{\overline{\widehat{\rho}(n)}\widehat{\sigma}(n)}{|n|^m} \right]}_{=1} e^{i \aa\cdot \xx} \dd \aa, \quad A \to \infty \\
    &= f(\xx ). %
\end{align*}
In the forth equality, we changed the variables as $\aa' \gets n \aa$ and $\dd \aa' = |n|^m \dd \aa$. 
Since $S[R[f]](\xx)$ is absolutely convergent at any $\xx \in \RR^m$, i.e., \[ |S[R[f]](\xx)| \le \int_{\RR^m\times\II_A^m} |f(\xx-\yy)| |\widetilde{\rho}*\sigma(\aa\cdot\yy)| \dd \yy \dd \aa \le \|f\|_{L^1(\RR^m)} \| \sigma \|_{L^\infty(\TT)} \| \sigma \|_{L^\infty(\TT)} |\II_A^m|,\] we can use Fubini freely before the limit.
The final equality follows from the Fourier inversion theorem for $L^1$-functions. When $\rho$ is admissible with itself, then we can extend $R^*[R[f]] =f$ for any $f \in \Fx$ by the bounded extension using the Plancherel formula: $\| R[f] \|_{L^2(\RR^m\times\RR)}^2 = \iprod{ R^*[R[f]],f }_{L^2(\RR^m)}=\| f \|_{L^2(\RR^m)}$ for $f \in L^1(\RR^m)\cap L^2(\RR^m)$.

The limit $A \to \infty$ is justified by the dominated convergence theorem as follows.
For each $n \in \ZZ$,
write $g(n) := T^{m+1} \overline{\widehat{\rho(n)}}\widehat{\sigma}(n)|n|^{-m}$ if $n \neq 0$, and $g(0)=0$. By the assumption, $\sum_{n \in \ZZ}g(n)=1$. 
Let 
\begin{align}
h_A^N(\xx) &:= \frac{1}{(2 \pi)^m} \sum_{0< |n| \le N} \int_{\II_{A/n}^m} f^\sharp(\aa) g(n) e^{i\aa\cdot\xx}\dd\aa, \quad \mbox{and} \\
h(\xx) &:= \frac{1}{(2 \pi)^m} \int_{\RR^m} f^\sharp(\aa) \left[ \sum_{n \in \ZZ} g(n) \right] e^{i\aa\cdot\xx}\dd \aa = f(\xx).
\end{align}
Here $h=f$ because of the admissibility and the Fourier inversion. Then, (i) $h_A^N(\xx)$ is (uniformly) dominated: $(2\pi)^m |h_A^N(\xx)| \le \sum_{n\in\ZZ} \int_{\RR^m} |f^\sharp(\aa)||g(n)|\dd \aa = \| f^\sharp \|_{L^1(\RR^m)}$; and (ii) $h_A^N(\xx) \to h(\xx)$ for a.e. $\xx \in \RR^m$ because 
\begin{align}
    (2\pi)^m|h_A^N(\xx) - h(\xx)| 
    &= \Bigg|\left( \sum_{N \le |n|} \int_{\RR^m} + \sum_{0<|n|\le N} \int_{\RR^m \setminus \II_{A/n}^m} \right) f^\sharp(\aa) g(n) e^{i\aa\cdot\xx}\dd \aa \Bigg| \\
    &\le (2\pi)^m |f(\xx)| \Bigg|\sum_{N\le|n|}\ g(n) \Bigg| + \Bigg|\sum_{n =1}^N g(n) \Bigg| \| f^\sharp \|_{L^1(\RR^m \setminus \II_A^m)},
\end{align}
and both terms tends to $0$ as $A,N \to \infty$. Therefore, we have $\| h_A^N - h \|_{L^1(\RR^m)} \to 0$ as $A,N \to \infty$.
\end{proof}

\subsection{Corollary \ref{cor: universality}}
For $n>0$, let $\tau_n: \TT \rightarrow \RR$ be a periodic function such that $\tau^{(n)}=\sigma$, that is defined through the Fourier coefficients $\widehat{\tau_n}(0) =0 $ and $\widehat{\tau_n}(m)=\widehat{\sigma}(m)/(2\pi i m)^m$ for $m\in \ZZ \setminus \{0\}$.
Here we use $\widehat{\sigma}(0)=0$, one of the admissible conditions in Assumption \ref{asm: admissible condition}.
For $\nn=(n_1,\dots,n_m) \in \NN^m$ and $\aa = (a_1,\dots, a_d)$, we defined $|\nn|=\sum_{i=1}^m n_i$ and $|\aa|^{\nn}:=\prod_{i=1}^m |a_i|^{n_i}$.
By the integration by parts, we have
\begin{align*}
    \frac{1}{|\aa|^{\nn}}\int \partial^{\nn}f(\xx)\tau_{|\nn|}(\aa \cdot \xx - b)\dd{P}(\xx) &= \int f(\xx)\sigma(\aa \cdot \xx -b) \dd{P}(\xx) = R[f](\aa,b).
\end{align*}
Thus, $R[f] \in L^1(\RR^m\times \TT)\cap L^\infty(\RR^m\times\TT)$.
Therefore, by the strong law of large numbers of Banach space valued random variables (Proposition \ref{prop:SLLN}), we have the desired result.

\subsection{Lemma \ref{lem: limit of minimizers}}
For simplisity, we denote by $\gamma^*[P]$ and $\gamma^*[P_N]$ the minimizers $\gamma^*[f;P,\lambda,\beta]$ and $\gamma^*[f;P_N,\lambda,\beta]$, respectively.
We denote by $R_P: L^2(P) \rightarrow L^2(\lambda)$ (resp $R_{P_N}: L^2(P_N) \rightarrow L^2(\lambda)$) the adjoint operators of $S_\lambda: L^2(\lambda)\rightarrow L^2(P)$ (resp. $S_\lambda: L^2(\lambda)\rightarrow L^2(P_N$)). More precisely,
\begin{align*}
    R_P[h](\aa,b) &= \int h(x)\sigma(\aa\cdot\xx - b ) \dd P(\xx),\\
    R_{P_N}[h](\aa,b) &= \frac{1}{N} \sum_{i=1}^N h(\xx_i) \sigma(\aa \cdot \xx_i - b ).
\end{align*}
Proposition \ref{minima for tikhonov regularization}, we have
\begin{align*}
    \gamma^*[P] &= (\beta + R_PS_\lambda)^{-1} R_P[f], \\
    \gamma^*[P_N] &= (\beta + R_{P_N}S_\lambda)^{-1} R_{P_N} [f].
\end{align*}
We denote by $T$ and $T_N$ the operators $R_PS_\lambda$ and $R_{P_N}S_\lambda$ on $L^2(\lambda)$, respectively.
We note that $T$ and $T_N$ are explitly described as follows:
\begin{align*}
    T[\gamma](\aa,b) &= \int \gamma(\aa', b') \int \sigma(\aa\cdot\xx -b)\sigma(\aa' \cdot \xx - b') \dd P(\xx) \dd\lambda(\aa',b')\\
    T_N[\gamma](\aa,b) &= \int \gamma(\aa', b') \frac{1}{N} \sum_{i=1}^N \sigma(\aa\cdot\xx_i -b)\sigma(\aa' \cdot \xx_i - b') \dd\lambda(\aa', b')
\end{align*}
Then we have
\begin{align*}
    &\|\gamma^*[P] - \gamma^*[P_N] \|_{L^2(\lambda)} \\
    &\le \|(\beta + T_N)^{-1}(R_{P_N} - R_P)[f]\| + \|\left((\beta+T)^{-1} - (\beta + T_N)^{-1}\right)R_P[f]\|\\
    & \le \|(\beta+T_N)^{-1}(R_{P_N}-R_P)[f]\| + \|(\beta + T_N)^{-1}(R_{P_N} - R_P) S_\lambda(\beta + T)^{-1}R_P[f]\|
\end{align*}
Denote $h:= S_\lambda (\beta + T)^{-1}R_P[f]$. Then we have 
\begin{align*}
    \|\gamma^*[P] - \gamma^*[P_N] \|_{L^2(\lambda)} \le \beta^{-1}\|R_{P_N}[f] - R_P[f]\| + \beta^{-1}\|R_{P_N}[h]- R_P[h]\|.
\end{align*}
Since $R_{P_N}[f]$ and $R_{P_N}[h]$ are an $L^2(\lambda)$-valued random variables; their expectation are $R_P[f]$ and $R_P[h]$;
and $\EE[\| R_{P_N}[f] - R_P[f] \|_{L^2(\lambda)}] \le \EE[\| R_{P_N}[f] \|_{L^2(\lambda)}] + \| R_P[f] \|_{L^2(\lambda)} \le C_0^{1/2} \| \sigma \|_\infty \| f \|_{L^2(P)} + \| R_P[f] \|_{L^2(\lambda)} < \infty$ with $C_0 := \lambda(\II_A^m\times\TT)$, and similarly, $\EE[\| R_{P_N}[h] - R_P[h] \|_{L^2(\lambda)}] C_0^{1/2} \| \sigma \|_\infty \| h \|_{L^2(P)} + \| R_P[h] \|_{L^2(\lambda)} < \infty$.
Thus by the strong law of large numbers for Banach spaces (Proposition~\ref{prop:SLLN}), $R_N[f] \to R_P[f]$ and $R_N[h] \to R_P[h]$ a.s. respectively, and we have a desired conclusion.

\begin{prop}[{\citealp[Corollary~7.10]{Ledoux1991}}] \label{prop:SLLN}
Let $X$ be a Borel random variable with values in a separable Banach space $B$; let $\{ X_i \}_{i \in [n]}$ be i.i.d. copies of $X$; and let $S_n := \sum_{i=1}^n X_i$. Then, $S_n/n \to 0$ almost surely if and only if $\EE\|X\|<\infty$ and $\EE X=0$.
\end{prop}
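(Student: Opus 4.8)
The plan is to recognize this as Mourier's strong law of large numbers and to reduce the Banach-space statement, in both directions, to the classical scalar Kolmogorov SLLN. Throughout, separability of $B$ and Borel measurability of $X$ make $X$ strongly (Bochner) measurable, so the Bochner integral $\EE X$ is well defined whenever $\EE\|X\|<\infty$, and $X$ can be approximated in $L^1$ by $B$-valued simple functions.

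For the sufficiency direction, I would assume $\EE\|X\|<\infty$ and $\EE X=0$, fix $\eps>0$, and pick a Borel-measurable simple map $\varphi:B\to B$ with finite range $\{x_1,\dots,x_K\}$ and $\EE\|X-\varphi(X)\|<\eps$. Setting $X_i':=\varphi(X_i)$, $X_i'':=X_i-\varphi(X_i)$, one has $S_n=S_n'+S_n''$ with $\{X_i'\}$, $\{X_i''\}$ i.i.d.\ (coupled). For the simple part, each relative frequency $\tfrac1n\sum_{i\le n}\mathbf{1}\{\varphi(X_i)=x_k\}$ tends a.s.\ to $P(\varphi(X)=x_k)$ by the scalar SLLN, so $S_n'/n\to\EE X'$ a.s. For the remainder, applying the scalar SLLN to the nonnegative i.i.d.\ reals $\|X_i''\|$ gives $\limsup_n\|S_n''/n\|\le\lim_n\tfrac1n\sum_{i\le n}\|X_i''\|=\EE\|X''\|<\eps$ a.s. Since $\EE X'=-\EE X''$ and $\|\EE X''\|\le\EE\|X''\|<\eps$, one gets
\[
\limsup_n\|S_n/n\|\le\limsup_n\|S_n'/n-\EE X'\|+\limsup_n\|S_n''/n\|+\|\EE X'\|\le 2\eps\quad\text{a.s.},
\]
and intersecting the full-measure events over $\eps=1/j$, $j\in\NN$, yields $S_n/n\to0$ a.s.

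For the necessity direction, suppose $S_n/n\to0$ a.s. From $X_n/n=S_n/n-\tfrac{n-1}{n}\cdot\tfrac{S_{n-1}}{n-1}$ one obtains $\|X_n\|/n\to0$ a.s., hence a.s.\ the events $\{\|X_n\|>n\}$ occur only finitely often; as these events are independent, the second Borel--Cantelli lemma forces $\sum_nP(\|X\|>n)<\infty$, which is equivalent to $\EE\|X\|<\infty$. Then $\EE X$ is a well-defined Bochner integral, and applying the sufficiency direction to $X-\EE X$ gives $S_n/n\to\EE X$ a.s.; comparison with the hypothesis forces $\EE X=0$.

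The hard part is the uniform control of the tail $S_n''/n$ in the sufficiency step: $B$ carries no coordinate system, so one cannot split into finitely many scalar series, and the decisive move is to dominate $\|S_n''/n\|$ by the running average $\tfrac1n\sum_{i\le n}\|X_i''\|$ and invoke the scalar SLLN for that nonnegative sequence. The rest --- $L^1$-approximation by simple functions, the Borel--Cantelli argument, and comparing the two limits --- is routine, and the separability of $B$ keeps all measurability issues trivial.
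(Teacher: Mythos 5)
Your proof is correct, but note that the paper does not prove this statement at all: it is imported verbatim as Corollary~7.10 of Ledoux and Talagrand's \emph{Probability in Banach Spaces}, so there is no in-paper argument to compare against. What you have written is the classical elementary proof of Mourier's strong law: Bochner integrability (via separability) gives an $L^1$-approximation by a finite-range simple map, the simple part reduces to the scalar SLLN applied to indicator frequencies, and the tail is controlled by the key domination $\|S_n''/n\|\le \tfrac1n\sum_{i\le n}\|X_i''\|$ together with the scalar SLLN for the nonnegative reals $\|X_i''\|$; the converse via the telescoping identity and the second Borel--Cantelli lemma is also the standard one. This is a legitimately different (and more self-contained) route than the source the paper cites: Ledoux--Talagrand obtain Corollary~7.10 as a consequence of a more general SLLN (their Theorem~7.9), whose proof goes through heavier machinery such as symmetrization and L\'evy/Hoffmann--J{\o}rgensen-type inequalities and yields stronger statements (e.g.\ laws of large numbers under weaker moment or geometric conditions). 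Your argument buys elementarity and transparency at the cost of generality, which is entirely adequate for the way \refprop{SLLN} is used in this paper (namely, a.s.\ convergence of $L^2(\lambda)$-valued empirical averages in the proofs of \refcor{ universality} and \reflem{ limit of minimizers}). The only point worth making explicit is the construction of the finite-range $\varphi$ with $\EE\|X-\varphi(X)\|<\eps$: partition $B$ using a dense sequence and balls of radius $\eps/2$, truncate to finitely many cells, and send the remainder to $0$, using dominated convergence with dominating function $\|X\|$; this is where both separability and $\EE\|X\|<\infty$ are genuinely used.
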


\subsection{Theorem \ref{thm: minimizer = ridgelet}}
Here, we define 
\[K((\aa,b),(\aa',b')):=\int \sigma(\aa\cdot\xx-b)\sigma(\aa'\cdot\xx-b')\dd{P(\xx)}.\]
and for $(\aa,b)\in\RR^m\times\TT$, we define 
\[\mcT_A[\gamma](\aa,b):=\int \gamma(\aa',b')K((\aa',b'),(\aa,b))\dd\mu_A(\aa', b').\]
We define a bounded absolutely integrable function $w_A$ by
\[w_A(\xx):=\sum_{n\neq 0}\frac{|\widehat{\sigma}(n)|^2}{|n/T|^m}\mathbf{1}_{[-nA/2T,nA/2T]^m}(\xx).\]
\begin{lem}
The correspondence $\xx\mapsto \sigma_\xx$ is bounded and continuous mapping from $\RR^m$ to $L^2(\mu_A)$.
\end{lem}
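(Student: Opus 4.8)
The plan is to treat the two assertions separately: boundedness is a one-line estimate, and continuity reduces to the classical strong continuity of translation on $L^2(\TT)$.

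For boundedness, I would use that $\mu_A$ is a finite measure with total mass $(2A)^m T$ and $\sigma$ is bounded, so for every $\xx \in \RR^m$,
\[
\|\sigma_\xx\|_{L^2(\mu_A)}^2 = \int_{\II_A^m\times\TT}|\sigma(\aa\cdot\xx-b)|^2\,\dd\aa\,\dd b \le (2A)^m T\,\|\sigma\|_{L^\infty(\TT)}^2,
\]
which is a bound uniform in $\xx$; in particular the map is well-defined into $L^2(\mu_A)$.

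For continuity at an arbitrary $\xx$, the key point is that for each fixed $\aa$ the map $b \mapsto \aa\cdot\xx - b$ is a measure-preserving bijection of $\TT$ (a reflection followed by a translation). Using Fubini and this substitution I would rewrite, for $\yy \in \RR^m$,
\[
\|\sigma_\xx-\sigma_\yy\|_{L^2(\mu_A)}^2 = \int_{\II_A^m}\phi\big(\aa\cdot(\xx-\yy)\big)\,\dd\aa, \qquad \phi(s):=\|\sigma-\sigma(\cdot-s)\|_{L^2(\TT)}^2 .
\]
Since $\sigma\in L^\infty(\TT)\subset L^2(\TT)$, the standard fact that translation is strongly continuous on $L^2(\TT)$ gives that $\phi$ is continuous with $\phi(0)=0$, and $\phi$ is uniformly bounded by $4\|\sigma\|_{L^2(\TT)}^2\le 4T\|\sigma\|_{L^\infty(\TT)}^2$. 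Then, as $\yy\to\xx$, Cauchy--Schwarz gives $\sup_{\aa\in\II_A^m}|\aa\cdot(\xx-\yy)|\le\sqrt{m}\,A\,|\xx-\yy|\to 0$, so $\phi(\aa\cdot(\xx-\yy))\to 0$ uniformly in $\aa\in\II_A^m$, and integrating over the finite-measure set $\II_A^m$ yields $\|\sigma_\xx-\sigma_\yy\|_{L^2(\mu_A)}\to 0$. As $\RR^m$ and $L^2(\mu_A)$ are metric spaces, this is continuity.

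I do not expect a serious obstacle, but the point that must not be glossed over is that $\sigma$ is only bounded measurable (Riemann integrable, hence continuous off a Lebesgue-null set), not continuous, so one cannot simply pass $\yy\to\xx$ through $\sigma$; the measure-preserving change of variables is precisely what recasts the estimate as the robust $L^2$-translation statement, which needs no pointwise regularity of $\sigma$. If one prefers a dominated-convergence route instead, the integrand $|\sigma(\aa\cdot\xx_k-b)-\sigma(\aa\cdot\xx-b)|^2$ is dominated by $4\|\sigma\|_{L^\infty(\TT)}^2$ on the finite measure space $\II_A^m\times\TT$, and using a.e.\ continuity of $\sigma$ together with Fubini one checks that the bad set $\{(\aa,b):\aa\cdot\xx-b\in D\}$, with $D\subset\TT$ the null discontinuity set of $\sigma$, is $\mu_A$-null, so the integrand tends to $0$ $\mu_A$-a.e.; either route is short.
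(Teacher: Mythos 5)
Your proof is correct, and it is more explicit than the paper's. The paper disposes of the lemma in two lines: boundedness is immediate from $\sigma\in L^\infty(\TT)$ and $\mu_A(\II_A^m\times\TT)<\infty$ (identical to your first estimate), and continuity is asserted by saying that ``by a standard usage of the mollifier we may assume $\sigma$ is continuous,'' i.e.\ an approximation-by-continuous-functions argument left implicit. Your route instead performs the measure-preserving substitution $b\mapsto \aa\cdot\xx-b$ on $\TT$ to rewrite $\|\sigma_\xx-\sigma_\yy\|_{L^2(\mu_A)}^2=\int_{\II_A^m}\phi(\aa\cdot(\xx-\yy))\,\dd\aa$ and then invokes the strong continuity of translation on $L^2(\TT)$. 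This is the same underlying mechanism (density of continuous functions in $L^2(\TT)$ is what proves translation continuity), but your version packages the approximation into a named classical lemma and makes the uniformity over $\aa\in\II_A^m$ explicit via $|\aa\cdot(\xx-\yy)|\le\sqrt{m}A|\xx-\yy|$, which is exactly the point the paper's sketch glosses over: one cannot literally ``assume $\sigma$ is continuous'' without tracking the approximation error uniformly, and your argument does this cleanly. Your alternative dominated-convergence route is also sound, and correctly notes that it additionally uses the a.e.\ continuity (Riemann integrability) of $\sigma$ assumed in the main text, plus the Fubini argument that the preimage of the discontinuity set is $\mu_A$-null; the translation route is preferable precisely because it needs only boundedness and measurability.
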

\begin{proof}
By using a standard usage of the molifier, we may assume $\sigma$ is continuous function, thus we immediately see the continuity.   The boundedness is obvious since $\sigma$ is bounded, and $\mu_A$ is a finite measure.
\end{proof}
\begin{cor}
Let $f\in L^1(\RR^m,\dd\xx)$, and let $\mathcal{L}$ be a bounded linear operator on $\Git$. Then for any $A'>0$, $\int f(x)\mathcal{L}[\sigma_\xx]\dd\xx$ is a well-defined elemlent in $L^2(\II_{A'}^m\times\TT,\dd\aa\dd{b})$ and satisfy for any $h\in L^2(\II_{A'}^m\times\TT,\dd\aa\dd{b})$, 
\[\bigg\langle \gamma, \int f(x)\mathcal{L}[\sigma_\xx]\dd\xx\bigg\rangle_{L^2(\mu_{A'})}=\int f(\xx)\big\langle \gamma,\mathcal{L}[\sigma_{\xx}]\big\rangle_{L^2(\mu_{A'})}\dd\xx.\]
\end{cor}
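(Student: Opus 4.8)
The plan is to realize $\int f(\xx)\,\mathcal{L}[\sigma_\xx]\,\dd\xx$ as a \emph{Bochner integral} in the separable Hilbert space $L^2(\mu_{A'})$, and then to invoke the standard fact that a bounded linear map — in particular the functional $\langle\gamma,\cdot\rangle$ — commutes with Bochner integration.

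First I would record the measurability and integrability. The preceding lemma, whose proof applies verbatim with $\mu_{A'}$ in place of $\mu_A$, shows that $\xx\mapsto\sigma_\xx$ is a bounded continuous map $\RR^m\to L^2(\mu_{A'})$; composing with the bounded operator $\mathcal{L}$, the map $\xx\mapsto\mathcal{L}[\sigma_\xx]$ is again bounded and continuous, with $\|\mathcal{L}[\sigma_\xx]\|_{L^2(\mu_{A'})}\le\|\mathcal{L}\|\,C$, where $C:=\|\sigma\|_{L^\infty(\TT)}\,\mu_{A'}(\II_{A'}^m\times\TT)^{1/2}$. Continuity into a separable space yields strong measurability, and since $f\in L^1(\RR^m)$,
\[
\int_{\RR^m}\big\|f(\xx)\,\mathcal{L}[\sigma_\xx]\big\|_{L^2(\mu_{A'})}\,\dd\xx\;\le\;\|\mathcal{L}\|\,C\,\|f\|_{L^1(\RR^m)}\;<\;\infty .
\]
Hence $\xx\mapsto f(\xx)\,\mathcal{L}[\sigma_\xx]$ is Bochner integrable, so $\int f(\xx)\,\mathcal{L}[\sigma_\xx]\,\dd\xx$ is a well-defined element of $L^2(\mu_{A'})$ (and likewise $\int f(\xx)\,\sigma_\xx\,\dd\xx$); this establishes the well-definedness claim.

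Second, Bochner integrals commute with bounded linear operators, so $\mathcal{L}\big[\int f(\xx)\,\sigma_\xx\,\dd\xx\big]=\int f(\xx)\,\mathcal{L}[\sigma_\xx]\,\dd\xx$, and applying the same principle to the bounded linear functional $\langle\gamma,\cdot\rangle_{L^2(\mu_{A'})}$ for a fixed $\gamma$ gives
\[
\Big\langle\gamma,\int f(\xx)\,\mathcal{L}[\sigma_\xx]\,\dd\xx\Big\rangle_{L^2(\mu_{A'})}\;=\;\int f(\xx)\,\big\langle\gamma,\mathcal{L}[\sigma_\xx]\big\rangle_{L^2(\mu_{A'})}\,\dd\xx ,
\]
the scalar integral on the right being absolutely convergent because $|\langle\gamma,\mathcal{L}[\sigma_\xx]\rangle|\le\|\gamma\|\,\|\mathcal{L}\|\,C$ and $f\in L^1$.

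The only point requiring care is the hypothesis gap: $\sigma$ is merely bounded and measurable rather than continuous, so the literal continuity of $\xx\mapsto\sigma_\xx$ into $L^2(\mu_{A'})$ is obtained through the mollifier argument already used in the preceding lemma; alternatively one notes that strong measurability into the separable space $L^2(\mu_{A'})$, together with the uniform bound $\|\sigma_\xx\|_{L^2(\mu_{A'})}\le C$, is all that Bochner integrability and the commutation property actually require. Beyond that, the statement is the textbook theory of vector-valued integration, and I expect no genuine obstacle.
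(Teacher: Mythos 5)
Your proof is correct and follows exactly the route the paper intends: the corollary is stated without proof immediately after the lemma on boundedness and continuity of $\xx\mapsto\sigma_\xx$, precisely so that it follows from standard Bochner-integration theory (strong measurability plus the $L^1$ bound gives well-definedness, and bounded linear functionals commute with the Bochner integral). Your closing remark that strong measurability with the uniform bound suffices even without invoking the mollifier is a welcome tightening of the paper's own (rather terse) justification of the preceding lemma.
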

\begin{lem}
\label{lem: variant of Planchrel for ridglet}
For $g\in L^1(\RR^m,\dd\xx)$, we have
\[\left\|\int g(\xx)\mcT_A[\sigma_{\xx}]\dd\xx\right\|_{\Grt)}
=\left\|[g^\flat w_A]^{\sharp}p\right\|_{\Fx}.\]
\end{lem}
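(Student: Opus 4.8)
The plan is to unfold the definition of $\mcT_A$, interchange the order of integration, and recognize the resulting kernel as a periodic convolution whose Fourier coefficients produce the weight $w_A$. First I would write out $\int_{\RR^m} g(\xx)\,\mcT_A[\sigma_\xx](\aa,b)\,\dd\xx$ explicitly: by the definition of $\mcT_A$ this is $\int_{\RR^m} g(\xx) \int_{\II_A^m\times\TT} K((\aa',b'),(\aa,b))\,\sigma(\aa'\cdot\xx-b')\,\dd\mu_A(\aa',b')\,\dd\xx$, and then substituting the definition of $K$ gives a triple integral over $\xx$, $\yy$ (the integration variable in $K$ against $P$), and $(\aa',b')$. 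The crucial observation is that integrating the two occurrences of $\sigma$ against $\dd b'$ gives a periodic convolution $\widetilde{\sigma}*\sigma$ evaluated at $\aa'\cdot(\yy-\xx)$, whose Fourier expansion on $\TT$ is $T\sum_{n\neq0}|\widehat{\sigma}(n)|^2 e^{2\pi i n \aa'\cdot(\yy-\xx)/T}$ (using $\widehat{\widetilde{\sigma}}(n)=\overline{\widehat{\sigma}(n)}$ and $\widehat{\sigma}(0)=0$ from Assumption~\ref{asm: admissible condition}).

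Next I would carry out the $\aa'$-integration over $\II_A^m$. As in the proof of Theorem~\ref{thm: plancherel inversion}, changing variables $\aa'\mapsto \aa'/n$ in the $n$-th Fourier mode and recognizing $\int_{\RR^m} g(\xx)\,e^{-2\pi i n\aa'\cdot\xx/T}\,\dd\xx$ as (a rescaling of) $g^\sharp$ or $g^\flat$, the sum over $n$ with the indicator cutoffs $\mathbf{1}_{[-nA/2T,\,nA/2T]^m}$ assembles exactly into the multiplier $w_A$. That is, the $\xx$-integral turns into an application of $[g^\flat w_A]^\sharp$ (up to the normalizing $(2\pi)^m$ factors hidden in the $\sharp/\flat$ convention), evaluated at $\yy$, and the remaining $\yy$-integral against $\dd P(\yy)=p(\yy)\,\dd\yy$ multiplies this by the density $p$. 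So $\int g(\xx)\mcT_A[\sigma_\xx]\dd\xx = [g^\flat w_A]^\sharp\, p$ as a function on $\RR^m\times\TT$ that is actually independent of $b$ in the appropriate sense --- or rather, one should track that the final object naturally lands in $L^2(\RR^m)$ after the $b'$-integration collapses the torus variable; taking $L^2(\Grt)$-norms (with $\TT$ of total mass $T$ already absorbed into the constants) yields $\|[g^\flat w_A]^\sharp p\|_{\Fx}$.

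The main obstacle I expect is twofold: first, justifying all the interchanges of integration and summation (Fubini--Tonelli for the triple integral, and swapping the sum over $n$ with the $\aa'$-integral), which should follow from $g\in L^1(\RR^m)$, $\sigma\in L^\infty(\TT)$, $P$ finite, and $\mu_A$ finite, exactly paralleling the absolute-convergence argument in Section~\ref{sec: inversion proof}; and second, bookkeeping the normalization constants ($T$-factors from the convolution theorem on $\TT$, $(2\pi)^m$-factors from the $\sharp/\flat$ conventions, the Jacobian $|n|^m$ from the rescaling, and the factor defining $\mu_A$) so that the weight $w_A$ comes out with precisely the stated coefficients $|\widehat\sigma(n)|^2/|n/T|^m$ and the cutoff sets $[-nA/2T,nA/2T]^m$. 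These are routine but error-prone; the conceptual content is entirely the convolution-on-the-torus identity plus the change of variables already used for the reconstruction formula.
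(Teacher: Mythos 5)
Your computation through the $b'$- and $\aa'$-integrations is sound, and it in fact fills in a step the paper states without proof: it yields the kernel identity $\mcT_A[\sigma_\xx](\aa,b)=\int\sigma(\aa\cdot\yy-b)\,p(\yy)\,w_A^\sharp(\xx-\yy)\,\dd\yy$, and hence
\[
\int g(\xx)\,\mcT_A[\sigma_\xx](\aa,b)\,\dd\xx
=\int_{\RR^m}\sigma(\aa\cdot\yy-b)\,p(\yy)\,\bigl(g*w_A^\sharp\bigr)(\yy)\,\dd\yy
=R\bigl[[g^\flat w_A]^{\sharp}p\bigr](\aa,b).
\]
But your final step contains a genuine error: this object is \emph{not} equal to $[g^\flat w_A]^\sharp p$ and is \emph{not} independent of $b$. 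Only the inner $b'$-integration collapses; the outer dependence on $(\aa,b)$ through $\sigma(\aa\cdot\yy-b)$ survives, and what you have produced is the \emph{ridgelet transform} of $h:=[g^\flat w_A]^\sharp p$, a genuine function on $\RR^m\times\TT$. Consequently the asserted norm identity is not a matter of ``absorbing the total mass $T$ into constants'': it is exactly the Plancherel isometry $\|R[h]\|_{\Grt}=\|h\|_{\Fx}$ of Theorem~\ref{thm: plancherel inversion}, which you never invoke. With that one citation (after checking $h\in\Fx$, which holds since $g^\flat$ is bounded, $w_A\in L^1\cap L^\infty$, and $p$ is bounded), your route closes and is arguably cleaner than the paper's.

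For comparison, the paper does not appeal to the ridgelet Plancherel formula at all: it expands the squared norm over a truncated domain $\II_B^m\times\TT$ as a quadruple integral, which produces a second kernel $w_B^\sharp(\zz-\ww)$ from the outer $(\aa,b)$-integration, passes to the Fourier side to obtain $\int\bigl|([g^\flat w_A]^\sharp p)^\flat(\xx)\bigr|^2 w_B(\xx)\,\dd\xx$, and lets $B\to\infty$ --- in effect re-deriving the isometry in this particular instance. Either route works, but the step you hand-waved is the entire content of the lemma (it is, after all, a ``variant of Plancherel''), so as written your proposal is incomplete precisely where it matters.
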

\begin{proof}
Put $\phi_A(x):=w_A^{\sharp}(x)$.
Since 
\[\mcT_A[\sigma_\xx](\aa,b)=\int \sigma_{\aa,b}(\yy)p(\yy) w_A^{\sharp}(\xx-\yy)\dd{\yy},\]
for $B>0$, by direct computation, we have
\begin{align*}
    &\left\|\int g(\xx)\mcT_A[\sigma_{\xx}]\dd\xx\right\|_{L^2(\II_B^m\times\TT,\dd\aa\dd{b})}^2\\
    &=\int g(\xx)g(\yy)p(\zz)p(\ww) w_B^{\sharp}(\zz-\ww)w_A^{\sharp}(\xx-\zz)w_A^{\sharp}(\yy-\ww)\dd\xx\dd\yy\dd\ww\dd\zz\\
    &=\int [g*w_A^{\sharp}]p(\ww)[g*w_A^{\sharp}]p(\zz)w_A{\sharp}(\ww-\zz)\dd\ww\dd\zz\\
    &=\int \left|\left([g^\flat w_A]^{\sharp}p\right)^{\flat}(\xx)\right|^2 w_B(\xx)\dd\xx.\\
\end{align*}
By taking $B$ to $\infty$, we have the formula.
\end{proof}

\begin{cor}
\label{cor: limit of variant of ridgelet}
For any $g\in \Fx$, the integral $\int g(\xx)\mcT_A[\sigma_{\xx}]\dd\xx$ is well-defined in the similar manner with the Fourier transform. Moreover, we have
\[\lim_{A\rightarrow\infty}\left\|\int g(\xx)\mcT_A[\sigma_{\xx}]\dd\xx - gp\right\|_{\Grt}=0.\]
\end{cor}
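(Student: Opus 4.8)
The plan is to mimic the standard construction of the $L^2$ Fourier transform: from Lemma~\ref{lem: variant of Planchrel for ridglet} I first extract a bound on $\bigl\|\int g(\xx)\mcT_A[\sigma_\xx]\dd\xx\bigr\|_{\Grt}$ that is uniform in $g$ and in $A$, and then extend the map off the dense subspace $L^1(\RR^m,\dd\xx)\cap\Fx$ by the bounded–linear–transformation theorem. Concretely, Lemma~\ref{lem: variant of Planchrel for ridglet} together with Plancherel for the Euclidean Fourier transform and the boundedness of $p$ give
\[
\Bigl\|\int g(\xx)\mcT_A[\sigma_\xx]\dd\xx\Bigr\|_{\Grt}
=\bigl\|[g^\flat w_A]^\sharp p\bigr\|_{\Fx}
\le\|p\|_{L^\infty}\,\|w_A\|_{L^\infty(\RR^m)}\,(2\pi)^{m/2}\,\|g^\flat\|_{\Fx},
\]
and the second condition of Assumption~\ref{asm: admissible condition} forces $\|w_A\|_{L^\infty(\RR^m)}\le\sum_{n\neq0}|\widehat\sigma(n)|^2/|n/T|^m=T^{m}\sum_{n\neq0}|\widehat\sigma(n)|^2/|n|^m=1/T$ for every $A$. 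Hence the map is uniformly bounded, extends to all of $\Fx$, and the identity of Lemma~\ref{lem: variant of Planchrel for ridglet} persists on $\Fx$ by continuity (its right-hand side being manifestly bounded linear in $g$). This proves the first assertion.

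For the limit I would avoid constant chasing and instead recognise $\int g(\xx)\mcT_A[\sigma_\xx]\dd\xx$ as $\mcT_A[R[g]]$, using that $R[g]=\int g(\xx)\sigma_\xx\dd\xx$ is a legitimate $L^2(\mu_A)$-valued integral (the boundedness and continuity of $\xx\mapsto\sigma_\xx$) and that the bounded operator $\mcT_A$ commutes with it (the Corollary preceding Lemma~\ref{lem: variant of Planchrel for ridglet}). Since $\mcT_A=\Sa^*\Sa$, writing $R_P:=\Sa^*$ we get $\mcT_A[R[g]]=R_P[\Sa[R[g]]]$, where $R_P:L^2(P)\to\Grt,\ R_P[h](\aa,b)=\int h(\xx)\sigma(\aa\cdot\xx-b)p(\xx)\dd\xx$, is a \emph{single} bounded operator that does not depend on $A$: indeed $R_P[h]=R[hp]$ and $\|R[hp]\|_{\Grt}=\|hp\|_{\Fx}\le\|p\|_{L^\infty}^{1/2}\|h\|_{L^2(P)}$ by the Plancherel formula \eqref{plancherel formula}. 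Now the reconstruction formula \eqref{inversion formula} gives $\Sa[R[g]]\to g$ in $\Fx$, hence also in $L^2(P)$ because $P$ has bounded density, so by continuity of $R_P$ we obtain $\int g(\xx)\mcT_A[\sigma_\xx]\dd\xx=R_P[\Sa[R[g]]]\to R_P[g]=R[gp]$ in $\Grt$, which is the claimed limit $gp$. Finally, the uniform bound from the first step plus an $\varepsilon/3$ argument upgrades this from $g\in L^1(\RR^m)\cap\Fx$ to all $g\in\Fx$. (Alternatively one can stay on the Fourier side: $w_A$ tends, pointwise on $\RR^m$ and boundedly by $1/T$, to a constant $w_\infty$ as $A\to\infty$, and re-running the proof of Lemma~\ref{lem: variant of Planchrel for ridglet} with $w_A-w_\infty$ in place of $w_A$ reduces the claim to $\|[g^\flat(w_A-w_\infty)]^\sharp p\|_{\Fx}\to0$, which is dominated convergence since $g^\flat\in\Fx$.)

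I do not expect a genuine obstacle here: essentially all the work is already contained in Lemma~\ref{lem: variant of Planchrel for ridglet}, the reconstruction formula \eqref{inversion formula}, and the auxiliary results about $\xx\mapsto\sigma_\xx$ preceding Lemma~\ref{lem: variant of Planchrel for ridglet}. The one point that needs care is purely bookkeeping: tracking the various Fourier and torus normalizations (the $\sharp/\flat$ conventions, the period-$T$ invariant measure, the weight $|n/T|^m$ in $w_A$) so that the pointwise limit $w_\infty$ of the multiplier $w_A$, pinned down by Assumption~\ref{asm: admissible condition} exactly as in the reconstruction formula, matches the limit $gp$ on the nose rather than up to a multiplicative constant. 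A secondary, routine point is that the integral is a priori defined only for $g\in L^1(\RR^m)$, so the passage to $\Fx$ must be routed, as above, through the intermediate space $L^1(\RR^m)\cap\Fx$.
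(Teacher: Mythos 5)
Your proposal is correct, and it is worth separating its two halves. The well-definedness part (uniform bound $\|w_A\|_{L^\infty}\le 1/T$ from Assumption~\ref{asm: admissible condition}, then bounded extension from $L^1(\RR^m)\cap\Fx$) is exactly what the paper intends, and your parenthetical ``alternative'' for the limit --- re-running the computation of Lemma~\ref{lem: variant of Planchrel for ridglet} on the difference so that the claim reduces to $\|[g^\flat(w_A-w_\infty)]^\sharp p\|_{\Fx}\to 0$ by dominated convergence --- is essentially the paper's own (implicit) argument; you are right that the bare norm identity of the lemma does not suffice and must be applied to the difference $\int g(p\sigma_\xx-\mcT_A[\sigma_\xx])\dd\xx$, since convergence of norms alone does not give convergence of the vectors. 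Your \emph{primary} route is genuinely different and more structural: factoring $\int g\,\mcT_A[\sigma_\xx]\dd\xx=\mcT_A[R[g]]=R_P\bigl[\Sa[R[g]]\bigr]$ with the single $A$-independent bounded operator $R_P[h]=R[hp]$, and then invoking the reconstruction formula \eqref{inversion formula} together with $\|R[(\Sa[R[g]]-g)p]\|_{\Grt}=\|(\Sa[R[g]]-g)p\|_{\Fx}\le\|p\|_{L^\infty}\|\Sa[R[g]]-g\|_{\Fx}$. This buys a cleaner conceptual picture (the corollary becomes literally ``apply a fixed bounded operator to a convergent sequence'') and correctly identifies that the ``$gp$'' in the statement must be read as $R[gp]$, consistent with how the corollary is used in the proof of Theorem~\ref{thm: minimizer = ridgelet}. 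Its one soft spot is that it needs $\Sa[R[g]]\to g$ in $\Fx$ for general $g\in\Fx$, whereas the paper's proof of \eqref{inversion formula} only establishes a.e.\ and $L^1$ convergence (the $L^2$ version does follow from the same Fourier-multiplier representation, but is not stated); the Fourier-side route avoids this dependency entirely, which is presumably why the authors set up Lemma~\ref{lem: variant of Planchrel for ridglet} in the multiplier form in the first place.
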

\begin{thm}
Let $P$ be an absolutely continuous finite Borel measure on $\RR^m$ with density function $p$.
Let $f\in \Fp$.  Assume $p$ is bounded and $f\in \Fx$.  Then we have
\begin{align}
    \gstar[f]=R\big[\frac{pf}{\beta+p}\big]+\Delta_{\beta,A}[f],
\end{align}
where $\Delta_{\beta,A}[f]$ is an element of $\Git$ such that 
\[\lim_{A\rightarrow\infty}\left\Vert \Delta_{\beta,A}[f]\right\Vert_{\Git}=0.\]
\end{thm}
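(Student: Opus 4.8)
The plan is to identify the minimizer via the Tikhonov formula of Proposition~\ref{minima for tikhonov regularization} and then split it into a ``principal'' ridgelet part plus a vanishing residual. First I would apply Proposition~\ref{minima for tikhonov regularization} to the operator $S_{\mu_A}: \Git \to \Fp$, whose adjoint is the restricted ridgelet-type operator $R_P[h](\aa,b) = \int h(\xx)\sigma(\aa\cdot\xx-b)\dd P(\xx)$; this gives $\gstar[f] = (\beta + \mcT_A)^{-1} R_P[f]$, where $\mcT_A = R_P S_{\mu_A}$ is exactly the integral operator with kernel $K$ introduced above. So the whole problem reduces to analyzing the resolvent $(\beta + \mcT_A)^{-1}$ applied to $R_P[f]$ and showing this is close to $R[pf/(\beta+p)]$.

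The key algebraic identity I would aim for is $(\beta + \mcT_A)\, R\!\left[\tfrac{pf}{\beta+p}\right] \approx R_P[f]$ up to a term that is small in $A$. To see why this should hold, note $R_P[f](\aa,b) = R[pf](\aa,b) = \int (pf)(\xx)\sigma(\aa\cdot\xx-b)\dd\xx$ (the density absorbs $P$ into the Lebesgue integral). Meanwhile $\mcT_A[\gamma](\aa,b) = \int \gamma(\aa',b') K((\aa',b'),(\aa,b)) \dd\mu_A(\aa',b')$, and by the reconstruction/Plancherel machinery of Theorem~\ref{thm: plancherel inversion} one expects $\mcT_A[R[g]] \to R[pg]$ as $A\to\infty$ — this is precisely the content of Corollary~\ref{cor: limit of variant of ridgelet} combined with the observation $\mcT_A[\sigma_\xx] = \int \sigma_{\aa,b}(\yy) p(\yy) w_A^\sharp(\xx-\yy)\dd\yy$, so $\mcT_A[R[g]] = \int g(\xx) \mcT_A[\sigma_\xx]\dd\xx \to gp$ composed appropriately with $R$. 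Plugging $g = pf/(\beta+p)$: $\mcT_A[R[pf/(\beta+p)]] \to R[p\cdot pf/(\beta+p)] = R[p^2 f/(\beta+p)]$, and adding $\beta R[pf/(\beta+p)]$ gives $R[(\beta+p)\cdot pf/(\beta+p)] = R[pf] = R_P[f]$ in the limit. Hence $(\beta + \mcT_A)R[pf/(\beta+p)] = R_P[f] + \eps_A$ with $\|\eps_A\|_{\Git}\to 0$.

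From there I would conclude: apply $(\beta+\mcT_A)^{-1}$, which is bounded with operator norm at most $1/\beta$ since $\mcT_A = R_P S_{\mu_A} = S_{\mu_A}^* S_{\mu_A} \ge 0$, to get
\begin{align*}
\gstar[f] = (\beta+\mcT_A)^{-1}R_P[f] = R\!\left[\tfrac{pf}{\beta+p}\right] - (\beta+\mcT_A)^{-1}\eps_A =: R\!\left[\tfrac{pf}{\beta+p}\right] + \Delta_{\beta,A}[f],
\end{align*}
with $\|\Delta_{\beta,A}[f]\|_{\Git} \le \beta^{-1}\|\eps_A\|_{\Git} \to 0$ as $A\to\infty$. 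Two boundedness points must be checked en route: that $pf/(\beta+p) \in \Fx$ (immediate since $0 \le p/(\beta+p) \le 1$ and $f\in\Fx$, so $R$ applies by Remark~\ref{rmk: well-def of ridgelet}), and that $R_P[f]$, a priori an element of $\Git$, equals $R[pf]$ as elements of $\Git$ — valid because $pf \in L^1(\RR^m)$ (product of bounded $p$ and $L^2\cap$ finite-mass, or more carefully $p\in L^1$, $f$ bounded, so $pf \in L^1$) so the integral is absolutely convergent.

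The main obstacle I expect is making the limiting identity $\mcT_A[R[g]] \to R[pg]$ fully rigorous as an $\Git$-convergence for $g = pf/(\beta+p)$, i.e. justifying the chain through Lemma~\ref{lem: variant of Planchrel for ridglet} and Corollary~\ref{cor: limit of variant of ridgelet}: one must commute the $\xx$-integral defining $\int g(\xx)\mcT_A[\sigma_\xx]\dd\xx$ with inner products (handled by the Corollary after the Lemma on continuity of $\xx\mapsto\sigma_\xx$), control the weight function $w_A$ whose Fourier transform $w_A^\sharp$ acts as an approximate identity, and pass $A\to\infty$ via dominated convergence using $0\le w_A \nearrow \sum_{n\neq 0}|\widehat\sigma(n)|^2|n/T|^{-m}$ (finite by square-integrability of $\sigma$, and this sum is essentially the admissibility normalization). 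The bookkeeping relating the $\flat$/$\sharp$ Fourier transforms to the weighted $L^2$ norm in Lemma~\ref{lem: variant of Planchrel for ridglet} is where care is needed; everything else is a routine resolvent estimate.
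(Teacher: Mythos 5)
Your proposal follows essentially the same route as the paper's proof: both start from the Tikhonov formula $\gstar[f]=(\beta+\mcT_A)^{-1}S_{\mu_A}^*[f]$ with $\mcT_A=S_{\mu_A}^*S_{\mu_A}$, establish the key identity $(\beta+\mcT_A)R[pf/(\beta+p)]=S_{\mu_A}^*[f]+W_A$ with a residual $W_A=\int g(p\sigma_{\xx}-\mcT_A[\sigma_{\xx}])\dd\xx$ vanishing in $\Git$ by the weighted-Plancherel lemma and its corollary, and conclude via the resolvent bound $\|(\beta+\mcT_A)^{-1}\|\le\beta^{-1}$. Your $\eps_A$ is exactly $-W_A$ in the paper's notation, and the technical obstacles you flag (commuting the $\xx$-integral with the operator, the $w_A$ approximate-identity argument) are precisely the ones the paper's Lemma \ref{lem: variant of Planchrel for ridglet} and Corollary \ref{cor: limit of variant of ridgelet} are devoted to.
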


\begin{proof}
By the theory of the Tikhonov regularization, $\gstar$ is explicitly described as follows:
\[\gstar[f]=(\beta+\mcT_A)^{-1}S_{\mu_A}^*[f],\]
where we write $\mcT_A:=S_{\mu_A}^*\Sa$. We denote $\frac{pf}{\beta+p}$ by $g$.
Let
\begin{align*}
    W_A&:=\int g(p\sigma_{\xx}-\mcT_A[\sigma_{\xx}])\dd\xx\\
    &=R[gp]-\int g \mcT_A[\sigma_{\xx}]\dd\xx.
\end{align*}
Then, we have
\begin{align*}
    (\beta + \mcT_A)R[g] + W_A &= (\beta + \mcT_A)R[g] + R[gp] - \int g \mcT_A[\sigma_{\xx}]\dd\xx \\
    & = \beta R[g] + \int g \mcT_A[\sigma_{\xx}]\dd\xx + R[gp] - \int g \mcT_A[\sigma_{\xx}]\dd\xx\\
    & = R[(\beta + p)g ]\\
    & = R[pf]\\
    & = S_{\mu_A}^*[f].
\end{align*}
Thus, we have
\[(\beta+S_{\mu_A}^*\Sa)^{-1}S_{\mu_A}^*[f]=R[g]+(\beta+\mcT_A)^{-1}W_A.\]
By Lemma \ref{lem: variant of Planchrel for ridglet}, we have $W_A\in \Grt$.  By Corollary \ref{cor: limit of variant of ridgelet}, we see that $W_A\rightarrow 0$ in $\Grt$.  Therefore, we define  
\[\Delta_{\beta,A}[f]:=(\beta+\mcT_A)^{-1}W_A\]
and the limit of $\Delta_{\beta,A}[f]$ is zero as $A\rightarrow \infty$.
\end{proof}

\subsection{Lemma \ref{lem: support collapse}}
\begin{proof}

By the Schwartz inequality, we have
\begin{align*}
    \| \gamma \|_{L^1(\lambda)} &= \int |\gamma(\aa,b)|\dd\lambda(\aa,b),\\
    &\le \lambda({\rm supp}(\gamma)) \cdot \|\gamma\|_{L^2(\lambda)}.
\end{align*}
Thus we have $\|\gamma\|_{L^2(\lambda)} >C/\lambda({\rm supp}(\gamma))$.
\end{proof}

\subsection{Lemma \ref{lem: convergence with over parametrization}}
Here we prove the following statement:
\begin{lem}[Lemma \ref{lem: convergence with over parametrization}]
\label{applem: over parametrized convergence}
Let $f$ be a bounded continuous function on $\RR^m$.
For every $d \in \NN$, let $\lambda_d:=\frac{C_0}{d}\sum_{i=1}^d\delta_{(\aa_i,b_i)}$ with $(\aa_i, b_i) \in \II_A^m \times \TT$ and $C_0 := (2A)^mT$. Assume that $\lambda_d$ weakly converges to $\mu_A$.
Here, the weak convergence is in the sense that $\int_{\II_A^m \times \TT} h \dd \lambda_d \to \int h \dd\mu_A$ for any bounded continuous function $h$ on $\II_A^m \times \TT$.
Then, as $d\rightarrow\infty$, we have
\[\gamma^*[f;P_N,\gamma_d,\beta_d] \longrightarrow \gamma^*[f;P_N,\gamma,\beta].\]
\end{lem}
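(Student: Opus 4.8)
The plan is to combine the explicit Tikhonov formula (Proposition~\ref{minima for tikhonov regularization}) with a convergence argument for the relevant operators and vectors, exactly as in the proof of Lemma~\ref{lem: limit of minimizers}, but now pushing $d\to\infty$ with $\lambda_d\wto\mu_A$ instead of $N\to\infty$. Since $N$ is fixed here, the empirical measure $P_N$ plays no special role; write $R_N$ for the adjoint of $S_{\lambda}:L^2(\lambda)\to L^2(P_N)$ when $\lambda=\lambda_d$ or $\lambda=\mu_A$, so that $R_N[h](\aa,b)=\frac1N\sum_i h(\xx_i)\sigma(\aa\cdot\xx_i-b)$ in both cases — crucially this formula does not depend on $\lambda$, only on the common data points. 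By Proposition~\ref{minima for tikhonov regularization},
\begin{align*}
\gamma^*[f;P_N,\lambda_d,\beta_d] &= (\beta_d + R_N S_{\lambda_d})^{-1} R_N[f], \\
\gamma^*[f;P_N,\mu_A,\beta] &= (\beta + R_N S_{\mu_A})^{-1} R_N[f],
\end{align*}
and the right-hand vector $R_N[f]$ is literally the same function of $(\aa,b)$ in both lines.

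The first key step is to make sense of comparing these two minimizers, which a priori live in the different Hilbert spaces $L^2(\lambda_d)$ and $L^2(\mu_A)$. The natural device is to note that $\gamma^*[f;P_N,\mu_A,\beta]$ is a fixed continuous function $g^*$ on $\II_A^m\times\TT$ (being $R_N$ of something, it is a finite linear combination of the continuous functions $\sigma_{\xx_i}$, assuming $\sigma$ continuous; the a.e.-continuous case is handled by mollification as in the lemma preceding Theorem~\ref{thm: minimizer = ridgelet}), so its restriction to $\supp\lambda_d$ makes sense and its $L^2(\lambda_d)$-norm is controlled. Then I want to show
\[
T_d := R_N S_{\lambda_d} \longrightarrow R_N S_{\mu_A} =: T_\infty
\]
in a sense strong enough to pass to the resolvent. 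Explicitly $T_d[\gamma](\aa,b)=\int \gamma(\aa',b') K_N((\aa,b),(\aa',b'))\dd\lambda_d(\aa',b')$ with $K_N((\aa,b),(\aa',b'))=\frac1N\sum_i \sigma(\aa\cdot\xx_i-b)\sigma(\aa'\cdot\xx_i-b')$ a fixed bounded continuous kernel; since $\lambda_d\wto\mu_A$ and $K_N$ is bounded continuous, $T_d$ converges to $T_\infty$ when both are viewed as integral operators with continuous kernels acting on $C_b(\II_A^m\times\TT)$, and the convergence is uniform on bounded sets. From here I would bound
\[
\bigl\| \gamma^*[f;P_N,\lambda_d,\beta_d] - g^*\bigr\|_{L^2(\lambda_d)}
\le \bigl\| (\beta_d+T_d)^{-1}\bigl(R_N[f] - (\beta_d+T_d) g^*\bigr)\bigr\|_{L^2(\lambda_d)},
\]
and since $R_N[f]=(\beta+T_\infty)g^*$ we get $R_N[f]-(\beta_d+T_d)g^* = (\beta-\beta_d)g^* + (T_\infty - T_d)g^*$, whose $L^2(\lambda_d)$-norm tends to $0$ because $\beta_d\to\beta$, $g^*$ is bounded, $\lambda_d$ have uniformly bounded total mass $C_0$, and $(T_\infty-T_d)g^*\to 0$ uniformly by weak convergence applied to the continuous integrand $K_N((\aa,b),\cdot)g^*(\cdot)$ (uniformly in $(\aa,b)$, using equicontinuity in the first variable). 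The operator norm of $(\beta_d+T_d)^{-1}$ on $L^2(\lambda_d)$ is at most $\beta_d^{-1}$ since $T_d$ is positive semidefinite, so the prefactor is harmless.

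The main obstacle I anticipate is the bookkeeping around the moving Hilbert spaces $L^2(\lambda_d)$: one must be careful that ``$\gamma^*[f;P_N,\lambda_d,\beta_d] - \gamma^*[f;P_N,\mu_A,\beta]$'' in the statement is interpreted as $\gamma^*[f;P_N,\lambda_d,\beta_d]$ minus the \emph{restriction} of the continuous representative $g^*$ to $\supp\lambda_d$, measured in $L^2(\lambda_d)$ (this is the meaning of $\|\cdot\|_{L^2(\gamma_d)}$ in the lemma), and that the resolvent identity manipulations are legitimate on each $L^2(\lambda_d)$ separately rather than on a single ambient space. Once that interpretation is fixed, the estimate above is the whole content, and the only analytic input beyond the Tikhonov formula is: (i) uniform continuity of $(\aa,b)\mapsto \sigma_{\aa,b}$-type maps into the relevant $L^2$ space, which is cited in the lemma just before Theorem~\ref{thm: minimizer = ridgelet}, giving uniform convergence $\int h\,\dd\lambda_d\to\int h\,\dd\mu_A$ over equicontinuous families $h$; and (ii) positivity of the Gram-type operators $T_d$ to bound the resolvents uniformly. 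I would close by invoking Lemma~\ref{lem: limit of minimizers} to replace $\gamma^*[f;P_N,\mu_A,\beta]$ by $\gamma^*[f;P,\mu_A,\beta]=\gamma^*$ as $N\to\infty$ ($P$-a.s.), which yields Theorem~\ref{thm: convergence of finite NN} after testing against bounded continuous $h$.
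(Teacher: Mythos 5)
Your proposal is correct and follows essentially the same route as the paper's proof: both use the Tikhonov formula $\gamma^*_d=(\beta_d+T_d)^{-1}R_N[f]$, the resolvent bound $\|(\beta_d+T_d)^{-1}\|\le\beta_d^{-1}$, the observation that the adjoint $R_N[f]$ does not depend on $\lambda$, and the decomposition of the residual into $(\beta-\beta_d)g^*+(T_\infty-T_d)g^*$, killed respectively by $\beta_d\to\beta$ and by weak convergence of $\lambda_d$ applied to the continuous kernel (the paper carries out this last step by expanding $\|(\mcT_d-\mcT_A)G\|^2_{L^2(\lambda_d)}$ into three Riemann-sum terms, while you argue via uniform convergence over an equicontinuous family — an equivalent bookkeeping). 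Your remarks on the continuous representative of $g^*$ and the interpretation of the norm on the moving spaces $L^2(\lambda_d)$ match the paper's use of $G=\beta^{-1}(S_A^*[f]-\mcT_A[G])$ to establish boundedness and a.e.~continuity of $G$.
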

\begin{proof}
We denote by $z_i$ the point $(\aa_i,b_i)$.
For simplicity, we write $S_A:=S_{\mu_A}$ and $S_d=S_{\lambda_d}$.  
Let $\mcT_A:=S_A^*S_A$, and define a linear operator $\mcT_d:=S_d^*S_d$ on $\Gd$, namely, 
\begin{align}
\mcT_A[\gamma](\aa,b) &:=S_A^*[\Sa[\gamma]](\aa,b) = \int_{\II_A^m \times \TT} \gamma(\aa',b') K( (\aa,b), (\aa',b') ) \dd \aa' \dd b'\\
\mcT_d[\gamma](\aa,b) &:=S_d^*[\Sd[\gamma]](\aa,b) = \frac{C_0}{d} \sum_{i=1}^d \gamma(\aa_i,b_i) K( (\aa,b), (\aa_i,b_i) ),
\end{align}
where, we denote $K((\aa,b),(\aa',b'))=\int \sigma(\aa\cdot\xx-b)\sigma(\aa'\cdot\xx-b')\dd{P(\xx)}$.
We denote by $G_d$ (resp. $G$) the minimizer $\gamma^*[f;P_N,\lambda_d,\beta_d]$ (resp.  $\gamma^*[f;P_N, \mu_A, \beta]$).
Since for any Riemann-integrable function $\gamma\in \Gd$, $\mcT_A[\gamma]$ is bounded and continuous almost everywhere, $G$ is also bounded and continuous almost everywhere because it satisfies $G=\beta^{-1}(G-\mcT_A[G])$.  By direct computation, we have
\begin{align}
&\|G-G_d\|^2_{\Gd}\nonumber\\
&\le \beta_d^{-2}\Vert(\beta_d+\mcT_d)[G]-\Sd^*[f]\Vert^2_{\Gd}\nonumber\\
&= \beta_d^{-2}\|(\mcT_A-\mcT_d)G + (\beta_d-\beta)G + (S_A^*-\Sd^*)[f]\|^2_{\Gd} \nonumber\\
&\le 2\beta_d^{-2}\| \mcT_d[G]-\mcT_A[G]\|^2_{\Gd} + 2\beta_d^{-2}|\beta_d - \beta|^2\cdot \|G\|_{L^2(\lambda_d)}^2\nonumber\\
&=2\beta_d^{-2}|\beta_d - \beta|^2\cdot \|G\|_{L^2(\lambda_d)}^2+ \frac{2C_0}{\beta_d^2d}
\sum_i^d\left|\frac{C_0}{d}\sum_{j=1}^d G(z_j)K(z_i,z_j)-\int G(\aa,b)K(z_i,(\aa,b)) \dd\mu_A(\aa,b)\right|^2\nonumber\\
&\le 2\beta_d^{-2}|\beta_d - \beta|^2\cdot \|G\|_{L^2(\lambda_d)}^2 + \frac{2}{\beta_d^{2}}\left(U^1_d - 2 U^2_d +U^3_d\right).\nonumber
\end{align}
For the first inequality, we use $\left\|(\beta_d+\mcT_d)^{-1}\right\|\le \beta_d^{-1}$.
For the third equality,  we use the inequality $(a+b)^2 \le 2(a^2+b^2)$ and the formula $\Sd^*[f](\aa,b)=\int f(x)\sigma(\aa\cdot\xx-b)\dd{P_N(x)}=S_A^*[f](\aa,b)$ for $(\aa,b)\in\{z_i\}_{i=1}^d$.
For the last term,
we put
\begin{align*}
    U^1_d &:= \frac{C_0^3}{d^3}\sum_{i,j,k=1}^dG(z_i)G(z_j)K(z_k,z_i)K(z_k,z_j),\\
    U^2_d &:= \frac{C_0^2}{d^2}\sum_{i,j=1}^dG(z_i)K(z_j, z_i)\int G(\aa,b)K(z_j,(\aa,b))\dd\mu_A(\aa,b),\\
    U^3_d &:= \frac{C_0}{d}\sum_{i=1}^d\left(\int G(\aa,b)K(z_i,(\aa,b))\dd\mu_A(\aa,b)\right)^2.
\end{align*}
By the Riemann integrability of $G(\cdot)K(\cdot,\cdot)$, for $i=1,2,3$, as $d\rightarrow\infty$, we have
\[U_d^i \longrightarrow \int\int\int G(z)G(w)K(v,z)K(v,w)\dd\mu_A(z)\dd\mu_A(w)\dd\mu(v).\]
Thus we see that $\|G - G_d\|^2_{L^(\lambda_d)} \rightarrow 0$ as $d\rightarrow\infty$.

\end{proof}

\subsection{Theorem \ref{thm: convergence of finite NN}}

Here, we prove the following statement:
\begin{cor}[Theorem \ref{thm: convergence of finite NN}]
Let $\{ \gamma^*_{N,d} \}_{d=1}^\infty$ be a sequence of ERMers. Impose Assumption \ref{asm: support} on the hidden parameter distributions $\lambda_d$ of $\gamma^*_{N,d}$, namely, $\lambda_d$ weakly converges to $\mu_A$. Assume $\beta_d\rightarrow \beta$ as $d\rightarrow\infty$.
Then, for any bounded continuous function $h$ on $\II_A^m\times\TT$, we have
\begin{align}
\lim_{N\rightarrow\infty}\lim_{d\rightarrow\infty}\int h \gamma^*_{N,\lambda_d}\dd{\lambda_d} = \int h \gamma^* \dd\mu_A.    
\end{align}
Here the limit with respect to $N$ is in the sense of $P$-a.s. convergence.
\end{cor}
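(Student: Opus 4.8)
The plan is to prove the theorem by chaining together the two convergence results already available in the excerpt: Lemma~\ref{lem: convergence with over parametrization} (which handles the inner limit $d\to\infty$) and Lemma~\ref{lem: limit of minimizers} (which handles the outer limit $N\to\infty$). The target quantity $\int h\,\gamma^*_{N,\lambda_d}\,\dd\lambda_d$ is a bilinear pairing, so the main structural step is to split it into two pieces, one that sees the $d$-limit and one that sees the $N$-limit, with a hinge term $\gamma^*[f;P_N,\mu_A,\beta]$ in between. Concretely, I would first reduce to the case where $h$ is bounded and continuous (as assumed) and observe that $\int h\,\gamma\,\dd\lambda$ is continuous in $\gamma$ on $L^2(\lambda)$ by Cauchy--Schwarz, since $\|h\|_{L^2(\lambda_d)}$ is uniformly bounded (as $\lambda_d(\II_A^m\times\TT)=C_0$ for all $d$ and $h$ is bounded).

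First, for the inner limit, fix $N$ and consider $\left|\int h\,\gamma^*_{N,\lambda_d}\,\dd\lambda_d - \int h\,\gamma^*[f;P_N,\mu_A,\beta]\,\dd\mu_A\right|$. I would split this as
\begin{align}
\left|\int h\big(\gamma^*_{N,\lambda_d}-\gamma^*[f;P_N,\mu_A,\beta]\big)\dd\lambda_d\right| + \left|\int h\,\gamma^*[f;P_N,\mu_A,\beta]\,\dd\lambda_d - \int h\,\gamma^*[f;P_N,\mu_A,\beta]\,\dd\mu_A\right|. \notag
\end{align}
The first term is bounded by $\|h\|_{L^2(\lambda_d)}\,\|\gamma^*_{N,\lambda_d}-\gamma^*[f;P_N,\mu_A,\beta]\|_{L^2(\lambda_d)}$, which goes to $0$ as $d\to\infty$ by Lemma~\ref{lem: convergence with over parametrization} (using $\beta_d\to\beta$ and Assumption~\ref{asm: support}), the first factor being uniformly bounded. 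The second term goes to $0$ by the weak convergence $\lambda_d\wto\mu_A$ of Assumption~\ref{asm: support}, applied to the bounded continuous test function $h\cdot\gamma^*[f;P_N,\mu_A,\beta]$ — here I need that $\gamma^*[f;P_N,\mu_A,\beta]$ is itself bounded and continuous (a.e.), which is exactly the fact used inside the proof of Lemma~\ref{applem: over parametrized convergence}, namely that it satisfies $G=\beta^{-1}(S_A^*[f]-\mcT_A[G])$ and hence inherits boundedness and a.e.\ continuity from $\sigma$ being Riemann integrable. So $\lim_{d\to\infty}\int h\,\gamma^*_{N,\lambda_d}\,\dd\lambda_d = \int h\,\gamma^*[f;P_N,\mu_A,\beta]\,\dd\mu_A$, $P$-a.s.

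Second, for the outer limit, I need $\lim_{N\to\infty}\int h\,\gamma^*[f;P_N,\mu_A,\beta]\,\dd\mu_A = \int h\,\gamma^*\,\dd\mu_A$ in the $P$-a.s.\ sense, where $\gamma^*=\gamma^*[f;P,\mu_A,\beta]$. This is immediate from Lemma~\ref{lem: limit of minimizers} with $\lambda=\mu_A$: the difference is bounded by $\|h\|_{L^2(\mu_A)}\,\|\gamma^*[f;P_N,\mu_A,\beta]-\gamma^*[f;P,\mu_A,\beta]\|_{L^2(\mu_A)}\to 0$ $P$-a.s. Combining the two displays gives the iterated limit exactly as stated. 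The main obstacle — and the only place requiring care — is the measure-theoretic bookkeeping of the $P$-a.s.\ statements: both Lemma~\ref{lem: limit of minimizers} and the conclusion of the inner step hold on a full-measure set of sample sequences $(\xx_i)_i$, and I must make sure the inner limit holds on a single such set uniformly enough that the outer limit can then be taken; but since the inner limit is taken first for each fixed $N$ and each realization, and then $N\to\infty$ along the a.s.\ set from Lemma~\ref{lem: limit of minimizers}, the two exceptional null sets simply union and cause no difficulty. Everything else is Cauchy--Schwarz and the uniform bound $\lambda_d(\II_A^m\times\TT)=C_0$.
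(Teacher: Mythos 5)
Your proposal is correct and follows essentially the same route as the paper: introduce the hinge term $\gamma^*_{N,\infty}:=\gamma^*[f;P_N,\mu_A,\beta]$, control $\int h(\gamma^*_{N,d}-\gamma^*_{N,\infty})\,\dd\lambda_d$ by Cauchy--Schwarz together with Lemma~\ref{lem: convergence with over parametrization}, pass $\int h\gamma^*_{N,\infty}\,\dd\lambda_d\to\int h\gamma^*_{N,\infty}\,\dd\mu_A$ via the weak convergence of $\lambda_d$ (using that $h\gamma^*_{N,\infty}$ is bounded and a.e.\ continuous, i.e.\ Riemann integrable), and then take $N\to\infty$ with Lemma~\ref{lem: limit of minimizers}. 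The decomposition, the lemmas invoked, and the handling of the a.s.\ qualifiers all match the paper's argument.
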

\begin{proof}
Put $\gamma^*_{N,\infty}:=\gamma^*[f;P_N,\mu_A,\beta]$.
Then, by Lemma \ref{applem: over parametrized convergence}, we have 
\[\left| \int h\gamma^*_{N,d} - h\gamma^*_{N,\infty} \dd\lambda_d \right| \le \|h\|_{L^\infty(\mu_A)}\cdot \|\gamma^*_{N,d}-\gamma^*_{N,\infty} \|_{L^2(\lambda_d)} \rightarrow 0.\]
Since $h\gamma^*_{N,\infty}$ is Riemann integrable, we see that $\lim_{d\rightarrow\infty}\int h\gamma^*_{N,\infty} \dd\lambda_d = \int h\gamma^*_{N,\infty} \dd\mu_A$.
Thus, we have 
\[\lim_{d\rightarrow\infty} \int h\gamma^*_{N,d} \dd\lambda_d = \int h\gamma^*_{N,\infty} \dd\mu_A.\]
On the other hand, by Lemma \ref{lem: limit of minimizers}, we have $\gamma^*_{N,\infty} \rightarrow \gamma^*$ $P$-a.e. as $N\rightarrow\infty$.
By the same argument employing the Schwartz inequality as above, we have
\[\lim_{N\rightarrow \infty} \int h\gamma^*_{N,\infty} \dd\mu_A = \int h\gamma^* \dd\mu_A~~\text{$P$-a.e.}.\] 
\end{proof}

\newpage
\section{FURTHER RESULTS ON NUMERICAL SIMULATION} \label{app:experiments}

\paragraph{Experiment~1.}
In order to see the differences among activation functions, we conduct the experiment on a common dataset: $y_i = \sin 2 \pi x_i$  with three activation functions: Gaussian $\sigma(t) = \exp( -|k t|^2 )$ with scale $k=6$, Tanh $\sigma(t) = \tanh(kt)$ with scale $k=6$, and ReLU $\sigma(t) = \max\{ 0, t\}$. In order to cover a characteristic part in the period $\TT = [-1/2,1/2)$, we introduced the scale parameter $k$ for Gaussian and Tanh. %
As a result, all the three $\sigma$s have period $T=1$. If an activation function is periodic with period $T$, then the spectrum is periodic in $b$ with period $T$ because
\begin{align}
    R[f;\sigma](a,b) = R[f;\sigma( \cdot - T )](a,b) = R[f;\sigma](a,b+T).
\end{align}
We can verify that our theory accepts a variety of activation functions.
For all the three settings, we trained $s=1000$ networks,
each single network has $d=100$ hidden units, and the weight decay rate and learning rate were set to $\beta = 0.001$ and $\eta = 0.01$ respectively.

\begin{figure}[H]
        \centering
        \begin{subfigure}[c]{0.3\textwidth}
                \includegraphics[width=\linewidth]{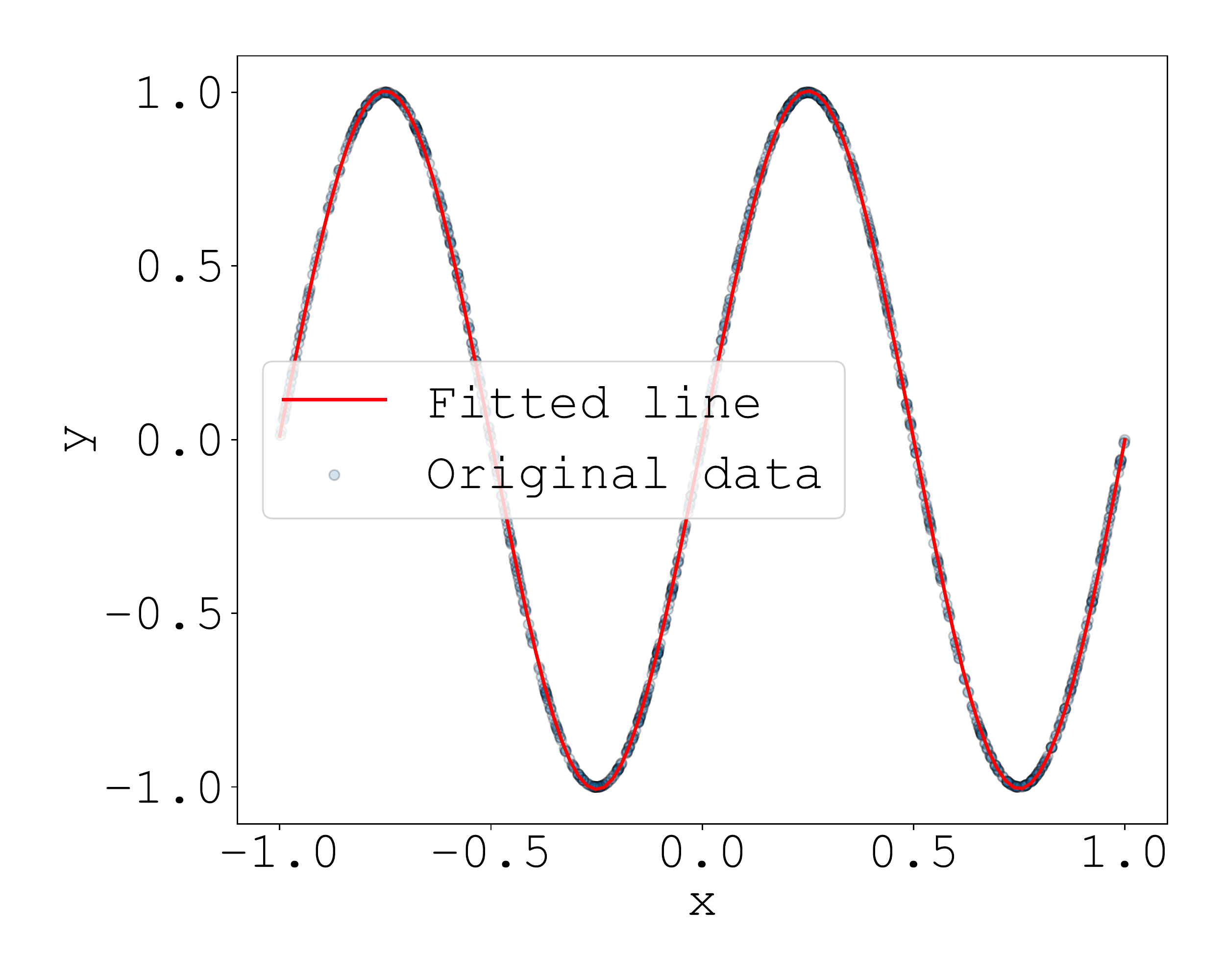}
        \end{subfigure}%
        \begin{subfigure}[c]{0.35\textwidth}
                \includegraphics[width=\linewidth]{exp/sin02pt_n1000_pgauss_s60_hp050_h100_sgd_wd001_nn0500.png}
        \end{subfigure}%
        \begin{subfigure}[c]{0.35\textwidth}
                \includegraphics[width=\linewidth]{exp/sin02pt_n1000_pgauss_s60_hp050_2d.png}
        \end{subfigure}\\
        \begin{subfigure}[c]{0.3\textwidth}
                \includegraphics[width=\linewidth]{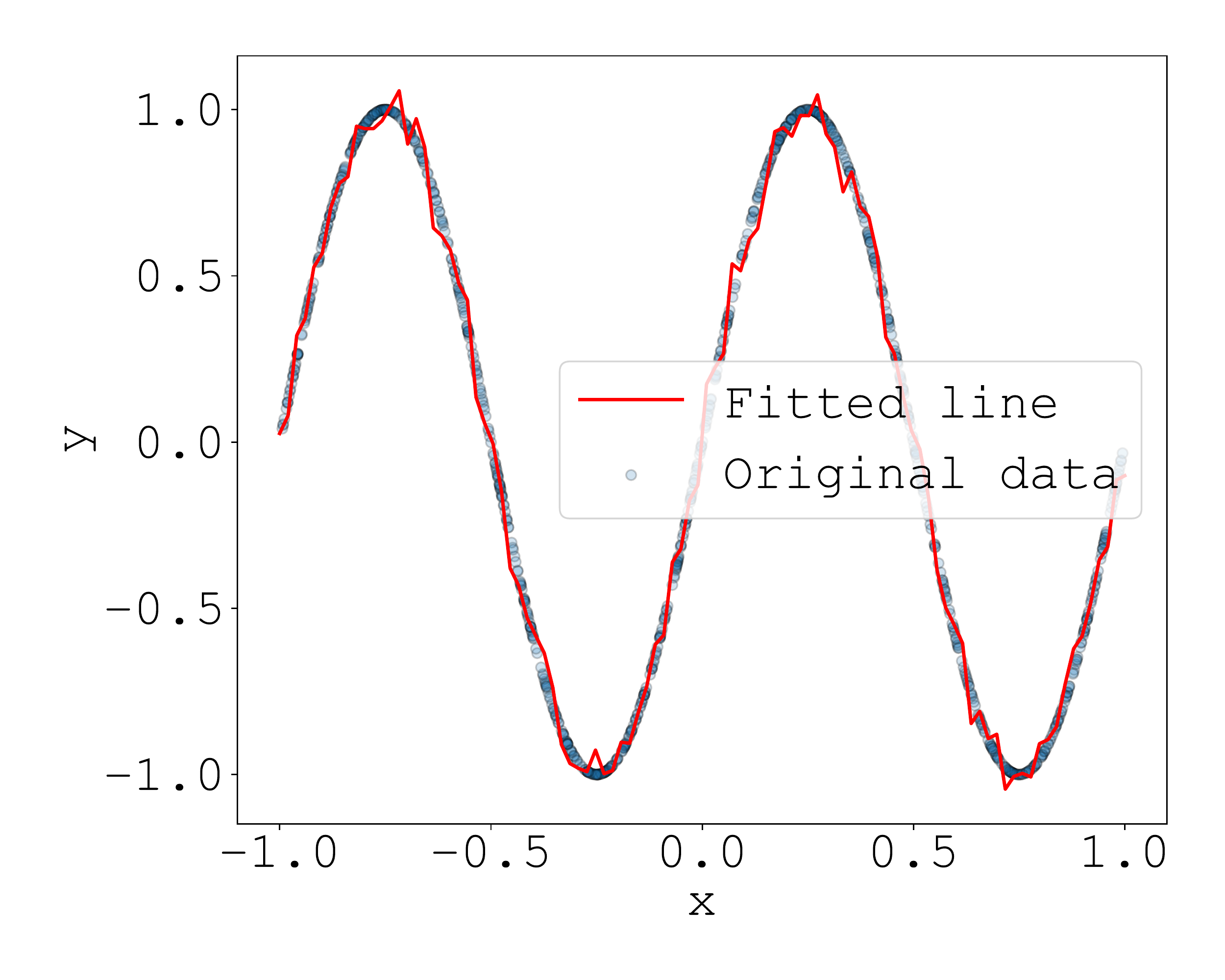}
        \end{subfigure}%
        \begin{subfigure}[c]{0.35\textwidth}
                \includegraphics[width=\linewidth]{exp/sin02pt_n1000_ptanh_s60_hp050_h100_sgd_wd001_nn0500.png}
        \end{subfigure}%
        \begin{subfigure}[c]{0.35\textwidth}
                \includegraphics[width=\linewidth]{exp/sin02pt_n1000_ptanh_s60_hp050_2d.png}
        \end{subfigure}\\
        \begin{subfigure}[c]{0.3\textwidth}
                \includegraphics[width=\linewidth]{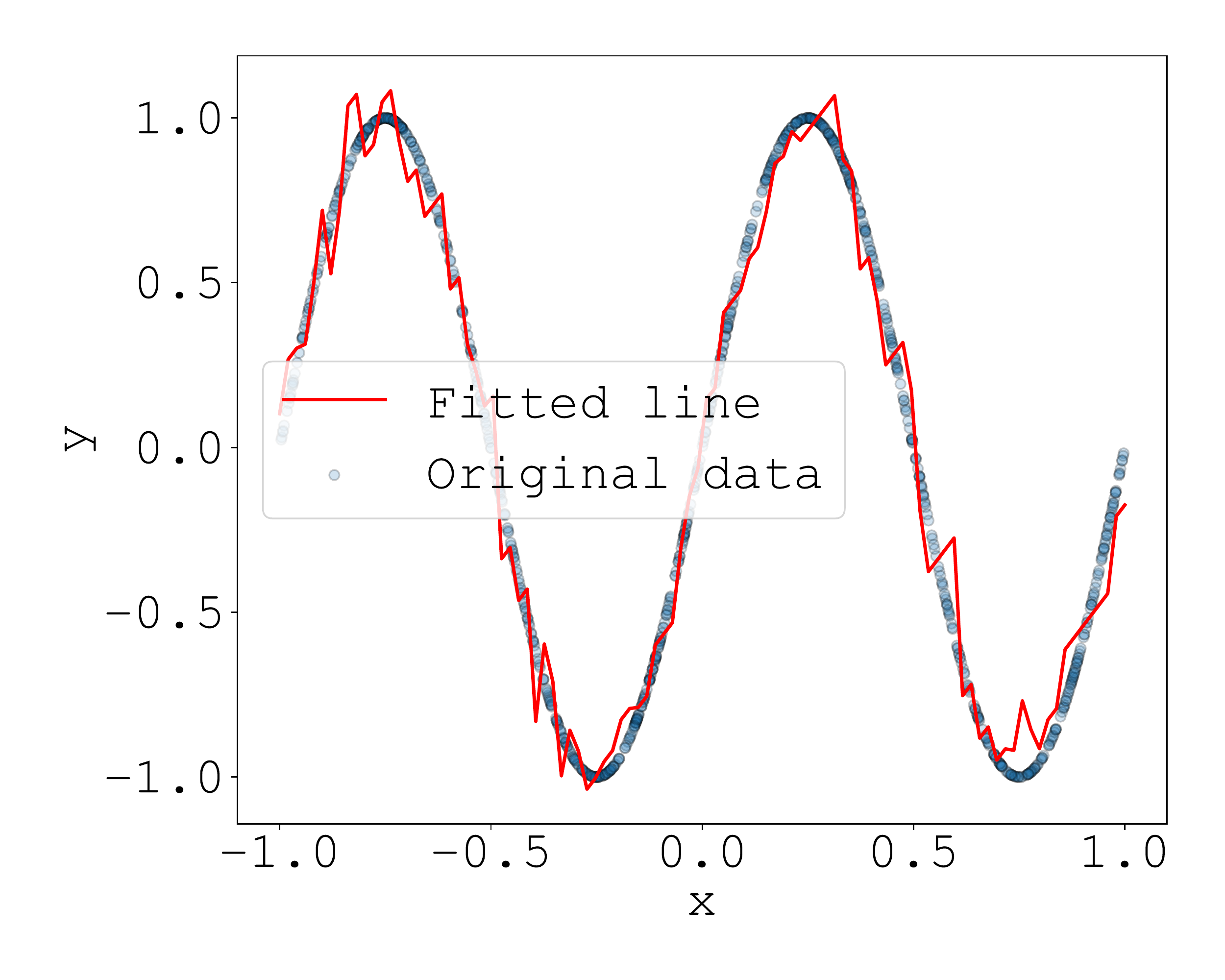}
        \end{subfigure}%
        \begin{subfigure}[c]{0.35\textwidth}
                \includegraphics[width=\linewidth]{exp/sin02pt_n1000_prelu_hp050_h100_sgd_wd001_nn1000.png}
        \end{subfigure}%
        \begin{subfigure}[c]{0.35\textwidth}
                \includegraphics[width=\linewidth]{exp/sin02pt_n1000_prelu_hp050_2d.png}
        \end{subfigure}
        \caption{$f(x) = \sin 2 \pi x$, $\sigma(z) = \exp (-(kz)^2/2), \tanh (kz), \relu (z)$ (from top to bottom)}\label{fig:sin2p}
\end{figure}

\newpage
\paragraph{Experiment~2.}
In order to focus on a structure as a ridgelet spectrum, we prepared translated datasets $y_i = \exp (-|x_i-\mu|^2/2)$ with $\mu = -0.5, 0, 0.5$. We employ the periodic ReLU on $\TT = [-1/2,1/2)$ for the activation function. According to the ridgelet transform, it satisfies the \emph{translation (time-shifting) property}:
\begin{align}
    R[f( \cdot - y )](a,b) = R[f](a,b + a \cdot y).
\end{align}
We can clearly observe this relation in the scatter plots.
For all the three settings, we trained $s=1000$ networks,
each single network has $d=100$ hidden units, and the weight decay rate and learning rate were set to $\beta = 0.001$ and $\eta = 0.01$ respectively.

\begin{figure}[H]
        \centering
        \begin{subfigure}[c]{0.3\textwidth}
                \includegraphics[width=\linewidth]{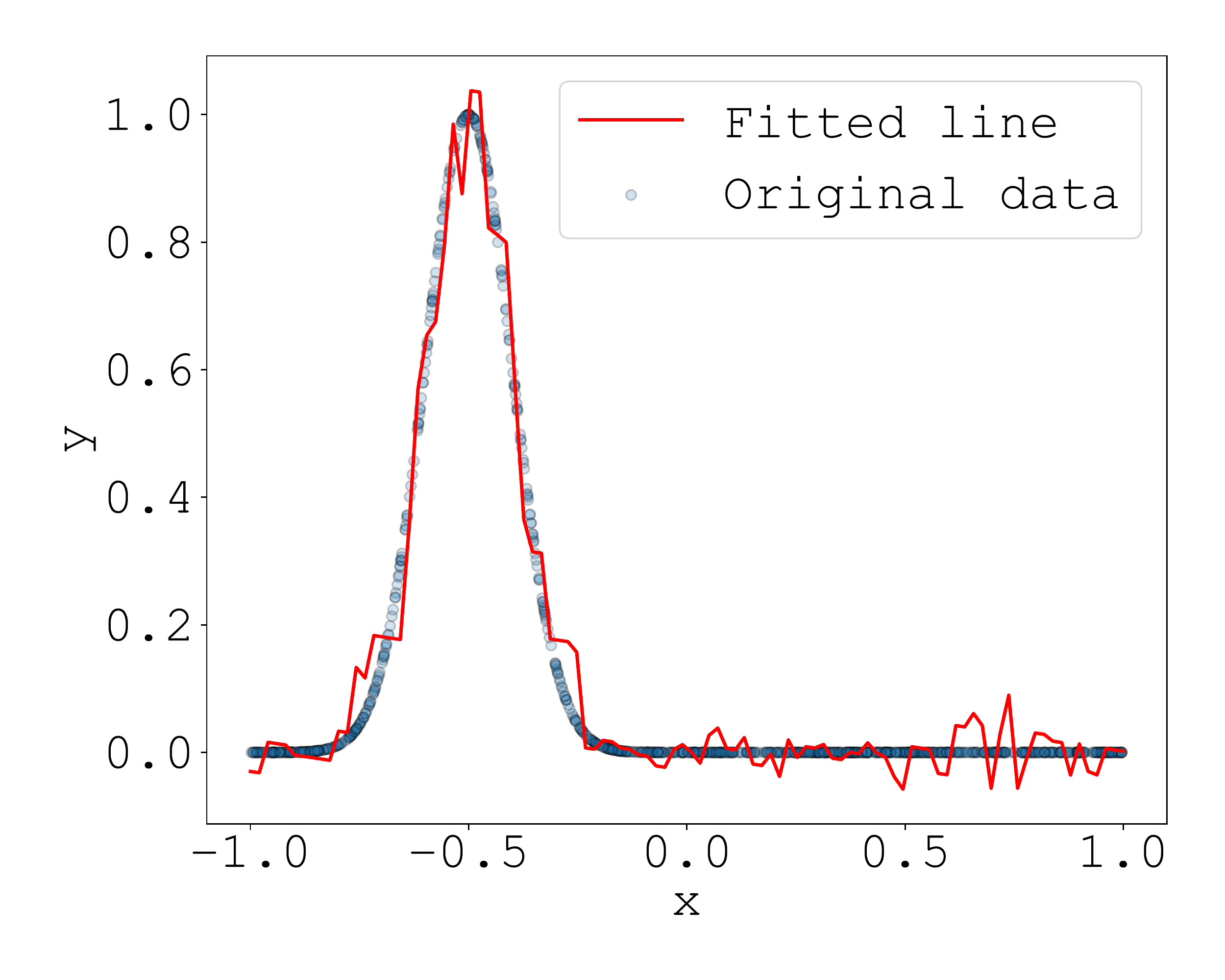}
        \end{subfigure}%
        \begin{subfigure}[c]{0.35\textwidth}
                \includegraphics[width=\linewidth]{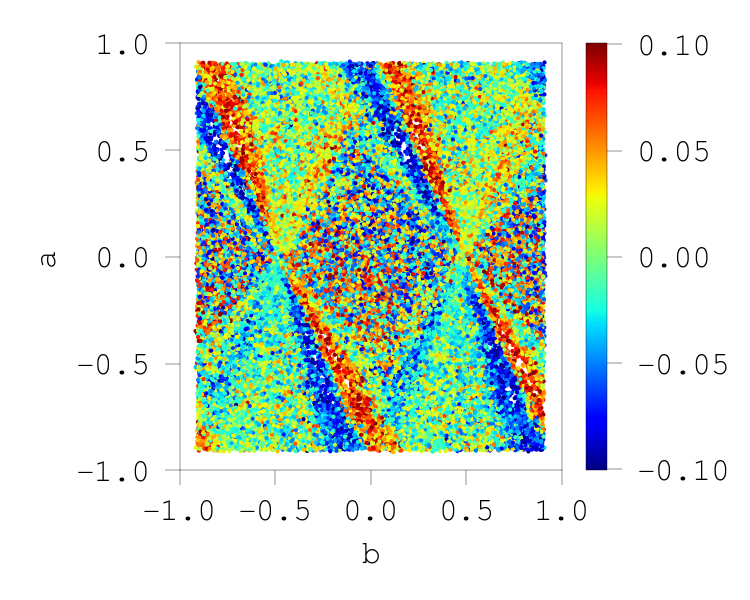}
        \end{subfigure}%
        \begin{subfigure}[c]{0.35\textwidth}
                \includegraphics[width=\linewidth]{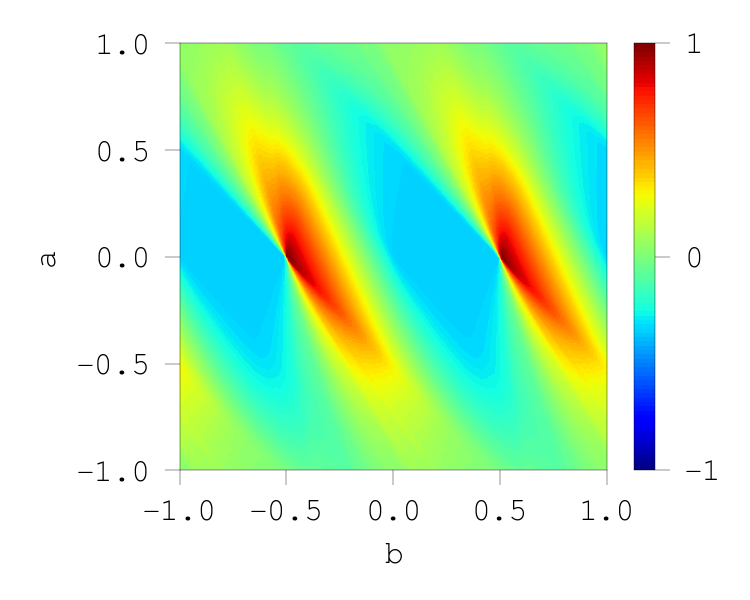}%
        \end{subfigure}\\
        \begin{subfigure}[c]{0.3\textwidth}
                \includegraphics[width=\linewidth]{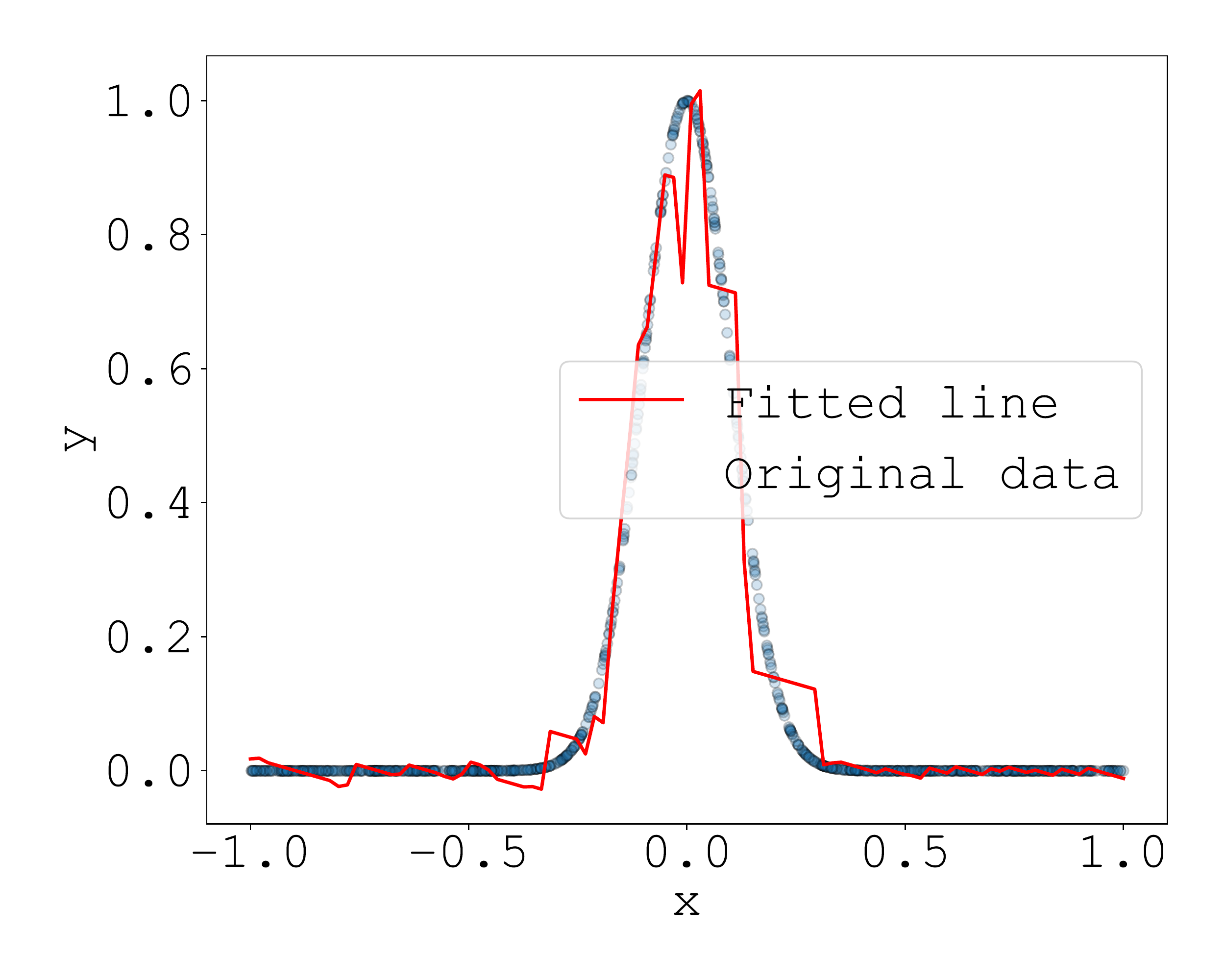}
        \end{subfigure}%
        \begin{subfigure}[c]{0.35\textwidth}
                \includegraphics[width=\linewidth]{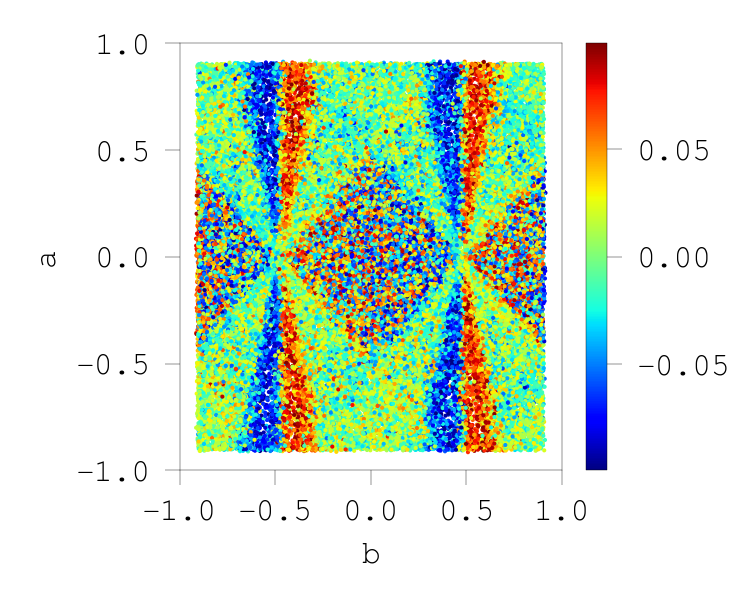}
        \end{subfigure}%
        \begin{subfigure}[c]{0.35\textwidth}
                \includegraphics[width=\linewidth]{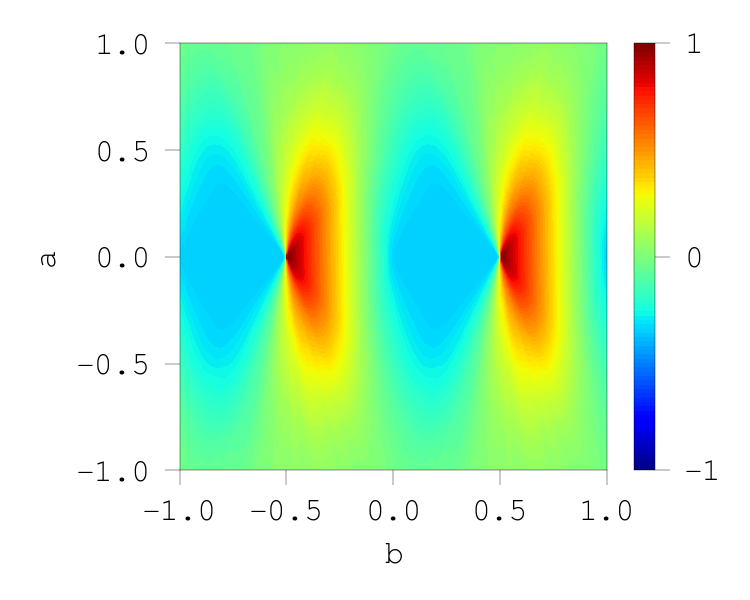}
        \end{subfigure}\\
        \begin{subfigure}[c]{0.3\textwidth}
                \includegraphics[width=\linewidth]{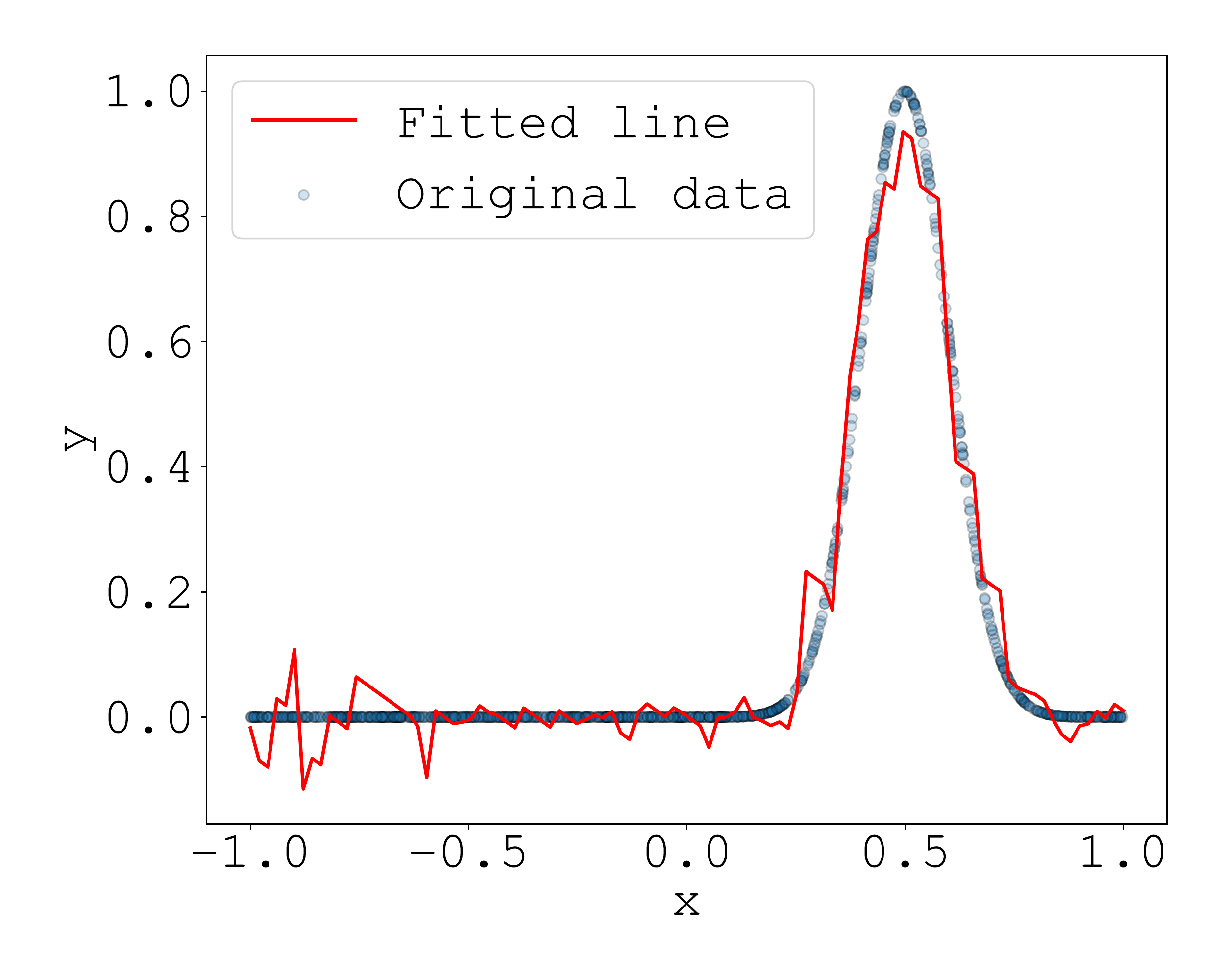}
        \end{subfigure}%
        \begin{subfigure}[c]{0.35\textwidth}
                \includegraphics[width=\linewidth]{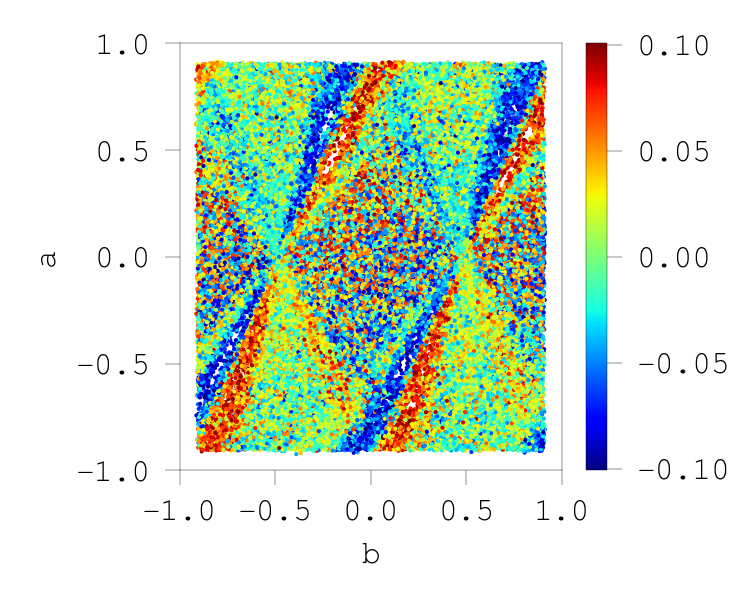}
        \end{subfigure}%
        \begin{subfigure}[c]{0.35\textwidth}
                \includegraphics[width=\linewidth]{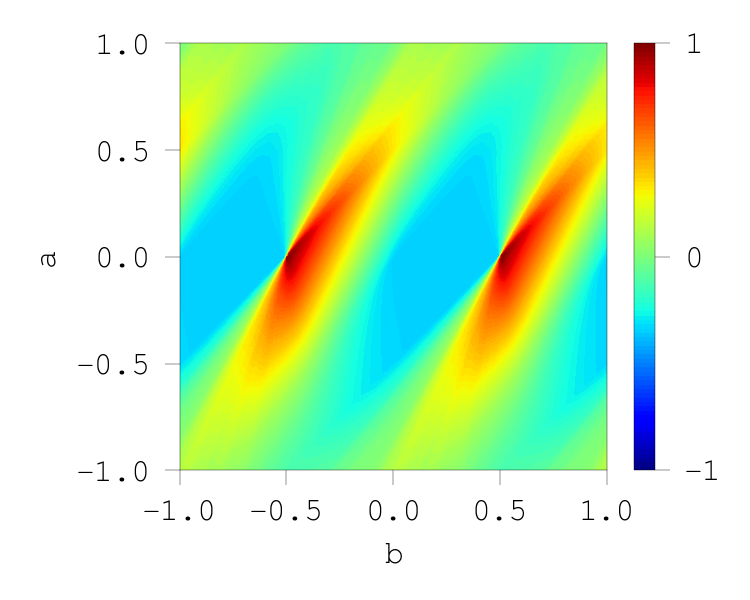}
        \end{subfigure}
        \caption{$f(x) = \exp (-|x - \mu|^2/2), (\mu=-0.5, 0.0, +0.5)$ (from top to bottom), $\sigma(z) = \relu (z)$}\label{fig:rbf}
\end{figure}

\newpage
\paragraph{Experiment~3.}
In order to see the effect of the discontinuity, we conduct the experiment on the square wave $y_i = \sign \circ \sin 2 \pi x_i$ with ReLU on $\TT = [-1/2,1/2)$.
According to the ridgelet transform, if the function has a point singularity, then the spectrum has a line singularity:
\begin{align}
    R[ \delta_{x_0} ](a,b) = \int_{\RR^m} \delta_{x_0}(x) \overline{\rho( a \cdot x - b )} \dd x = \overline{\rho( a \cdot x_0 - b )}.
\end{align}
We can clearly observe a few lines in the scatter plot.
We trained $s=1000$ networks, each single network has $d=100$ hidden units,  and the weight decay rate and learning rate were set to $\beta = 0.001$ and $\eta = 0.01$ respectively.

\begin{figure}[H]
        \centering
        \begin{subfigure}[c]{0.3\textwidth}
                \includegraphics[width=\linewidth]{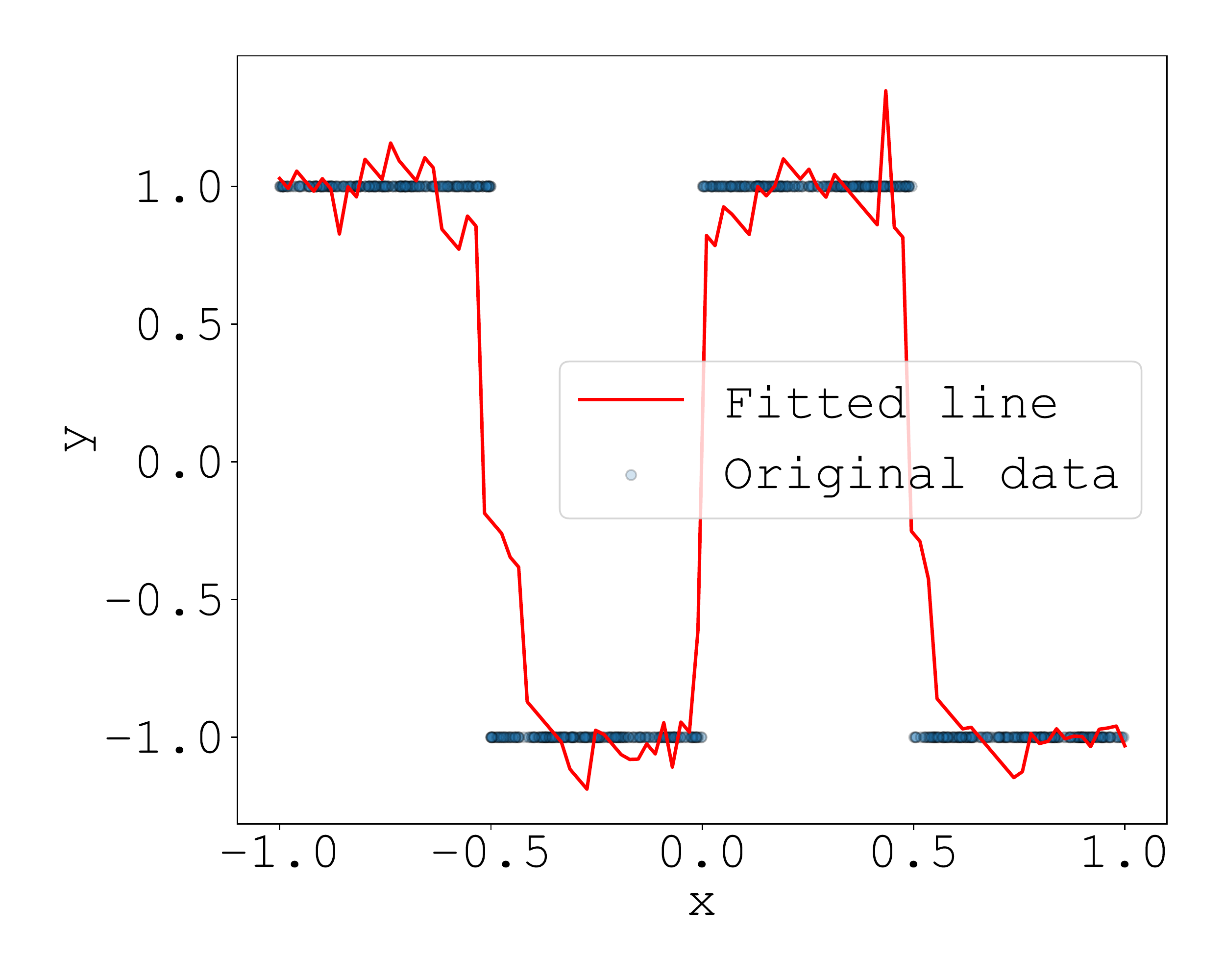}
        \end{subfigure}%
        \begin{subfigure}[c]{0.35\textwidth}
                \includegraphics[width=\linewidth]{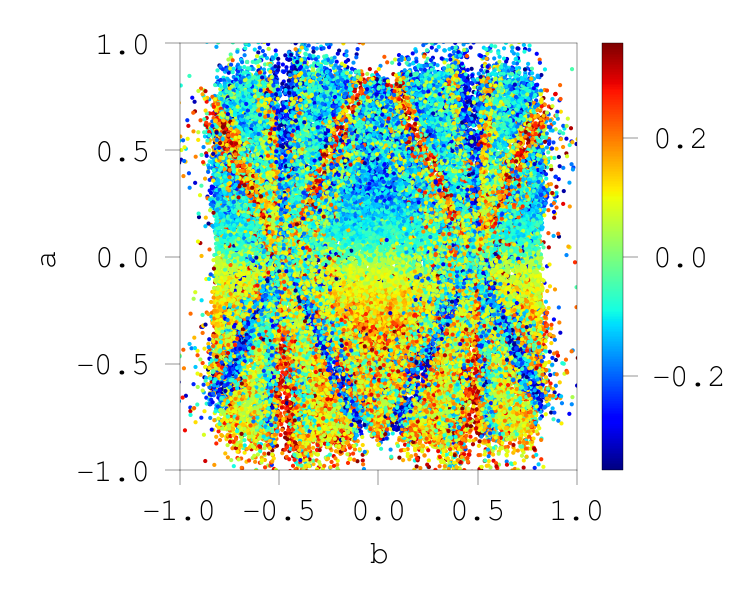}
        \end{subfigure}%
        \begin{subfigure}[c]{0.35\textwidth}
                \includegraphics[width=\linewidth]{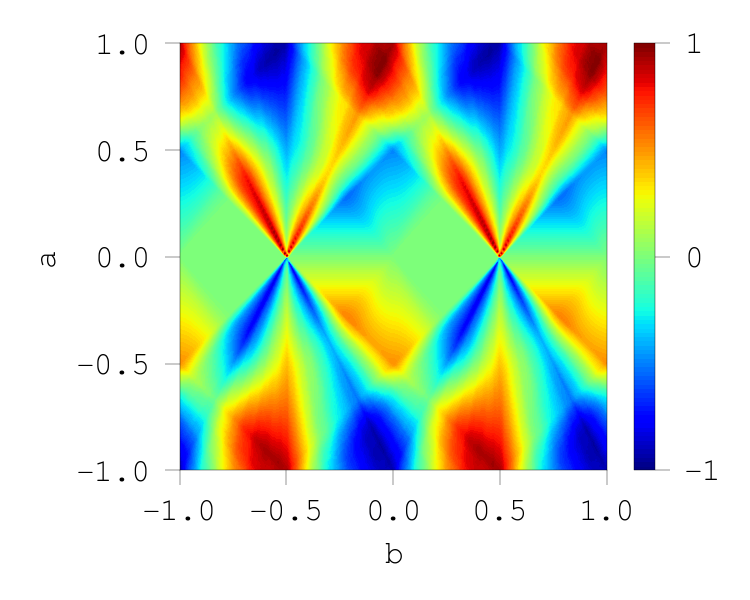}
        \end{subfigure}
        \caption{$f(x) = \sign \circ \sin (2 \pi x)$, $\sigma(z) = \relu (z)$, $\beta = 0.01$}\label{fig:sq prelu}
\end{figure}

\paragraph{Experiment~4.}
In order to see the dependence in the high-frequency, we conduct the experiment on topologist's sine curve: $y_i = \sin (2 \pi/ x_i)$, which contains an infinitely wide range of frequencies, with ReLU on $\TT = [-1/2,1/2)$. We used $n=10,000$ datapoints and $d=100$ hidden units.
As we have seen in Experiments 2 and 3, any local changes in the real domain causes a line singularity in the spectrum.
We can see dense lines in the scatter plot.
We trained $s=1000$ networks, each single network has $d=100$ hidden units, and the weight decay rate and learning rate were set to $\beta = 0.001$ and $\eta = 0.01$ respectively.

\begin{figure}[H]
        \centering
        \begin{subfigure}[c]{0.3\textwidth}
                \includegraphics[width=\linewidth]{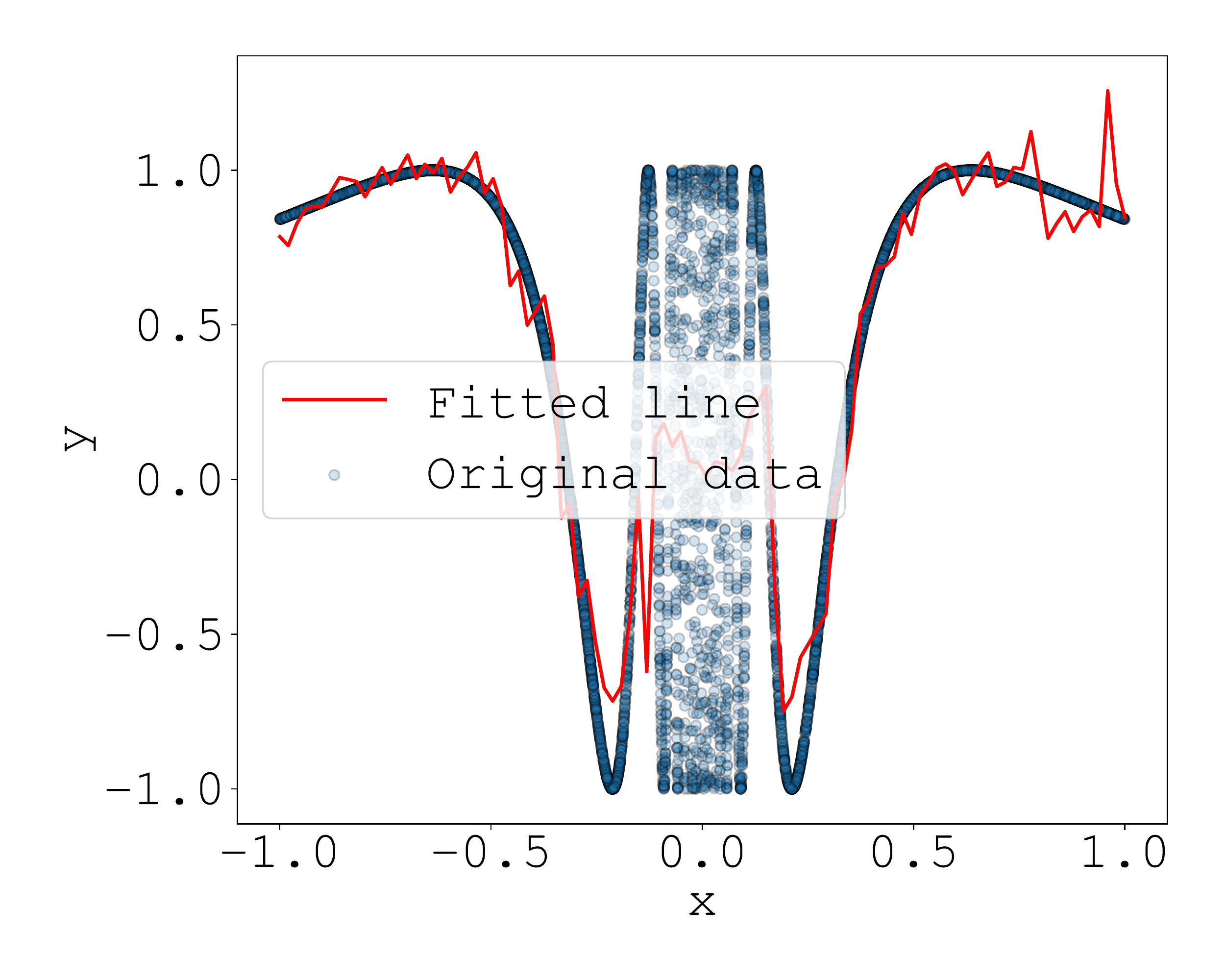}
        \end{subfigure}%
        \begin{subfigure}[c]{0.35\textwidth}
                \includegraphics[width=\linewidth]{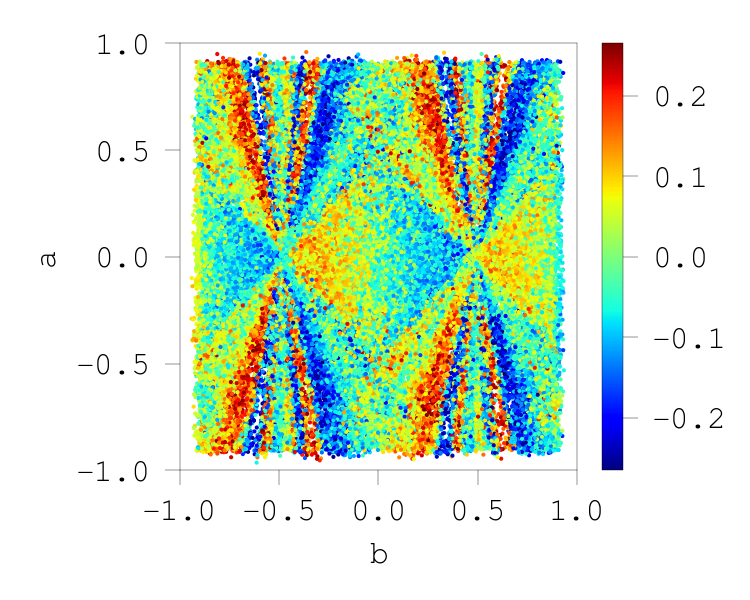}
        \end{subfigure}%
        \begin{subfigure}[c]{0.35\textwidth}
                \includegraphics[width=\linewidth]{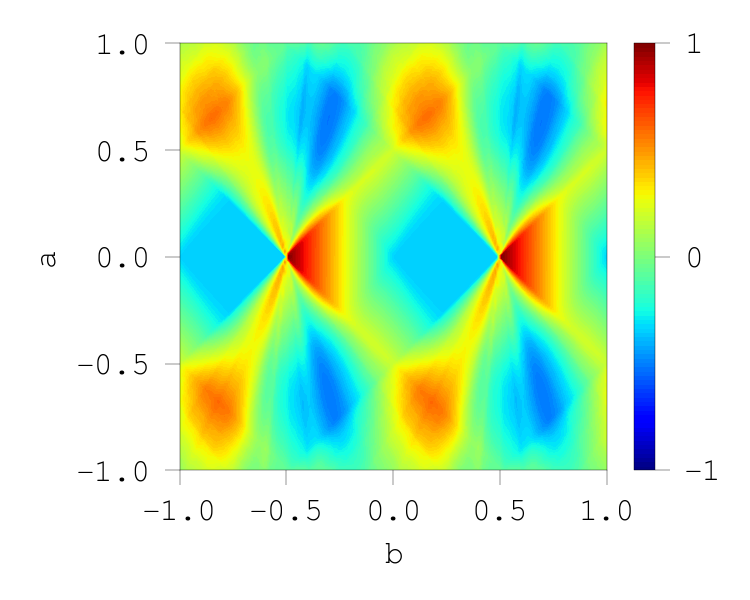}%
        \end{subfigure}
        \caption{$f(x) = \sin (2 \pi / x)$, $\sigma(z) = \relu (z)$, $\beta = 0.01$}\label{fig:topsin prelu}
\end{figure}

\end{document}